\newcommand{\bR}{{\boldsymbol R}}
\newcommand{\bX}{{\boldsymbol X}}
\newcommand{\bY}{{\boldsymbol Y}}
\newcommand{\by}{{\boldsymbol y}}
\newcommand{\bx}{{\boldsymbol x}} 
\newcommand{\bz}{{\boldsymbol z}}
\newcommand{\bZ}{{\boldsymbol Z}}
\newcommand{\be}{{\boldsymbol e}}
\newcommand{\bw}{{\boldsymbol w}}
\newcommand{\bv}{{\boldsymbol v}}
\newcommand{\bt}{{\boldsymbol t}}
\newcommand{\bP}{{\boldsymbol P}}
\newcommand{\mK}{\mathcal{K}}
\newcommand{\bV}{{\boldsymbol V}}
\newcommand{\mG}{\mathcal{G}}
\newcommand{\mH}{\mathcal{H}}
\newcommand{\mE}{\mathbb{E}}
\newcommand{\mX}{\mathcal{X}}
\newcommand{\mM}{\mathcal{M}}
\newcommand{\bu}{{\boldsymbol u}}
\newcommand{\bbeta}{{\boldsymbol \beta}}
\newcommand{\bgamma}{{\boldsymbol \gamma}}
\newcommand{\btheta}{{\boldsymbol \theta}}
\newcommand{\bmu}{{\boldsymbol \mu}}
\newcommand{\bSigma}{{\boldsymbol \Sigma}}
\newtheorem{theorem}{Theorem}[section]
\newtheorem{lemma}{Lemma}[section]
\newtheorem{definition}{Definition}[section]
\newtheorem{remark}{Remark}
\newtheorem{assumption}{Assumption}
\newenvironment{proof}{\trivlist\item[\hskip \labelsep{\sc Proof:}]}
 {\unskip\nobreak\ \lower.3ex\hbox{$\Box$}\endtrivlist}
\begin{document}
 

\title{Extended Fiducial Inference: Toward an Automated Process of Statistical Inference}
 
\author{Faming Liang\thanks{Correspondence author: Faming Liang, email: fmliang@purdue.edu, 
 Department of Statistics, Purdue University, West Lafayette, IN 47907, USA; 
 $^\dag$ Department of Population Medicine, Harvard Medical School/ Harvard Pilgrim Health Care Institute, Boston, MA 02215, USA; 
$^\ddag$ Department of Biostatistics, Epidemiology, and Informatics, University of Pennsylvania, Philadelphia, PA 19104, USA. 
 }, \  Sehwan Kim$^{\dag}$, and Yan Sun$^{\ddag}$}



\date{} 

 
\maketitle

\begin{abstract}

While fiducial inference was widely considered a big blunder by R.A. Fisher, the goal he initially set --`inferring the uncertainty of model parameters on the basis of observations' -- has been continually pursued by many statisticians. To this end, we develop a new statistical inference method called extended Fiducial inference (EFI). The new method 
achieves the goal of fiducial inference by leveraging advanced statistical computing techniques 
while remaining scalable for big data. EFI involves jointly imputing random errors realized in observations using stochastic gradient Markov chain Monte Carlo 
 and estimating the inverse function using a sparse deep neural network (DNN). The consistency of the sparse DNN estimator ensures that the uncertainty embedded in observations is properly propagated to model parameters through the estimated inverse function, thereby validating downstream statistical inference.
Compared to frequentist and Bayesian methods, EFI offers significant advantages in parameter estimation and hypothesis testing. Specifically, EFI provides higher fidelity in parameter estimation, especially when outliers are present in the observations; and eliminates the need for theoretical reference distributions in hypothesis testing, thereby automating the statistical inference process. EFI also provides an innovative framework for semi-supervised learning.

\vspace{2mm}
  
{\bf Keywords:}  Complex Hypothesis Test, Markov chain Monte Carlo, Semi-Supervised Learning, Sparse deep learning, Uncertainty Quantification
\end{abstract}

 



\section{Introduction}

 Statistical inference is a fundamental task in modern data science, which studies how to propagate the uncertainty embedded in data to model parameters. 
 During the past century, frequentist and Bayesian methods have evolved 
 as two major frameworks of statistical inference. 
  However, due to some intrinsic issues (see Section \ref{conceptsection}), these methods may lack one or more features --- such as fidelity, automaticity, and scalability ---
 necessary for performing statistical inference on complex models in modern data science. Specifically, the frequentist methods often estimate
 model parameters using the maximum  likelihood approach and test hypotheses by comparing a test statistic with a  known theoretical reference distribution. 
 It is well-known that the maximum likelihood estimator (MLE) can be significantly 
 influenced by outliers, 
 which reduces the fidelity of parameter estimates. For hypothesis testing, the required theoretical reference distribution is test statistic-dependent, making statistical inference difficult to automate. Although this issue can be partially mitigated by asymptotic normality, the sample size required to achieve asymptotic normality can be very large especially in high-dimensional scenarios.  
For Bayesian methods, their dependence on prior distributions has been a subject of criticism throughout the history of Bayesian statistics, often raising concerns about their fidelity.

As a possible way to overcome the drawbacks of frequentist and Bayesian methods, the fiducial
method has been proposed by R.A. Fisher in a series of papers starting from 1930s 
(see \cite{Zabell1992Fisher} for a review), which quantifies uncertainty of model parameters by the so-called fiducial distribution. 
Fisher originally introduced this method, motivated by the observation that pivotal quantities permit uncertainty quantification for an unknown parameter  
 in the same way as the frequentist method. However, he encountered difficulties in extending this pivotal quantity-based method to models with multiple parameters. 
It is worth noting that for some models, the fiducial distribution is the same as the posterior distribution derived with Jeffreys' prior, but Fisher argued that the logic behind the Bayesian method is unacceptable because the use of prior is unjustifiable \citep{hannig2009gfi}. This argument also distinguishes the fiducial method from objective Bayesian methods, even though non-informative priors are used in the latter. 

Fiducial inference was generally regarded as a big blunder by Fisher. However, the goal he initially set, {\it making inference about unknown parameters on the basis of observations} \citep{Fisher1956Book}, has been continually pursued by many statisticians.  Building on early works in sparse deep learning 
\citep{Liang2018BNN, SunSLiang2021,SunXLiang2021NeurIPS} and adaptive stochastic gradient Markov chain Monte Carlo (MCMC) \cite{deng2019adaptive,LiangSLiang2022, DongZLiang2022}, 
 this paper develops a new statistical inference framework called the {\it extended fiducial inference} (EFI), 
 which achieves the initial goal of fiducial inference while possessing
 necessary features like {\it fidelity}, {\it automaticity}, and {\it scalability} that are essential for statistical inference in modern data science.  
 
 Our contributions in this work are in three folds: 
 \begin{itemize}
    \item {\it Development of the EFI framework:} We develop a scalable and effective method for conducting fiducial inference. Our method involves jointly imputing the random errors contained in the data and estimating the inverse function for the model parameters. It ensures that the uncertainty embedded in the data is properly propagated to the model parameters through the estimated inverse function, thereby validating downstream statistical inference. Compared to frequentist and Bayesian methods, EFI provides higher-fidelity inference, especially in the presence of outliers.

\item {\it Innovative statistical framework for semi-supervised learning:} EFI provides an innovative   framework of statistical inference for missing data problems, especially in scenarios where missing values are present in response data, as encountered in semi-supervised learning problems. This innovation 
can have profound implications for modern data science, particularly in biomedical research where obtaining labeled data can be costly.   
     
\item {\it  Automaticity of statistical inference:}  EFI enables automatic statistical inference for complex  models, at least conceptually. It can be as flexible 
as frequentist methods in parameter estimation. However, unlike frequentist methods,  
it eliminates the requirement for theoretical reference distributions  (including asymptotic normality as a special case) in hypothesis testing. 
Compared to Bayesian methods, EFI eliminates the requirement for prior distributions, which can vary depending on the problem or analyst's choice, thus enhancing the fidelity of statistical inference.
 \end{itemize}

 In summary, with the aid of advanced statistical computing techniques, EFI holds the potential to significantly advance modern data science. Specifically, it provides higher-fidelity inference, introduces an innovative statistical framework for semi-supervised learning, and automates statistical inference for 
 complex models.

   
The remaining part of the paper is organized as follows. Section 2 
distinguishes the concepts of frequentist, Bayesian and EFI from the perspective of structural inference  \citep{Fraser1966StructuralPA,Fraser1968Book}. 
Section 3 provides a theoretical framework for EFI. 
Section 4 describes an effective algorithm for performing EFI 
and studies its theoretical properties. 
Section 5 presents some numerical examples validating EFI as a statistical inference method. 
Section 6 presents applications of EFI on semi-supervised learning.
Section 7 presents applications of EFI for complex hypothesis tests. 
Section 8 concludes the paper with a brief discussion.

\section{Frequentist, Bayesian, and Extended Fiducial Inference} \label{conceptsection}

This section elaborates the conceptual difference between frequentist, Bayesian, and EFI methods from the perspective of structural inference \citep{Fraser1966StructuralPA,Fraser1968Book}. Consider a regression model: 
\begin{equation} \label{modeleq}
Y=f(X,Z,\btheta), 
\end{equation}
where   $Y\in \mathbb{R}$ and $X\in \mathbb{R}^{d}$ represent the response and explanatory variables, respectively; $\btheta\in \mathbb{R}^p$ represents the vector of unknown parameters; 
and $Z\in \mathbb{R}$ represents a scaled random error that follows  
 a known distribution denoted by $\pi_0(\cdot)$. 
Suppose that a random sample of size $n$ has been collected from the model, denoted by $\{(y_1,x_1), (y_2,x_2),\ldots,(y_n,x_n)\}$, and our goal is to quantify uncertainty of $\btheta$ based on the collected samples (also known as observations). 

In the view of structural inference \citep{Fraser1966StructuralPA,Fraser1968Book}, we can express the observations  $\{(y_1,x_1), (y_2,x_2),\ldots$, $(y_n,x_n)\}$ 
in the data generating equation as follow:  
\begin{equation} \label{dataGeneq}
y_i=f(x_i,z_i,\btheta), \quad i=1,2,\ldots,n.
\end{equation}
This system of equations consists of $n+p$ unknowns, namely, $\{ \btheta, z_1, z_2, \ldots, z_n
\}$, while there are only $n$ equations. Therefore, the values of $\btheta$ cannot be uniquely determined by the data-generating equation, which gives the source of uncertainty of the parameters as illustrated by Figure \ref{uncertainty_source1}. 
For convenience, we will refer to $z_1, z_2,\ldots, z_n$ as latent variables in the context of data-generating equations, while still calling them random errors when appropriate.



\begin{figure}[htbp]
    \centering
    \includegraphics[width=0.65\textwidth]{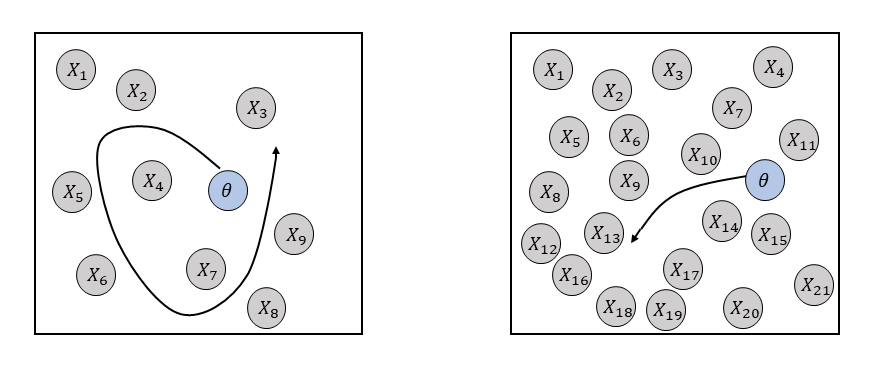}
    \caption{Illustration for the source of uncertainty of model parameters: the space that $\btheta$ can take values becomes smaller and smaller as the sample size increases.}
    \label{uncertainty_source1}
\end{figure}

\paragraph{Frequentist Methods}
The frequentist methods treat $\btheta$ as fixed unknowns. To solve for $\btheta$ from the undetermined system (\ref{dataGeneq}), they  often impose a  constraint on the system such that the latent variables can be dismissed and $\btheta$ can be uniquely determined. 
For example, the maximum likelihood estimation method works under the constraint that the joint likelihood function of the samples, or equivalently, the likelihood of 
 $\{z_1,z_2,\ldots,z_n\}$, is maximized. 
As an illustration, let's consider the linear regression model:
 \begin{equation} \label{structeq} 
 y_i=x_i^T \bbeta+ \sigma z_i, \quad i=1,2,\ldots,n,
  \end{equation}
 where $\bbeta\in \mathbb{R}^{p-1}$ is the regression coefficient vector, 
 $\sigma \in \mathbb{R}^+$ is a positive scale parameter, and $z_1,z_2,\ldots,z_n$ are  
 i.i.d standard Gaussian random variables. For this model, 
the maximum likelihood estimation method is to solve  for $\btheta:=(\bbeta,\sigma)$ subject to the constraint 
\begin{equation} \label{MLEconstraint}
\prod_{i=1}^n \phi(z_i)=\max_{(\tilde{z}_1,\tilde{z}_2,\ldots,\tilde{z}_n)\in \mathbb{R}^n} \prod_{i=1}^n \phi(\tilde{z}_i),
\end{equation}
where $\phi(\cdot)$ denotes the standard Gaussian density function. As it turns out, this is equivalent to solving the optimization problem:
\begin{equation} \label{MLEobjeq}
\max_{(\bbeta,\sigma)} \sum_{i=1}^n \log \phi\left( \frac{y_i-x_i^T\bbeta}{\sigma} \right),
\end{equation}
and the resulting estimator is given by 
\begin{equation} \label{MLEeq} 
\hat{\bbeta}=(\bX_n^T \bX_n)^{-1} \bX_n^T \bY_n, \quad  \hat{\sigma}^2=\frac{1}{n}\sum_{i=1}^n (y_i-x_i^T\hat{\bbeta})^2, 
\end{equation} 
where  $\bY_n=(y_1,\ldots,y_n)^T$ and $\bX_n=(x_1,x_2,\ldots,x_n)^T$. 

Another example of frequentist methods is  moment estimation, which solves for $\btheta$ under the constraint that the sample moments are equal to the population moments. 
For the model (\ref{dataGeneq}), the moment constraint can be expressed as   
\[
\sum_{i=1}^n y_i^k= \sum_{i=1}^n \int [f(x_i,z,\btheta)]^k \pi_0(z) dz, \quad i=1,2,\ldots, p,
\]
where the latent variables $z_1,z_2,\ldots,z_n$ are dismissed via integration. 


Let $\hat{\btheta}$ denote an estimator of $\btheta$. The frequentist method assesses the uncertainty of $\btheta$ in an unconditional mode, where the distribution of $\hat{\btheta}$ is derived based on the preassumed distribution $\pi_0(z)$ instead of the random errors ${z_1,z_2,\ldots,z_n}$ realized in the observations.
For example, considering the MLE given in (\ref{MLEeq}), one can derive that 
$\hat{\bbeta} \sim N(\bbeta, \sigma^2 (\bX_n^T \bX_n)^{-1})$ and $\frac{n \hat{\sigma}^2}{\sigma^2} \sim \chi^2(n-p+1)$ based on the preassumed Gaussian distribution for the random errors.
This unconditional mode makes the inference procedure challenging to automate; in particular, the distribution of $\hat{\btheta}$ is problem-dependent and generally difficult to derive. Additionally, the constraints used for the solution of $\btheta$ might be violated by observations. For example, when outliers exist, the maximum likelihood constraint might not hold, and the resulting MLE can significantly differ from the true value of $\btheta$.
 Refer to Section \ref{outliersection} for numerical examples.

\paragraph{Bayesian Methods}
  In contrast to frequentist methods, Bayesian methods treat $\btheta$ as random variables and circumvent the issue of latent variables 
  by adopting a conditional approach. Specifically, Bayesian methods assume that $\btheta$ follows a prior distribution, and quantify uncertainty of $\btheta$ based on the  conditional distribution (also known as the posterior distribution):
\begin{equation} \label{Bayeseq} 
\pi(\btheta|(y_1,x_1),(y_2,x_2),\ldots,(y_n,x_n)) = \frac{ \prod_{i=1}^n p(y_i|x_i,\btheta) \pi(\btheta)}{ \int  \prod_{i=1}^n p(y_i|x_i,\btheta) \pi(\btheta) d\btheta},
\end{equation} 
where  $p(y_i|x_i,\btheta)$ denotes the likelihood function of $y_i$, and $\pi(\btheta)$ represents the prior distribution of $\btheta$. 
The dependence of the inference on the prior distribution has been subject to criticism throughout the history of Bayesian statistics, as the prior distribution introduces subjective elements that may affect the fidelity of statistical inference.
 
\paragraph{Extended Fiducial Inference}
 Let $\bZ_n:=\{z_1,z_2,\ldots,z_n\}$ denote the collection of latent variables, and let $G(\bY_n,\bX_n,\bZ_n)$ denote an inverse function for the solution of $\btheta$ in the system (\ref{dataGeneq}). 
 As a general computational procedure, EFI jointly imputes  $\bZ_n$ and estimates $G(\bY_n,\bX_n,\bZ_n)$, and then quantifies the uncertainty of $\btheta$ 
 based the estimated inverse function and the imputed values of $\bZ_n$, where 
 the estimated inverse function serves as an uncertainty propagator from $\bZ_n$ to $\btheta$. 
Technically, EFI approximates $G(\bY_n,\bX_n,\bZ_n)$ using a sparse deep neural network (DNN) \citep{Liang2018BNN,SunSLiang2021,SunXLiang2021NeurIPS}, and employs an adaptive stochastic gradient MCMC algorithm  \citep{deng2019adaptive,LiangSLiang2022} to jointly simulate the values of $\bZ_n$ and estimate the parameters of the sparse  DNN.

\textcolor{black}{ While treating $\btheta$ as fixed unknowns, EFI distinguishes itself from frequentist methods by conducting inference for $\btheta$ in a conditional mode and sidestepping the imposition of any constraints on the latent variables. Additionally, unlike Bayesian methods, EFI eliminates the need for 
 placing a prior distribution on $\btheta$. In summary, EFI aims to make statistical inference of  $\btheta$ based solely on observations.
 }

\paragraph{Related Works}

During the past several decades, there have been quite a few works on statistical inference with the attempt to achieve the goal of fiducial inference, although some gaps remain. These works are briefly reviewed in what follows. 

\vspace{2mm}
\noindent {\it Generalized Fiducial Inference (GFI)}.
  Like EFI, GFI \citep{hannig2009gfi, Hannig2013GeneralizedFI, hannig2016gfi,Liu2022AGP,Murph2022GeneralizedFI} also attempts to solve the data generating equation, but employs an acceptance-rejection procedure 
  similar to the approximate Bayesian computation (ABC) algorithm \citep{Beaumont2002ABC}. As an illustration, let's consider model  (\ref{structeq}), for which the acceptance-rejection procedure 
   consists of the following steps: 
  \begin{itemize}
  \item[(a)] (Proposal) Generate $\tilde{\bZ}_n=(\tilde{z}_1, \tilde{z}_2,\ldots, \tilde{z}_n)^T$ from the Gaussian distribution $N(0,I_n)$.
  \item[(b)] ($\btheta$-fitting) Find the best fitting parameters $\tilde{\btheta}=\arg\min_{\btheta} \|\bY_n-\bX_n\bbeta-\sigma \tilde{\bZ}_n\|$, where $\|\cdot\|$ denotes an appropriate norm, and compute  the fitted value $\tilde{\bY}_n=\bX_n \tilde{\bbeta}+\tilde{\sigma} \tilde{\bZ}_n$.
  \item[(c)] (Acceptance-rejection) Accept  $\tilde{\btheta}$ if $\|\bY_n-\tilde{\bY}_n\| \leq \epsilon$ for some pre-specified small value $\epsilon$, and reject otherwise. 
\end{itemize}
Subsequently, statistical inference is made based on the accepted samples of $\tilde{\btheta}$. However, 
as $n$ increases, this procedure can become 
extremely inefficient due to its decreasing   acceptance rate. 

As a potential solution to resolving this computational issue, the limiting distribution of accepted $\tilde{\btheta}$ (as $\epsilon \to 0$) was derived in \cite{Hannig2013GeneralizedFI, hannig2016gfi, Liu2022AGP}. However, as shown in \cite{hannig2016gfi}, the limiting 
distribution  depends on the norm used in the above procedure.
Furthermore, for many problems, direct simulation of the limiting distribution might be challenging, 
  as shown in \cite{Liu2022AGP}, which involves the calculation of the determinant of an $n\times n$ matrix 
 at each iteration.
Quite recently, \cite{li2020deep} proposed replacing the $\btheta$-fitting step of the acceptance-rejection procedure 
with a mapping $\tilde{G}: (\bY_n,\bX_n,\tilde{\bZ}_n)\to \tilde{\btheta}$ pre-learned using a DNN. However, this replacement cannot improve the acceptance rate of $\tilde{\btheta}$, since  $\tilde{\bZ}_n$ is still proposed from an independent trial distribution. Other concerns about the replacement include the consistency of the DNN estimator and its difficulty in dealing with  cases where $\bX_n$ and $\bY_n$ contain missing data. 
\textcolor{black}{Compared to GFI, EFI provides a more feasible computational scheme for conducting fiducial inference, in addition to some conceptual differences in defining the fiducial distribution as discussed later.}

\vspace{2mm}  
\noindent 
{\it Structural Inference}.
Fraser \citep{Fraser1966StructuralPA,Fraser1968Book} introduced the concept of modeling the data as a function of  parameters and random errors through a structural equation (also known as data generating equation), under which statistical inference would be conditioned on the realized random errors. The structural inference approach  has successfully addressed some difficulties suffered by the pivotal quantity-based fiducial method. In particular, it avoids the issues of improper normalization \citep{Stein1959AnEO} and non-uniqueness \citep{Fieller1954SOMEPI,Mauldon1955PIVOTALQF}. 
However, like the Bayesian method, the structural inference method can suffer from the marginalization paradox \citep{Dawid1973Mar} 
that can cause inconsistency of inference. 
It is important to note that the structural equation concept 
has led to a fruitful framework for statistical inference. 
Both GFI and EFI are developed based on it. 
\textcolor{black}{
However, EFI differs significantly from structural inference in its treatment of 
$\btheta$. EFI regards $\btheta$ as fixed unknowns, whereas structural inference treats $\btheta$ as variables. As a result, EFI successfully sidesteps the marginalization paradox like a frequentist method.}
In EFI, $\btheta$ can be determined only in the limit $n\to \infty$, where the inverse function $G(\bY_n,\bX_n,\bZ_n)$ derived with finite samples can be understood as a stochastic estimator of $\btheta$ with a random component formed by $\bZ_n$.

\textcolor{black}{The {\it Dempster–Shafer theory} (see e.g., \cite{Dempster1967UL}, \cite{Shafer1976}, and \cite{Dempster2008TheDC}) and the {\it inferential model} (see e.g., \cite{Martin2013InferentialMA}, \cite{Martin2015ConditionalIM}, and \cite{Martin2015Book})  provide interesting frameworks for statistical reasoning with uncertainty. However, they are not primarily concerned with fiducial inference in the form Fisher conceived. The {\it inferential model} method 
 avoids imposing any constraints on the latent variables $\bZ_n$ but instead conducts inference for $\btheta$ in an unconditional mode, as discussed in \cite{Martin2014DiscussionFO}. }
 It achieves this by working with a low-dimensional association model,  which is built upon the sufficient or summary statistics for $\btheta$ and includes only a limited number of latent variables. Leveraging this association model, it subsequently constructs a confidence set for $\btheta$ using the {\it Dempster-Shafer theory} by considering a set of plausible random errors pre-constructed for the association model in an unconditional mode.
  For many statistical models, it yields the same confidence set as the maximum likelihood estimation method. To maintain conciseness of  this review, we omit detailed descriptions for them.

\section{Extended Fiducial Inference}

\subsection{Extended Fiducial Distribution} \label{EFDsection}

Before introducing the EFI method, we first define the extended fiducial distribution (EFD) as a confidence distribution (CD) estimator \citep{Xie2013ConfidenceDT} of $b(\btheta)$, 
where $b(\cdot)$ is a function of interest. Let's revisit the data generating equation (\ref{dataGeneq}) and begin by making several assumptions.

\begin{assumption} \label{ass:existence}
There exists an inverse function $G: \mathbb{R}^n \times \mathbb{R}^{n\times d} \times \mathbb{R}^{n} \to \mathbb{R}^p$: 
\begin{equation} \label{Inveq}
\btheta=G(\bY_n,\bX_n,\bZ_n).
\end{equation} 
\end{assumption}
In this context, ``inverse'' implies that if $(\bX_n, \bY_n, \bZ_n)$ satisfies 
$\bY_n = f(\bX_n, \bZ_n, \btheta)$ for some $\btheta$, 
then   $G(\bY_n, \bX_n, \bZ_n) = \btheta$ follows.
 From the perspective of parameter estimation, Assumption \ref{ass:existence} implies that the parameters are identifiable given the random error-augmented data 
$\{\bY_n,\bX_n,\bZ_n\}$. This is generally true when $n\geq p$, as in this case the system (\ref{dataGeneq}) has no more  unknowns than the number of equations by considering $\bZ_n$ as known. For the case $p>n$, we recommend reducing the dimension of the problem through an  application of a model-free sure independence screening procedure. More discussions on this issue can be found at the end of the paper. 

\textcolor{black}{It is worth noting that the inverse function is not necessarily constructed using all $n$ samples. For example, it can be simply constructed by solving any $p$ equations in (\ref{dataGeneq}) 
for $\btheta$. This raises an issue about non-uniqueness of $G(\cdot)$.
In what follows, we will study how the non-uniqueness of $G(\cdot)$ 
impacts the statistical inference for the unknowns $\bZ_n$ and $\btheta$.
}

For a given inverse function, we define an energy function: 
\[
U_n(\bz):=U(\bY_n,\bX_n,\bz,G(\cdot)).
\]
To ensure proper inference for the unknowns, the energy function $U_n(\cdot)$ needs to satisfy certain regularity conditions as outlined in Assumptions \ref{ass:zero}-\ref{ass:hwang1b}.
\begin{assumption} \label{ass:zero} 
 The energy function $U_n(\cdot)$ is  non-negative, 
 $\min_{\bz} U_n(\bz)$ exists and equals 0, 
  and $U_n(\bz)=0$ if and only if  $\bY_n=f(\bX_n,\bz,G(\bY_n,\bX_n,\bz))$. 
\end{assumption}

Let $\mathcal{Z}_n$ denote the zero-energy set 
\[
\mathcal{Z}_n=\big\{\bz \in \mathbb{R}^n: U_n(\bz)=0 \big\}. 
\]

\begin{lemma} \label{lem:manifold}
If Assumptions \ref{ass:existence}-\ref{ass:zero} hold, then the zero-energy set 
  $\mathcal{Z}_n$ is invariant to the choice of $G(\cdot)$.
\end{lemma}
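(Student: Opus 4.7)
The plan is to show that $\mathcal{Z}_n$ admits a characterization that makes no reference to $G(\cdot)$, from which invariance is immediate. The key observation is that Assumption~\ref{ass:existence} forces any inverse function to agree on the "feasible set" of latent vectors (those $\bz$ for which some $\btheta$ fits the data), and Assumption~\ref{ass:zero} tells us that the zero-energy set coincides precisely with this feasible set.

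First I would use Assumption~\ref{ass:zero} to rewrite $\mathcal{Z}_n$ as
\[
\mathcal{Z}_n = \bigl\{\bz \in \mathbb{R}^n : \bY_n = f(\bX_n,\bz,G(\bY_n,\bX_n,\bz))\bigr\}.
\]
Fix $\bz \in \mathcal{Z}_n$. Then $\btheta^\star := G(\bY_n,\bX_n,\bz)$ is a witness that $\bY_n = f(\bX_n,\bz,\btheta^\star)$, so membership in $\mathcal{Z}_n$ implies the existence of at least one $\btheta$ solving the data-generating equation at $(\bX_n,\bz)$. Conversely, if some $\btheta$ satisfies $\bY_n = f(\bX_n,\bz,\btheta)$, then by the inverse property stated in Assumption~\ref{ass:existence} we must have $G(\bY_n,\bX_n,\bz)=\btheta$, and substituting back gives $\bY_n = f(\bX_n,\bz,G(\bY_n,\bX_n,\bz))$, i.e.\ $\bz \in \mathcal{Z}_n$.

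Combining the two directions yields the $G$-free description
\[
\mathcal{Z}_n = \bigl\{\bz \in \mathbb{R}^n : \exists\,\btheta \in \mathbb{R}^p \text{ with } \bY_n = f(\bX_n,\bz,\btheta)\bigr\},
\]
which depends only on $f$, $\bX_n$, and $\bY_n$. Hence if $G_1$ and $G_2$ are any two inverse functions satisfying Assumptions~\ref{ass:existence}--\ref{ass:zero} and $\mathcal{Z}_n^{(1)}, \mathcal{Z}_n^{(2)}$ are the corresponding zero-energy sets, both equal the set above, so $\mathcal{Z}_n^{(1)} = \mathcal{Z}_n^{(2)}$.

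The main conceptual obstacle is reading Assumption~\ref{ass:existence} correctly: the phrase "$G(\bY_n,\bX_n,\bZ_n)=\btheta$ follows" implicitly pins down $G$ on every $\bz$ for which a fitting $\btheta$ exists, leaving non-uniqueness of $G$ only on vectors $\bz$ that cannot reproduce $\bY_n$ for any parameter value. Those are precisely the vectors with $U_n(\bz)>0$, which is why the ambiguity in $G$ never leaks into $\mathcal{Z}_n$. Once this is articulated, the proof is just the two-way implication above; no regularity or smoothness of $f$ or $G$ is needed.
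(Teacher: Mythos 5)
Your proof is correct and uses essentially the same argument as the paper: both rest on the observation that Assumption~\ref{ass:existence} forces any inverse function to return the fitting $\btheta$ whenever one exists, so the zero-energy condition of Assumption~\ref{ass:zero} holds for one inverse function iff it holds for another. The paper phrases this as a direct two-way containment $\mathcal{Z}_n^{(1)}\subseteq\mathcal{Z}_n^{(2)}$ and back, while you factor it through the explicit $G$-free feasible set $\{\bz:\exists\,\btheta,\ \bY_n=f(\bX_n,\bz,\btheta)\}$; this is a cosmetic repackaging, not a different route.
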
 
\begin{proof}
Suppose that there exist two inverse functions $G_1(\cdot)$ and $G_2(\cdot)$. Let 
$\mathcal{Z}_n^{(1)}$ and $\mathcal{Z}_n^{(2)}$ denote their respective zero-energy sets. 
For any $\bz \in \mathbb{R}^n$, if $\bz \in \mathcal{Z}_n^{(1)}$, then
$\bY_n=f(\bX_n,\bz,G_1(\bY_n,\bX_n,\bz))$ holds by Assumption \ref{ass:zero}. 
Let $\tilde{\btheta}=G_1(\bY_n,\bX_n,\bz)$. Hence,  $(\bY_n,\bX_n,\bz)$ satisfies the 
data generating equation (\ref{dataGeneq}) with the parameter $\tilde{\btheta}$. 

Since $G_2(\cdot)$ is also an inverse function for the data generating equation, we have 
$G_2(\bY_n,\bX_n,\bz)=\tilde{\btheta}$ by Assumption \ref{ass:existence}. 
This implies $\bY_n=f(\bX_n,\bz,G_2(\bY_n,\bX_n,\bz))$,  
and thus $\bz \in \mathcal{Z}_n^{(2)}$ according to Assumption \ref{ass:zero}.
That is, $\mathcal{Z}_n^{(1)} \subseteq \mathcal{Z}_n^{(2)}$ holds. Vice versa, we can show 
$\mathcal{Z}_n^{(2)} \subseteq \mathcal{Z}_n^{(1)}$. Therefore,  
$\mathcal{Z}_n^{(1)}=\mathcal{Z}_n^{(2)}$ and the zero-energy set is invariant 
to the choice of the inverse function. 
\end{proof}

Let $p_n^*(\bz|\bY_n,\bX_n)$ denote the extended fiducial density function of $\bZ_n$ 
on $\mathcal{Z}_n$.
To properly define  $p_n^*(\bz|\bY_n,\bX_n)$, we adopt a
limiting way. More precisely, we first define the conditional distribution 
\begin{equation} \label{z-dist}
p_{n}^{\epsilon}(\bz|\bX_n,\bY_n) \propto \exp\left\{-  \frac{U_n(\bz)}{\epsilon} \right\} \pi_0^{\otimes n}(\bz),
\end{equation}
where $\epsilon>0$ represents the temperature, and $\pi_0^{\otimes n}(\bz)=\pi_0(z_1)\times \pi_0(z_2) \times \cdots \times \pi_0(z_n)$ serves as 
the marginal distribution in the construction of this conditional distribution. 
Then, we define $p_n^*(\bz|\bY_n,\bX_n)$ as the limit 
\begin{equation} \label{distlimit}
p_n^*=\lim_{\epsilon \downarrow 0} p_{n}^{\epsilon}.
\end{equation}
This type of convergence  has been studied in \cite{Hwang1980LaplacesMR}. 
Specifically, the convergence can be studied in  two cases: (a) 
$\Pi_n(\mathcal{Z}_n)>0$ and (b) $\Pi_n(\mathcal{Z}_n)=0$, where
 $\Pi_n(\cdot)$ denotes a probability measure on $(\mathbb{R}^n, \mathscr{R})$  with $\mathscr{R}$ being the Borel $\sigma$-algebra and 
  the corresponding density function given by $\pi_0^{\otimes n}$.

\subsubsection{Case (a): $\Pi_n(\mathcal{Z}_n)>0$}
For this case, we follow \cite{Hwang1980LaplacesMR} to further assume that $U_n(\bz)$ satisfies:
\begin{assumption} \label{ass:hwang1a}
$\Pi_n(U_n(\bz)<a)>0$ for any $a>0$.
\end{assumption}
Then, following Proposition 2.2 of \cite{Hwang1980LaplacesMR}, it can be shown that  
the limiting probability measure of $p_n^{\epsilon}$ exists and is uniformly distributed 
on $\mathcal{Z}_n$ with respect to $\Pi_n$.  This is summarized in the following 
Theorem: 

\begin{theorem}\label{corPositive}
If Assumptions \ref{ass:existence}-\ref{ass:hwang1a} hold and $\Pi_n(\mathcal{Z}_n)>0$, then 
$p_n^*(\bz|\bX_n,\bY_n)$ is invariant to the choice of the inverse function $G(\cdot)$ and \textcolor{black}{the energy function $U_n(\cdot)$}, and it is given by   
\begin{equation} \label{theoreticalzdista}
\frac{d P_n^*(\bz|\bX_n,\bY_n)}{d\bz}= \frac{1}{\Pi_n(\mathcal{Z}_n)} \pi_0^{\otimes n}(\bz), \quad 
\bz \in \mathcal{Z}_n,
\end{equation}
where $P_n^*$ represents the cumulative distribution function (CDF) corresponding to $p_n^*$. 
\end{theorem}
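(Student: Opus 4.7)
The plan is to reduce both invariance claims and the explicit density formula to a single Laplace-type limit calculation. \emph{First}, by Lemma~\ref{lem:manifold} the zero-energy set $\mathcal{Z}_n$ is independent of the choice of inverse function $G(\cdot)$, and by Assumption~\ref{ass:zero} any admissible energy function is non-negative and vanishes precisely on $\mathcal{Z}_n$. Hence, if the limit in (\ref{distlimit}) can be shown to depend only on $\mathcal{Z}_n$ and on the base density $\pi_0^{\otimes n}$, both invariance assertions are automatic, so the task reduces to identifying that limit.

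\emph{Second}, I would compute the limit directly rather than quoting Proposition~2.2 of \cite{Hwang1980LaplacesMR} as a black box. Write
\[
p_n^{\epsilon}(\bz\mid\bX_n,\bY_n) \;=\; Z_\epsilon^{-1}\, \exp\bigl\{-U_n(\bz)/\epsilon\bigr\}\, \pi_0^{\otimes n}(\bz), \qquad Z_\epsilon \;=\; \int \exp\bigl\{-U_n(\bz)/\epsilon\bigr\}\, \pi_0^{\otimes n}(\bz)\, d\bz.
\]
Since $U_n\ge 0$, the integrand is bounded pointwise by $\pi_0^{\otimes n}$, and
\[
\exp\bigl\{-U_n(\bz)/\epsilon\bigr\} \;\xrightarrow[\epsilon \downarrow 0]{}\; \mathbf{1}_{\mathcal{Z}_n}(\bz).
\]
Dominated convergence therefore gives $Z_\epsilon \to \Pi_n(\mathcal{Z}_n)$, which is strictly positive by hypothesis. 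Applying the same argument on any Borel set $A \in \mathscr{R}$ yields $\int_A \exp\{-U_n/\epsilon\}\, \pi_0^{\otimes n}(\bz)\, d\bz \to \Pi_n(A\cap \mathcal{Z}_n)$, and taking the ratio produces $P_n^{\epsilon}(A) \to \Pi_n(A\cap \mathcal{Z}_n)/\Pi_n(\mathcal{Z}_n)$. This is the probability measure with Radon--Nikodym derivative $\pi_0^{\otimes n}(\bz)/\Pi_n(\mathcal{Z}_n)$ supported on $\mathcal{Z}_n$, matching (\ref{theoreticalzdista}). Since every step uses only $\mathcal{Z}_n$ and $\pi_0^{\otimes n}$, the stated invariance to $G(\cdot)$ and $U_n(\cdot)$ follows.

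The main obstacle I anticipate is technical rather than conceptual: specifying the mode of convergence precisely enough to legitimately call the limit a density. The positivity assumption $\Pi_n(\mathcal{Z}_n)>0$ is exactly what removes the difficulty in Case~(a) --- it keeps $Z_\epsilon$ uniformly bounded away from zero as $\epsilon \downarrow 0$, so the dominated convergence argument above delivers weak convergence of probability measures that share a common dominating measure $\pi_0^{\otimes n}(\bz)\, d\bz$, from which the explicit density is read off. It is worth recording, as a brief sanity check, that Assumption~\ref{ass:hwang1a} is automatic here because $\Pi_n(U_n<a) \ge \Pi_n(\mathcal{Z}_n) > 0$ for every $a>0$, so the Hwang framework applies without further conditions and the density in (\ref{theoreticalzdista}) is a genuine probability density on $\mathcal{Z}_n$.
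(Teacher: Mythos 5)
Your proof is correct and follows essentially the same route as the paper, which simply invokes Lemma~\ref{lem:manifold} for the invariance of $\mathcal{Z}_n$ and Proposition~2.2 of \cite{Hwang1980LaplacesMR} for the limit, omitting the details; your dominated-convergence computation is precisely the content of that proposition in the case $\Pi_n(\mathcal{Z}_n)>0$, and it correctly yields setwise convergence $P_n^{\epsilon}(A)\to \Pi_n(A\cap\mathcal{Z}_n)/\Pi_n(\mathcal{Z}_n)$ and hence the density in (\ref{theoreticalzdista}). Your observation that Assumption~\ref{ass:hwang1a} is implied by $\Pi_n(\mathcal{Z}_n)>0$ is also accurate.
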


The proof of Theorem \ref{corPositive} follows Proposition 2.2 of 
\cite{Hwang1980LaplacesMR} and Lemma \ref{lem:manifold} directly, and it is thus omitted.
An example of this case is the logistic regression as discussed in 
Section \ref{EFI:logistic} of the supplement, for which the energy function is defined as   
 \begin{equation} \label{penalty_logistic10}
U_n(\bz)= \sum_{i=1}^n \rho\Big((z_i-x_i^T G(\bY_n,\bX_n,\bZ_n))(2y_i-1) \Big),
   \end{equation}
where $z_1,z_2,\ldots,z_n \stackrel{iid}{\sim} Logistic(0,1)$ with the CDF given by $F(z)=1/(1+e^{-z})$, and $\rho(\cdot)$ is the ReLU function: $\rho(s)=s$ if $s>0$ and 0 otherwise.

\subsubsection{Case (b): $\Pi_n(\mathcal{Z}_n)=0$}

For this case,  we assume that $\mathcal{Z}_n$ forms a manifold in $\mathbb{R}^n$ with the highest dimension $p$.
 Following \cite{Hwang1980LaplacesMR}, by the {\it tubular neighborhood theorem} \citep{Milnor1974CharacteristicC},  we can decompose $\bz \in \mathcal{Z}_{n}$ as follows: 
\begin{equation} \label{decompeq}
\bz=m(u_1,u_2,\ldots,u_p)+t_1 \mathcal{N}(1)+\cdots+ t_{n-p} \mathcal{N}(n-p),
\end{equation}
where $m(u_1,u_2,\ldots,u_p)$ is local coordinates, and 
$\mathcal{N}(1), \ldots, \mathcal{N}(n-p)$ are normalized smooth normal vectors perpendicular to $\mathcal{Z}_{n}$. Let $\bt=(t_1,t_2,\ldots,t_{n-p})^T$. 
In addition to Assumptions \ref{ass:existence}-\ref{ass:hwang1a}, we assume the following conditions hold:

\begin{assumption} \label{ass:hwang1b}
    \item[(i)]  There exists $a>0$ such that $\{U_n(\bz) \leq a\}$ is compact. 
    \item[(ii)] $\pi_0^{\otimes n}(\bz)$ is continuous, and $U_n(\bz) \in C^3(\mathbb{R}^n)$ is three-time continuously differentiable. 
    \item[(iii)] $\mathcal{Z}_{n}$ has finitely many components and each component is a compact smooth manifold with the highest dimension $p$. 
    \item[(iv)]  $\pi_0^{\otimes n}(\bz)$ is not identically zero on the $p$-dimensional manifold,
    and det$( \frac{\partial^2 U}{\partial \bt^2}(\bz)) \ne 0$ for $\bz\in \mathcal{Z}_{n}$.
\end{assumption} 

\begin{lemma} \label{lemma:hwang3.1} (Theorem 3.1;  \cite{Hwang1980LaplacesMR}) \textcolor{black}{
If Assumptions \ref{ass:existence}-\ref{ass:hwang1b} hold, then  
 the limiting probability measure $p_n^*$ concentrates on the highest dimensional manifold and is given by 
 \begin{equation} \label{theoreticalzdist}
  \frac{d P_n^*(\bz|\bX_n,\bY_n)}{d \mathscr{\nu}}(\bz)= \frac{ \pi_0^{\otimes n}(\bz) \left( {\rm det}( \nabla^2_{\bt} U_n(\bz))\right)^{-1/2}} { \int_{\mathcal{Z}_{n}} \pi_0^{\otimes n}(\bz) \left( {\rm det}(\nabla_{\bt}^2 U_n(\bz) \right)^{-1/2} d \mathscr{\nu}}, \quad \bz \in \mathcal{Z}_{n},
 \end{equation} 
 where $\mathscr{\nu}$ is the sum of intrinsic measures on the $p$-dimensional manifold in $\mathcal{Z}_n$.}
\end{lemma}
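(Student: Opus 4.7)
The plan is to treat Lemma~\ref{lemma:hwang3.1} as a Laplace-type result for integrals whose minimum set is a smooth submanifold, and to extract the density~(\ref{theoreticalzdist}) by an explicit local calculation in tubular coordinates. As $\epsilon \downarrow 0$, the Gibbs-type density $p_n^{\epsilon} \propto \exp(-U_n(\bz)/\epsilon)\pi_0^{\otimes n}(\bz)$ concentrates on $\mathcal{Z}_n$ since $U_n$ vanishes exactly there; the question is the \emph{rate} at which mass accumulates along $\mathcal{Z}_n$, and this rate is what determines the limiting density. I would follow the same overall strategy used in Theorem~3.1 of \cite{Hwang1980LaplacesMR}.

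First I would show concentration on a tube. By Assumption~\ref{ass:hwang1b}(i) the sublevel set $\{U_n \leq a\}$ is compact, so continuity of $U_n$ together with its positivity off $\mathcal{Z}_n$ gives $U_n \geq c(\delta) > 0$ on $\{U_n \leq a\} \setminus \mathcal{T}_\delta$, where $\mathcal{T}_\delta$ is a $\delta$-tube about $\mathcal{Z}_n$. Combined with the exponential tail of $\exp(-U_n/\epsilon)$ outside $\{U_n \leq a\}$, the contribution of $\mathbb{R}^n \setminus \mathcal{T}_\delta$ to both the numerator and the normalizer is $O(e^{-c/\epsilon})$, negligible against the $\epsilon^{(n-p)/2}$ leading order that will come from inside the tube.

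Inside $\mathcal{T}_\delta$ I would use the tubular-neighborhood decomposition~(\ref{decompeq}) and Assumption~\ref{ass:hwang1b}(iii) to parametrize each compact component by $(\bu,\bt)$, writing $d\bz = J(\bu,\bt)\,d\bu\,d\bt$ with $J(\bu,0)\,d\bu = d\mathscr{\nu}$ by construction. Since $U_n \geq 0$ vanishes on $\mathcal{Z}_n$, its gradient vanishes there too, and the $C^3$ hypothesis of Assumption~\ref{ass:hwang1b}(ii) yields
\begin{equation*}
U_n(m(\bu) + \bt) = \tfrac{1}{2}\bt^{T} H(\bu)\bt + R(\bu,\bt), \qquad |R(\bu,\bt)| = O(|\bt|^3),
\end{equation*}
uniformly on compacta, where $H(\bu) = \nabla^2_{\bt} U_n(m(\bu))$ is positive definite by Assumption~\ref{ass:hwang1b}(iv). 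A Gaussian integral in $\bt$ then gives, uniformly in $\bu$,
\begin{equation*}
\int \exp\!\Bigl(-\tfrac{1}{\epsilon}U_n(m(\bu)+\bt)\Bigr)\,\pi_0^{\otimes n}(m(\bu)+\bt)\,J(\bu,\bt)\,d\bt = (2\pi\epsilon)^{(n-p)/2}\,\pi_0^{\otimes n}(m(\bu))\,\bigl(\det H(\bu)\bigr)^{-1/2}(1+o(1)).
\end{equation*}
Testing against a bounded continuous $f$ and dividing numerator by denominator (both asymptotic to this common factor times a manifold integral) cancels $(2\pi\epsilon)^{(n-p)/2}$ and yields weak convergence of $P_n^{\epsilon}$ to the measure~(\ref{theoreticalzdist}); invariance of $\mathcal{Z}_n$ under the choice of $G(\cdot)$ is already supplied by Lemma~\ref{lem:manifold}.

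The main obstacle is establishing the Laplace asymptotic \emph{uniformly} over each compact component of $\mathcal{Z}_n$: the cubic remainder $R$ must be dominated by the Gaussian kernel $\exp(-\bt^{T} H \bt/(2\epsilon))$, the Jacobian discrepancy $J(\bu,\bt) - J(\bu,0)$ must be absorbed into the $o(1)$, and the tube width $\delta$ must be allowed to shrink with $\epsilon$ slowly enough to capture the full Gaussian mass yet quickly enough to kill both error terms. This is precisely the Morse-lemma-in-families computation underlying Hwang's Theorem~3.1, and Assumption~\ref{ass:hwang1b}(i)--(iv) are exactly what make the required uniform estimates close.
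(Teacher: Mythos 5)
Your proposal is correct and is essentially the argument behind the cited result: the paper itself offers no proof of Lemma \ref{lemma:hwang3.1}, stating only that it is a restatement of Theorem 3.1 of \cite{Hwang1980LaplacesMR}, and your tube-concentration plus tubular-coordinate Laplace expansion (quadratic form in the normal variables $\bt$, Gaussian integration yielding the $(\det \nabla^2_{\bt}U_n)^{-1/2}$ weight, cancellation of the $(2\pi\epsilon)^{(n-p)/2}$ factor) is precisely Hwang's proof. The only point worth adding is the justification of ``concentrates on the highest dimensional manifold'' when $\mathcal{Z}_n$ has components of different dimensions: a component of dimension $d$ contributes at order $\epsilon^{(n-d)/2}$, so the $p$-dimensional components dominate as $\epsilon\downarrow 0$; your scaling computation gives this immediately but you should state it.
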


Lemma \ref{lemma:hwang3.1} is a restatement of Theorem 3.1 of 
\cite{Hwang1980LaplacesMR} and its proof is thus omitted.  
We note that the distribution $P_n^*$ can also be derived using the co-area formula (see e.g., \cite{Fedrer1969GeometricMT}, section 3.2.12; \cite{Diaconis2013SamplingFA}, Proposition 2; \cite{Liu2022AGP}, Theorem 1) under similar conditions.

 Given the inverse function $G(\cdot)$, we define the parameter space 
 \[
  \Theta=\{\btheta \in \mathbb{R}^p: \btheta=G(\bY_n,\bX_n,\bz), \bz\in \mathcal{Z}_n\},
  \]
which represents the set of all possible values of $\btheta$ that $G(\cdot)$ takes when $\bz$ runs over $\mathcal{Z}_n$. Then 
for any function $b(\btheta)$, its EFD associated with the inverse function $G(\cdot)$ can be defined as follows: 

\begin{definition} \label{EFDdef} [EFD of $b(\btheta)$] Consider the data generating equation   (\ref{dataGeneq}) and an inverse function $\btheta=G(\bY_n,\bX_n,\bz)$. 
For any function $b(\btheta)$ of interest, its EFD associated with the inverse function $G(\cdot)$ is defined as  
\begin{equation} \label{EFDeq}
\begin{split}
\mu_n^*(B|\bY_n,\bX_n) &=\int_{\mathcal{Z}_n(B)} d P_n^*(\bz|\bY_n,\bX_n),   \quad 
\mbox{for any measurable set $B \subset \Theta$},
\end{split}
\end{equation}
where $\mathcal{Z}_n(B)=\{\bz\in \mathcal{Z}_n: b(G(\bY_n,\bX_n,\bz)) \in B\}$, 
and $P_n^*(\bz|\bY_n,\bX_n)$ is given by (\ref{theoreticalzdist}). 
\end{definition} 

Essentially, $b(G(\bY_n,\bX_n,\bZ_n))$ can be considered as an estimator of $b(\btheta)$, and 
Eq. (\ref{EFDeq}) represents the CD estimator of $b(\btheta)$ associated with the inverse function ${G}(\cdot)$. 
Here we would like to emphasize that viewing $\mu_n^*$ as a distribution function of $b(\btheta)$ (i.e., regarding $\btheta$ as a variable) is not appropriate, as in this context it will easily lead to a Bayesian approach for jointly simulating of $(\btheta,\bZ_n)$. The resulting sample pair $(\btheta,\bZ_n)$ will break the inverse mapping (\ref{Inveq}) and, in consequence, the uncertainty of $\bZ_n$ will not be properly propagated to $\btheta$.

\textcolor{black}{
For an effective implementation of EFI, we propose the following 
importance resampling procedure: 
\begin{itemize}
 \item[(a)] (Manifold sampling) For any given inverse function $\tilde{G}(\cdot)$, simulate $M$ samples, denoted by $\mathcal{S}_M=\{\bz_1, \bz_2, \ldots,\bz_M\}$, from  $\pi_0^{\otimes n}(\bz)$ subject to the constraint $U_n (\bz)=0$. This can be done using a constrained Monte Carlo algorithm such as constrained Hamiltonian Monte Carlo \citep{Brubaker2012AFO,Reich1996SymplecticIO}. 
 \item[(b)] (Weighting) Calculate the importance weight $\omega_i= \left( {\rm det}( \nabla^2_{\bt} U_n(\bz_i))\right)^{-1/2}$ for each sample $\bz_i\in \mathcal{S}$ using an inverse function $G(\cdot)$ of interest.
 \item[(c)] (Resampling) Draw $m$ samples from $\mathcal{S}_M$ without replacement according to the probabilities: $\frac{\omega_i}{\sum_{j=1}^M \omega_j}$ for $i=1,2,\ldots,M$.
 \item[(d)] (Inference) For $b(\btheta)$, find the EFD  associated with $G(\cdot)$ according to 
  (\ref{EFDeq}) based on the $m$ samples obtained in step (c). 
\end{itemize}
This procedure involves two inverse functions: $\tilde{G}(\cdot)$ and 
$G(\cdot)$. By Lemma \ref{lem:manifold}, any inverse function $\tilde{G}(\cdot)$ 
can be used in step (a) to generate samples from $\mathcal{Z}_n$, 
and this greatly facilitates comparisons of the inference results from 
different choices of $G(\cdot)$. 
If $G(\cdot)$ and $\tilde{G}(\cdot)$ are chosen to be the same,  $p_n^*(\bz|\bX_n,\bY_n) \propto \pi_0^{\otimes n}(\bz) \left( {\rm det}( \nabla^2_{\bt} U_n(\bz))\right)^{-1/2}$ 
can also be directly simulated on $\mathcal{Z}_n$ 
using a constrained Monte Carlo algorithm. }

\begin{remark} \label{rem:flexibility} (On the flexibility of EFI)
EFI provides a flexible framework of statistical inference. One can adjust the inverse function $G(\cdot)$ and the energy function $U_n(\cdot)$ to ensure that the resulting CD estimator of $b(\btheta)$ satisfies desired properties, such as efficiency, unbiasedness, and robustness. 
This mirrors the flexibility of frequentist methods, where different estimators of $b(\btheta)$ can be designed for different purposes. However, its conditional inference nature makes EFI even more attractive than frequentist methods, as it circumvents the need for  derivations of theoretical distributions of the estimators.
\end{remark}

Lastly, we note that the fiducial distribution defined above conceptually differs from that defined in GFI \cite{Hannig2013GeneralizedFI, hannig2016gfi, Liu2022AGP}. 
 Specifically, GFI interprets the fiducial distribution as the $\btheta$-marginal of a distribution defined on the manifold formed by the data generating equations in the joint space of $(\btheta, \bZ_n)\in \mathbb{R}^p \times 
\mathbb{R}^n$, while EFI interprets it as the $\btheta$-transformation of a distribution 
defined on a subset or manifold formed by the data generating equations in the sample space of $\bZ_n \in \mathbb{R}^n$. Our definition is consistent with the EFI algorithm developed in this paper.

\subsection{EFI for the Models with Additive Noise}

The importance resampling procedure proposed in Section \ref{EFDsection} is general for simulations of $p_n^*$,  but computing the importance weights can be challenging 
when the sample size $n$ is large. 
Specifically, it involves calculating the determinant of an $(n-p)\times (n-p)$-matrix at each iteration.  
To address this issue, we consider models with additive noise, which represent a broad class of models and have been extensively studied in the context of causal inference (see e.g., \cite{Peters2013CausalDW} and \cite{Hoyer2008NonlinearCD}). 
Additionally, we suggest setting the energy function as 
prescribed in Assumption \ref{ass:invariance}-(i), with 
the $L_2$-norm        
$U_n(\bz)=\|\bY_n-f(\bX_n,\bz,G(\bY_n,\bX_n,\bz))\|^2$ as a special case. 
Consequently, for these models, we show that 
the importance weight is reduced to a constant and, therefore, one can simulate from $p_n^*$ 
by directly simulating from $\pi_0^{\otimes n}(\bz)$ 
using a constrained Monte Carlo algorithm.

 \begin{assumption} \label{ass:invariance} 
 (i)  $U_n(\cdot)$ is specified in the form: $U_n(\bz)=h(J(\bz))=\sum_{i=1}^n h(e_i)$ for 
some function $h(\cdot)$ satisfying $\frac{\partial h(J)}{\partial J}(\bz)=0$ for any $\bz \in \mathcal{Z}_n$, where $J(\bz)=\bY_n-f(\bX_n,\bz,G(\bY_n,\bX_n,\bz))=(e_1,e_2,\ldots, e_n)^T$, and $e_i=y_i-f(x_i,z_i,\btheta)$ for $i=1,2,\ldots,n$; 
and (ii) the model noise is additive; i.e., the function $f(X,Z,\btheta)$ in model (\ref{modeleq}) is a linear function of $Z$.  
\end{assumption}



\begin{theorem} \label{corZero}
If Assumptions \ref{ass:existence}-\ref{ass:invariance} hold, then  
$P_n^*(\bz|\bY_n,\bX_n)$ given in (\ref{theoreticalzdist}) is invariant to the choices of $G(\cdot)$ and $U_n(\cdot)$. Furthermore, $P_n^*$ reduces to a truncated distribution of 
$\pi_0^{\otimes n}$ on the manifold $\mathcal{Z}_n$.
\end{theorem}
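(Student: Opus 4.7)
The plan is to show that under Assumption \ref{ass:invariance}, the density in (\ref{theoreticalzdist}) reduces to a truncated $\pi_0^{\otimes n}$ on $\mathcal{Z}_n$; invariance to $G(\cdot)$ and $U_n(\cdot)$ then follows immediately, because a truncated distribution depends only on $\mathcal{Z}_n$ and $\pi_0$, and Lemma \ref{lem:manifold} already guarantees that $\mathcal{Z}_n$ is itself invariant to the choice of $G$. The substantive task, therefore, is to prove that the Hessian correction $(\det \nabla_{\bt}^2 U_n(\bz))^{-1/2}$ is constant in $\bz$ on $\mathcal{Z}_n$, so that it cancels between the numerator and denominator of (\ref{theoreticalzdist}).

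First I would simplify the Hessian on $\mathcal{Z}_n$. By Assumption \ref{ass:invariance}(i), $U_n(\bz)=\sum_{i=1}^n h(e_i(\bz))$ with $h'(0)=0$, so the identity $\nabla^2 U_n = \sum_i h''(e_i)(\nabla e_i)(\nabla e_i)^T + \sum_i h'(e_i)\nabla^2 e_i$ collapses on the zero set to $\nabla^2 U_n|_{\mathcal{Z}_n} = h''(0)\,(DJ)^T DJ$, since every $e_i$ vanishes there. Contracting with the orthonormal normal frame $N(\bz)$ from (\ref{decompeq}) then gives $\nabla_{\bt}^2 U_n = h''(0)\,(DJ\cdot N)^T (DJ\cdot N)$. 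Next I would exploit Assumption \ref{ass:invariance}(ii). Writing the linear-in-$Z$ model as $f(x_i,z_i,\btheta)=g_i(\btheta)+c_i(\btheta)z_i$, the chain rule produces the closed form $DJ=-M\cdot DG - C$, where $M_{ik}=\partial_k g_i + z_i\,\partial_k c_i$ and $C=\mathrm{diag}(c_i(\btheta))$. Differentiating the identity $G(\phi(\btheta))=\btheta$ along the tangent directions of $\mathcal{Z}_n$ yields the constraint $DG\cdot C^{-1}M=-I_p$ on $\mathcal{Z}_n$, and because the tangent space is $\mathrm{Range}(C^{-1}M)$, the normal frame satisfies $M^T C^{-1} N = 0$.

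The crux of the proof is to show that $\det((DJ\cdot N)^T(DJ\cdot N))$ is constant on $\mathcal{Z}_n$ for every admissible $G$. Expanding the product and using the orthogonality relation $M^T C^{-1} N = 0$, the cross terms collapse and the surviving pieces take a form in which the $\btheta$-dependence carried by $M$ is cancelled by the compensating $\btheta$-dependence forced on $DG$ through the inverse-function constraint $DG\cdot C^{-1}M=-I_p$; the linearity of $f$ in $Z$ is essential here, because it prevents $C$ from injecting extra $\bz$-variation that is not absorbed by this cancellation. This is the hard part of the argument: even though $DJ$, $N$, and $Q=DG\cdot N$ all vary nontrivially with $\bz$, the algebraic interplay forced by additive noise and the prescribed quadratic-type energy pins the determinant down to a global constant on $\mathcal{Z}_n$ (its numerical value may depend on $G$, but this only rescales the normalizing constant). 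Once this constancy is in hand, $(\det \nabla_{\bt}^2 U_n)^{-1/2}$ factors out of (\ref{theoreticalzdist}), yielding $dP_n^*/d\mathscr{\nu} \propto \pi_0^{\otimes n}(\bz)$ on $\mathcal{Z}_n$, which is the advertised truncated distribution; invariance to $G$ and $U_n$ is then immediate from Lemma \ref{lem:manifold} since the right-hand side depends only on $\mathcal{Z}_n$ and $\pi_0$.
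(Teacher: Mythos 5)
Your reduction of the Hessian on the zero set to $h''(0)\,(DJ\cdot N)^T(DJ\cdot N)$ matches the paper's first step (its Eq.~(\ref{Udeteq}) plus the reduction of $\nabla_J^2 h$ to $\varsigma I$), and that part is fine. The gap is exactly where you flag it: the claim that $\det\bigl((DJ\cdot N)^T(DJ\cdot N)\bigr)$ is constant on $\mathcal{Z}_n$ for an \emph{arbitrary} inverse function is asserted, not proved. Worse, the specific mechanism you invoke does not work as stated. Writing $f(x_i,z_i,\btheta)=g_i(\btheta)+c_i(\btheta)z_i$ and $DJ=-M\,DG-C$, the cross terms in the expansion of $(DJ\cdot N)^T(DJ\cdot N)$ are $(DG\cdot N)^T M^T C\,N$ and its transpose; your orthogonality relation from the tangent space $\mathrm{Range}(C^{-1}M)$ gives $M^T C^{-1}N=0$, not $M^T C\,N=0$, so the cross terms do not collapse unless $C$ is a scalar multiple of the identity. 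And even granting that, the surviving term $(DG\cdot N)^T M^T M (DG\cdot N)+N^TC^2N$ is not pinned down by the single constraint $DG\cdot C^{-1}M=-I_p$, because that constraint only fixes the action of $DG$ on the tangent directions, while $DG\cdot N$ (the action on normal directions) is completely free and genuinely varies with the choice of $G$ and with $\bz$. So the ``algebraic interplay'' you appeal to is precisely the missing argument, not a consequence of what you have set up.

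The paper sidesteps this difficulty with a structural argument you should look at: it first takes the \emph{basis} inverse functions obtained by solving a fixed subset of $p$ equations for $\btheta$. For such a $G$, the first $p$ components of $J$ vanish identically and $G$ does not depend on $z_{p+1},\dots,z_n$, so with $\tilde{\bt}=(z_{p+1},\dots,z_n)$ the matrix $\nabla_{\tilde{\bt}}J$ is literally a zero block stacked on a diagonal block whose entries depend only on $(\bX_n,\btheta)$ --- linearity of $f$ in $Z$ enters exactly here --- making $\nabla_{\bt}^2U_n$ a constant positive-definite matrix on $\mathcal{Z}_n$ with no computation of cross terms needed. It then argues that any inverse function, being a solver of all $n$ equations, agrees with (a combination of) these $\binom{n}{p}$ basis solvers, and extends the constancy by matrix operations. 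If you want to salvage your direct route, you would need to actually control $DG\cdot N$ for a general $G$, which is the hard content the paper's basis decomposition is designed to avoid.
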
 
\begin{proof}
Under Assumptions \ref{ass:existence}-\ref{ass:zero}, the uniqueness of $\mathcal{Z}_n$ 
has been established in Lemma \ref{lem:manifold}.  
If $U_n(\cdot)$ is specified as in Assumption \ref{ass:invariance}, 
the condition $\frac{\partial h(J)}{\partial J}(\bz)=0$ on $\mathcal{Z}_n$ implies 
 \begin{equation} \label{Udeteq}
 \nabla_{\bt}^2 U_n(\bz)= \left(\nabla_{\bt} J(\bz)\right)^T \nabla_J^2 h(J(\bz)) \nabla_{\bt} J(\bz).
 \end{equation}
Furthermore, with the aid of Assumption \ref{ass:hwang1b}-(iv) 
and the symmetric form of $h(\cdot)$ (with respect to $e_i$'s),
 $\nabla_J^2 h(J(\bz))$ reduces to a diagonal matrix of $\varsigma I_{n-k}$
for some positive constant $\varsigma>0$. 
This ensures the factor $\varsigma$ to be canceled out for the numerator and denominator in (\ref{theoreticalzdist}). Consequently,  
$P_n^*$ is invariant to the choice of $U_n(\cdot)$.

To further establish the invariance of $P_n^*$ with respect to the choice of $G(\cdot)$, 
we consider an inverse function 
\[
G(\bY_n,\bX_n,\bZ_n)=\hat{\btheta}(z_1,z_2,\ldots,z_p),
\]
where $\hat{\btheta}(z_1,z_2,\ldots,z_p)$ is obtained by solving the first $p$ equations 
in (\ref{dataGeneq}).  Here we assume the solution $\hat{\btheta}(z_1,z_2,\ldots,z_p)$ is unique for the $p$ equations. 
Let $\tilde{\bt}=(z_{p+1},z_{p+2},\ldots,z_n)^T$, which corresponds to a transformation 
of $\bt$ in (\ref{decompeq}). 
Then, it is easy to verify that at any point $\bz\in \mathcal{Z}_n$, 
 the first $p$ rows of the matrix $\nabla_{\tilde{\bt}} J(\bz) \in \mathbb{R}^{n \times (n-p)}$ are all zero, and 
 the remaining $(n-p)$-rows forms a $(n-p)\times (n-p)$-diagonal matrix for which 
 the diagonal elements are nonzero and expressed as a function of $(\bX_n,\btheta)$ 
 (i.e., a constant function of $\bz$) by the assumption 
 that $f(X,Z,\btheta)$ is a linear function of $Z$. 
 Therefore, at any point $\bz\in \mathcal{Z}_n$,  $\nabla_{\bt}^2 U_n(\bz)$ forms a positive-definite constant matrix with rank $n-p$;
 and $P_n^*$ in (\ref{theoreticalzdist}) reduces to  a truncated distribution of $\pi_0^{\otimes n}$ on $\mathcal{Z}_n$.

 In the same way, we can construct $\binom{n}{p}$ different inverse functions, 
 each obtained by choosing  a different set of $p$ equations to solve for $\btheta$. Therefore, each of them results in a positive definite matrix  $\nabla_{\bt}^2 U_n(\bz)$ and the same distribution $P_n^*$. 
 For any appropriate linear combination of these inverse functions, which still forms an inverse function, 
 the above result still holds. For the combination case, the desired result can be established via appropriate matrix operations, as illustrated using a linear regression example 
 in Section \ref{EFDproof} of the supplement. 
 
 Finally, we note that for any inverse function, since it solves all $n$ equations, it must also be a solver for a selected set of $p$ equations. 
 By the uniqueness of the solution for $p$ equations, the inverse function 
 can be regarded as  a linear combination
 of these $\binom{n}{p}$ basis inverse functions.  
 \end{proof}

\paragraph{Example 1} Consider the linear regression model (\ref{structeq}) again. Let $\btheta=(\bbeta,\sigma^2)$.
To conduct EFI for $\btheta$, we set $G(\bY_n,\bX_n,\bz)=\hat{\btheta}(z_1,z_2,\ldots,z_{p}):=(\hat{\bbeta}^T,\hat{\sigma})^T$,  a solver for the first $p$ equations in (\ref{dataGeneq}), and set the energy function
\begin{equation} \label{energyeq}
U_n(\bz)=\|\bY_n-f(\bX_n,\bz,G(\bY_n,\bX_n,\bz))\|^2.
\end{equation}
  Consequently, we  have 
\[
J(\bz)  =\begin{pmatrix} Y_{1:p}-X_{1:p} \hat{\bbeta}-\hat{\sigma} Z_{1:p} \\ Y_{(p+1):n}-X_{(p+1):n} \hat{\bbeta}-\hat{\sigma} Z_{(p+1):n} \end{pmatrix},  \quad
\nabla_{Z_{(p+1):n}} J(\bz)  = \begin{pmatrix} 0 \\ \hat{\sigma} I_{n-p} \end{pmatrix},
\]
where $Y_{1:p}$, $X_{1:p}$ and $Z_{1:p}$ to denote the response, explanatory, and noise variables of the first $p$ samples in the dataset, respectively; likewise, 
$Y_{(p+1):n}$, $X_{(p+1):n}$ and $Z_{(p+1):n}$ denote the response, explanatory, and noise variables of the last $n-p$ samples. 
 At any $\bz\in \mathcal{Z}_n$, we have $\hat{\btheta}=\btheta$, i.e., $\hat{\btheta}$ 
 can be treated as constants, and thus 
 $\nabla_{\bt}^2 U_n(\bz)=2 \tilde{D}^T \tilde{D}$ for some matrix $\tilde{D}$ of rank
 $n-p$. This yields the result that $p_n^*(\bz|\bX_n,\bY_n)$ is a truncation of 
 $\pi_0^{\otimes n}$ on $\mathcal{Z}_n$.

\paragraph{Example 1 (continuation)} 
 For simplicity, let's first consider the case where $\sigma^2$ is known. In this scenario, we set 
 \[
 G(\bY_n,\bX_n,\bZ_n)= (\bX_n^T \bX_n)^{-1} \bX_n^T (\bY_n-\sigma \bZ_n), 
 \]
 which utilizes all available data. Then the EFD of $\bbeta$, denoted by $\mu_n^*(\bbeta|\bY_n,\bX_n,\sigma^2)$, is given by $N((\bX_n^T\bX_n)^{-1} \bX_n^T\bY_n, \sigma^2 (\bX_n^T\bX_n)^{-1})$ after normalizing $p_n^*(\bz|\bY_n,\bX_n)$ on 
$\mathcal{Z}_n$,   which coincides with 
 the posterior distribution of $\bbeta$ under Jeffery's prior $\pi(\bbeta) \propto 1$.  
Furthermore, the resulting confidence set for $\bbeta$ is identical to \textcolor{black}{those obtained 
by the GFI and} ordinary least square (OLS) methods.

Next, let's consider the case where $\sigma^2$ is unknown. To find the EFD of $\sigma^2$, 
we solve the first $p-1$ equations for $\bbeta$, resulting in the solution: 
\[
 \tilde{\bbeta}= (\bX_{1:(p-1)}^T\bX_{1:(p-1)})^{-1} \bX_{1:(p-1)}^T(\bY_{1:p-1}-\sigma \bZ_{1:(p-1)}),
\]
where  $Y_{1:(p-1)}$, $X_{1:(p-1)}$ and $Z_{1:(p-1)}$ denote the response, explanatory, and noise variables of the first $p-1$ samples in the dataset, respectively. 
By substituting $\tilde{\bbeta}$ into each of the remaining $n-p+1$ equations and adjusting with the covariance of the $Z$-terms,
we obtain a combined 
solution for $\sigma^2$: 
\[
\small
\begin{split}
\tilde{\sigma}^2 &= \frac{(\bY_{p:n}-\bX_{p:n} (\bX_{1:(p-1)}^T\bX_{1:(p-1)})^{-1} \bX_{1:(p-1)}^T\bY_{1:p-1})^T \Sigma^{-1}  (\bY_{p:n}-\bX_{p:n} (\bX_{1:(p-1)}^T\bX_{1:(p-1)})^{-1} \bX_{1:(p-1)}^T\bY_{1:p-1})}{ (\bZ_{p:n}-\bX_{p:n} (\bX_{1:(p-1)}^T\bX_{1:(p-1)})^{-1} \bX_{1:(p-1)}^T\bZ_{1:p-1})^T \Sigma^{-1}  (\bZ_{p:n}-\bX_{p:n} (\bX_{1:(p-1)}^T\bX_{1:(p-1)})^{-1} \bX_{1:(p-1)}^T\bZ_{1:p-1})}, \\
 & := \frac{A}{W},
\end{split}
\]
where   
and  $Y_{p:n}$, $X_{p:n}$ and $Z_{p:n}$ denote the response, explanatory, and noise variables of the last $n-p+1$ samples in the dataset, respectively; and $\Sigma=I_{n-p+1}+\bX_{p:n} (\bX_{1:(p-1)}^T\bX_{1:(p-1)})^{-1} \bX_{p:n}^T$, representing the covariance matrix of $(\bZ_{p:n}-\bX_{p:n} (\bX_{1:(p-1)}^T\bX_{1:(p-1)})^{-1} \bX_{1:(p-1)}^T\bZ_{1:p-1})$. Here, $A$ forms an unbiased estimator of $(n-p+1)\sigma^2$, 
and $W$ follows 
a $\chi^2$-distribution with a degree-of-freedom of $n-p+1$. 
Therefore, if we set $\tilde{\sigma}^2$ as the inverse function ${G}(\cdot)$ 
for $\sigma^2$, the resulting EFD of $\sigma^2$ is given by 
\begin{equation} \label{EFDsigma}
\mu_n^*(\sigma^2|\bY_n,\bX_n)=\pi_{\chi^{-2}_{n-p+1}}\left(\frac{\sigma^2}{A}\right) \frac{1}{A},
\end{equation}
where $\pi_{\chi^{-2}_k}(u)$ denotes 
the density function of an inverse-chi-squared distribution with a degree-of-freedom of $k$.   
If we use the mean of $\mu_n^*(\sigma^2|\bY_n,\bX_n)$ as an estimator of $\sigma^2$, 
 it can be shown that it has a bias of $\frac{2}{n-p-3} \sigma^2$. In contrast, the MLE
 of $\sigma^2$ has a bias of $-\frac{p-1}{n} \sigma^2$. Therefore, the EFD results in a smaller bias than the MLE when $n>(p+3)(p-1)/(p-3)$. Note that, as stated in Remark \ref{rem:flexibility}, we 
 can adjust $\tilde{\sigma}^2$ by the factor $\frac{n-p-3}{n-p-1}$ to make the mean of the EFD 
 unbiased for $\sigma^2$, if desired. 


Finally, we can obtain the EFD of $\bbeta$ by completing the integration:
\begin{equation} \label{integraleq}
\mu_n^*(\bbeta|\bY_n,\bX_n)= \int \mu_n^*(\bbeta|\bY_n,\bX_n,\sigma^2) \mu_n^*(\sigma^2|\bY_n,\bX_n) d\sigma^2,
\end{equation}
which is a multivariate non-central t-distribution $t(\bmu_{\bbeta}, \bSigma_{\bbeta}, \nu_{\bbeta})$ with the parameters given by 
\[
\bmu_{\bbeta}=(\bX_n^T\bX_n)^{-1} \bX_n^T\bY_n, \quad 
\bSigma_{\bbeta}=\frac{A}{n-p+1} (\bX_n^T\bX_n)^{-1},
\quad \nu_{\bbeta}=n-p+1.
\]
The mean and covariance matrix of the EFD is given by $(\bX_n^T\bX_n)^{-1} \bX_n^T\bY_n$ and $\frac{A}{n-p-1} (\bX_n^T \bX_n)^{-1}$. 

\vspace{0.1in}

It is worth noting that our EFD (\ref{EFDsigma}) matches the result obtained with OLS. The latter often presents the result as 
\[
\frac{\bY_n^T(I_n-\bX_n(\bX_n^T \bX_n)^{-1} \bX_n^T)\bY_n}{\sigma^2} \sim \chi^2_{n-p+1},
\]
where the numerator forms an unbiased estimator of $(n-p+1)\sigma^2$. Similarly, in EFI,  $A$ serves as 
an unbiased estimator of $(n-p+1)\sigma^2$. Also, the EFD (\ref{EFDsigma}) can be represented 
as an inverse-Gamma distribution IG$(\alpha_g, \beta_g)$ 
with $\alpha_g=\frac{n-p+1}{2}$ and $\beta_g=\frac{A}{2}$, which is the same as the GFI solution \citep{hannig2016gfi} except for the expression of $A$. 

\begin{remark} \label{Rem:freq}  The EFD derivation procedure described in  
Example 1 can be extended to general nonlinear regression problems with additive noise. 
Consider the model $\bY_n=f(\bX_n,\bbeta)+\sigma \bZ_n:=\bmu_y+\sigma \bZ_n$, where $\bZ_n$ is
assumed to follow a known distribution symmetric about {\bf 0}. First, let's assume that 
$\sigma^2$ is known. 
Let $T(\bY_n)$ be the OLS estimator of $\bbeta$, which makes use of all $n$ samples. 
Let $\pi_T$ denote the distribution of $T(\bY_n)$, and  
 let $\mu_T^{(i)}=\eta^{(i)}(\bmu_y)$ denote its $i^{th}$ moment for $i=1,2,\ldots, k$.  
 We can regard $T(\bY_n- \sigma \bz)$ as an 
 inverse function for $\bbeta$. Consequently, by (\ref{EFDeq}), 
 the EFD of $\bbeta$, associated with $T(\bY_n- \sigma\bz)$, has the $i^{th}$ moment given by $\eta^{(i)}(\bY_n)$ for $i=1,2,\ldots,k$. Furthermore, EFI shares the same distribution $\pi_T$ as the frequentist method for quantifying the uncertainty of $\bbeta$.
 
 If $\sigma^2$ is unknown, we can follow the same procedure as described in Example 1 to find 
 the EFD for $\sigma^2$. Finally, we can obtain the EFD for $\bbeta$ by completing an integration similar to (\ref{integraleq}).
 \end{remark}

Theorem \ref{corZero} implies that for an additive noise model, if any 
inverse function $\tilde{G}(\cdot)$ is known, then 
$p_n^*(\bz|\bX_n,\bY_n)$ can be directly simulated from $\pi_0^{\otimes n}(\bz)$ subject to the constraint $U_n(\bz)=0$ using a constrained Monte Carlo algorithm.  
Furthermore, for $b(\btheta)$, the empirical EFD 
associated with a known inverse function $G(\cdot)$ can be constructed based on the samples simulated from $p_n^*(\bz|\bX_n,\bY_n)$.

\section{Extended Fiducial Inference with a Sparse DNN Inverse Function} \label{EFIsect}

As implied by Theorem \ref{corPositive}, Lemma \ref{lemma:hwang3.1}, 
and Theorem \ref{corZero},  conducting fiducial inference requires finding appropriate inverse functions. However, in practice, these inverse functions are typically very difficult to determine. 
To address this issue,  we propose approximating $\tilde{G}(\cdot)$ with a sparse DNN and employing an adaptive stochastic gradient MCMC algorithm to simultaneously simulate from $p_n^*(\bz|\bX_n,\bY_n)$ and train the sparse DNN.  Then an empirical EFD of $b(\btheta)$ associated with $G(\cdot)=\tilde{G}(\cdot)$ can be constructed based on the $\bz$-samples simulated from $p_n^*(\bz|\bX_n,\bY_n)$.  
We call this proposed algorithm the EFI-DNN algorithm.

\subsection{The EFI-DNN Algorithm} \label{sect:EFI-DNN}



 \begin{figure}[!ht]
    \centering
    \includegraphics[width=0.45\textwidth]{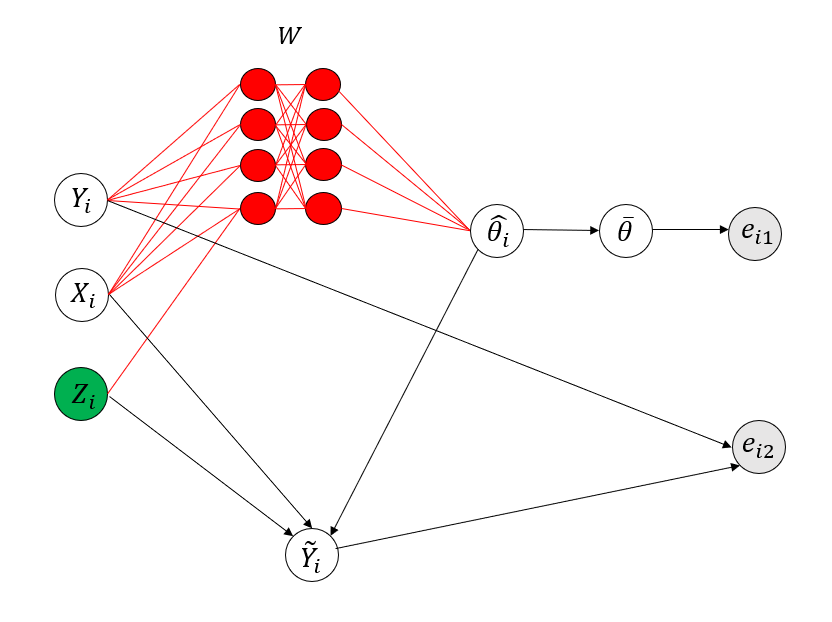}
    \caption{Illustration of the EFI network, where the red nodes and links form a DNN (parameterized by the weights $\bw$) to learn, the green node represents latent variables to impute, and the black lines represent deterministic functions. }
    \label{EFInetwork}
\end{figure}

The entire structure of the algorithm is depicted by the so-called EFI network, as shown in Figure \ref{EFInetwork}. 
Let $\hat{\btheta}_i:=\hat{g}(y_i,x_i,z_i,\bw)$ denote the DNN prediction function parameterized by the weights $\bw$ in the EFI network, and let 
\begin{equation} \label{thetabareq}
\bar{\btheta}_n:=\frac{1}{n} \sum_{i=1}^n \hat{\btheta}_i=\frac{1}{n} \sum_{i=1}^n \hat{g}(y_i,x_i,z_i,\bw),
\end{equation}
which works as an estimator for the inverse function $G(\bY_n,\bX_n,\bZ_n)$. 
Henceforth, we will call $\bar{\btheta}_n$ an EFI-DNN estimator of $G(\bY_n,\bX_n,\bZ_n)$. 
The EFI network has two output nodes defined, respectively, by 
\begin{equation} \label{outputeq}
\begin{split} 
e_{i1} &:=\|\hat{\btheta}_i-\bar{\btheta}_n\|^2, \\
e_{i2} &:=d(y_i,\tilde{y}_i):=d(y_i,x_i, z_i, \hat{\btheta}_i), \\
\end{split}
\end{equation}
where $\tilde{y}_i=f(x_i,z_i,\hat{\btheta}_i)$,  the function $f(\cdot)$ is as defined in (\ref{dataGeneq}), and $d(\cdot)$ is a function that measures the difference between $y_i$ and $\tilde{y}_i$. With a slight abuse of notation, we rewrite 
$d(y_i,\tilde{y}_i)$ as a function of
$y_i$, $x_i$, $z_i$, and $\hat{\btheta}_i$. For example, for  normal linear/nonlinear regression, we define 
\[
d(y_i,x_i,z_i,\hat{\btheta}_i)=\|y_i-f(x_i,z_i,\hat{\btheta}_i)\|^2.
\]
For logistic regression, we define 
$d(y_i,x_i,z_i,\hat{\btheta}_i)$ via a ReLu function, see Section \ref{EFI:logistic} of the supplement.
  
For the EFI network, we consider $\bw$ as the parameters to estimate, $\bZ_n$ as the latent variable (or missing data) to impute, $(\bX_n,\bY_n)$ as the observed data (or incomplete data), and $(\bX_n,\bY_n,\bZ_n)$ as the complete data. Regarding the EFI network, we have a few further remarks.  

\begin{remark} \label{rem:DNN}
The DNN in the EFI network is a fully-connected feedforward neural network, which maps $(y_i,x_i,z_i)$ to $\hat{\btheta}_i$ for each $i \in \{1,2,\ldots,n\}$. Both the depths and widths of the DNN can increase with the sample size $n$ but under a constraint as given in Assumption \ref{ass9}-(ii-1) (in the supplement). To ensure Assumption \ref{ass:hwang1b}-(ii), the activation function needs to be continuously differentiable and,   
therefore, can be chosen from options like {\it tanh}, softplus or sigmoid. In practice, the ReLU activation function can also be used, as the resulting energy function is non-continuously differentiable at isolated points only. Consequently, for case (b), we will have
(\ref{theoreticalzdist}) holding almost surely, 
as implied by the proof provided in  
\cite{Hwang1980LaplacesMR}; for case (a),   (\ref{theoreticalzdista})  still holds as the continuously differentiability condition is not required. 
\end{remark}

\begin{remark}
To address the potential overfitting issue in the DNN, we treat $\bw$ in a Bayesian approach. We impose a sparse prior on $\bw$, as given in (\ref{mixtureprior}), based on the sparse deep learning theory in \cite{SunSLiang2021}.
However, this Bayesian treatment is optional, as  $\bw$ can still be consistently estimated 
within the frequentist framework when the training sample size $n$ is sufficiently large.
Furthermore, as discussed in Remark \ref{Remprior}, the prior hyperparameters can be entirely determined by the data through cross-validation, aligning the EFI-DNN algorithm with the principle of fiducial inference. 
Sparse learning enables the EFI-DNN algorithm to exhibit robust performance 
across a wide range of DNNs with different depths and widths, provided they possess sufficient capacity to approximate desired inverse functions. \textcolor{black}{Regarding the interpretability of the sparse DNN, we refer to \cite{Liang2018BNN} and \cite{SunSLiang2021}. 
In the context of EFI networks, the sparse DNN provides a parsimonious approximation to the inverse function. If the inverse function is a sparse neural network function, then its structure 
can be consistently recovered (up to  some loss-invariant transformations).}
\end{remark}

\begin{remark}
While the EFI network shares a similar structure with the fiducial autoencoder used in \cite{li2020deep}, the DNNs in the two works are trained in  different ways. In \cite{li2020deep}, the DNN is pre-trained using data simulated from the model with a wide range of parameter values. In the present work, the DNN is trained concurrently with the imputation of latent variables. 
\end{remark}

 Let $\pi(\bw)$ denote the prior density function of $\bw$, and let $\pi(\bY_n,\bZ_n|\bX_n,\bw)$ denote 
 the conditional density function of $(\bY_n,\bZ_n)$ given $(\bX_n,\bw)$.  
 The form of $\pi(\bw)$ will be detailed later; as discussed in Remark \ref{Remprior}, $\pi(\bw)$ should be chosen such that $\bar{\btheta}_n$ forms a consistent estimator for the inverse mapping  $G(\bY_n,\bX_n,\bZ_n)$.
 We propose to estimate $\bw$ by maximizing the posterior distribution 
 $\pi(\bw|\bX_n,\bY_n) \propto  \pi(\bw) \int \pi(\bY_n,\bZ_n|\bX_n,\bw) d\bZ_n$. 
 This can be done by solving the equation
 \begin{equation} \label{solutioneq0}
 \nabla_{\bw} \log \pi(\bw|\bX_n,\bY_n)=0.
 \end{equation}
 Further, by the Bayesian version of Fisher's identity, see Lemma 1 of \cite{SongLiang2020eSGLD},  (\ref{solutioneq0}) can be expressed as  
 \begin{equation} \label{identityeq}
\nabla_{\bw} \log \pi(\bw|\bX_n,\bY_n)=\int  \nabla_{\bw} \log \pi(\bw|\bX_n,\bY_n,\bZ_n) \pi(\bZ_n|\bX_n,\bY_n,\bw) d\bw=0. 
\end{equation} 
 To define  $\pi(\bw|\bX_n,\bY_n,\bZ_n)$ and $\pi(\bZ_n|\bX_n,\bY_n,\bw)$, we first define a scaled energy function for the distribution  $\pi(\bY_n|\bX_n,\bZ_n,\bw)$, up to an additive constant and a multiplicative constant:  
 \begin{equation} \label{energyfunction00}
 \tilde{U}_n(\bZ_n,\bw;\bX_n,\bY_n)=\eta \sum_{i=1}^n \|\hat{\btheta}_i-\bar{\btheta}_n\|^2 + \sum_{i=1}^n 
 d(y_i,x_i,z_i,\hat{\btheta}_i), 
 \end{equation}
where the first term serves as 
a penalty function enforcing $\hat{\btheta}_i$'s to converge to the same value, 
 and $\eta>0$ is a regularization parameter.
This penalty allows us to address possible non-uniqueness  of  
the inverse functions $\hat{\btheta}_i=\hat{g}(y_i,x_i,z_i,\bw)$ for $i=1,2,\ldots,n$. 
Let 
\[
\pi(\bY_n|\bX_n,\bZ_n,\bw) = C e^{-\lambda \tilde{U}_n(\bZ_n,\bw;\bX_n,\bY_n)},
\]
for some constants $C>0$ and $\lambda>0$. 
Then we have the following conditional distributions: 
\begin{equation} \label{eqA12}
\begin{split}
\pi(\bw|\bX_n,\bY_n,\bZ_n) &  
\propto \pi(\bw) e^{- \lambda \tilde{U}_n(\bZ_n,\bw;\bX_n,\bY_n) }, \\
 \pi(\bZ_n|\bX_n,\bY_n,\bw) & 
 \propto \pi_0^{\otimes n}(\bZ_n) e^{- \lambda \tilde{U}_n(\bZ_n,\bw;\bX_n,\bY_n)},
 \end{split} 
\end{equation} 
where $\lambda$ is a tuning  parameter resembling the inverse of the temperature 
in (\ref{z-dist}), and $\pi_0^{\otimes n}(\bZ_n)$ is the marginal distribution of $\bZ_n$ in the space  $\mathbb{R}^n$. 
With respect to the EFI network, we call $\pi(\bY_n|\bX_n,\bZ_n,\bw)$, $\pi(\bw|\bX_n,\bY_n,\bZ_n)$, and $\pi(\bZ_n|\bX_n,\bY_n,\bw)$
 the complete-data likelihood function, the complete-data posterior distribution, and the missing-data predictive distribution, respectively. 


\begin{remark} \label{Remarkeq}
Alternative to (\ref{energyfunction00}), we can define the energy function as
\begin{equation} \label{energyfunction11}
 \tilde{U}_n^{\prime}(\bZ_n,\bw; \bX_n,\bY_n) = \eta \sum_{i=1}^n\| \hat{\btheta}_i- \bar{\btheta}_n \|^2 +  \sum_{i=1}^n d(y_i,x_i,z_i,\bar{\btheta}_n). 
\end{equation}
Without confusion, we will refer to the EFI-DNN algorithm with the energy functions (\ref{energyfunction00}) and (\ref{energyfunction11}) as EFI-a (alternative version) and EFI (default version), respectively, in the remaining of the paper. Compared to (\ref{energyfunction11}), (\ref{energyfunction00}) is more regular, where the fitting errors are assumed to be mutually independent given $\bw$. 
As $\lambda \to \infty$,
EFI-a and EFI are asymptotically equivalent, leading to the same zero-energy set. 
\end{remark}

With the distributions given in (\ref{eqA12}), Eq. (\ref{identityeq}) is now well defined and 
can be solved using an adaptive stochastic gradient MCMC algorithm 
\cite{deng2019adaptive,DongZLiang2022,LiangSLiang2022}. 
The algorithm works by iterating between the following two steps, where $k$ indexes the iterations: 
\begin{itemize} 
\item[(a)] ({\it Latent variable sampling}) Generate $\bZ_n^{(k+1)}$ from a transition kernel induced by a stochastic gradient MCMC algorithm. For example, we can simulate $\bZ_n^{(k+1)}$ 
using the stochastic gradient 
Langevin dynamics (SGLD) algorithm \citep{Welling2011BayesianLV}: 
\begin{equation} \label{Algeq1}
\begin{split}
\bZ_n^{(k+1)} &= \bZ_n^{(k)} +\epsilon_{k+1} \widehat{\nabla}_{\bz_n} \log \pi(\bZ_n^{(k)}|\bX_n,\bY_n,\bw^{(k)}) +\sqrt{2 \tau \epsilon_{k+1}} \be^{(k+1)}, 
\end{split}
\end{equation}
where $\be^{(k+1)} \sim N({\bf 0}, I_{d_{\bz}})$ is a standard Gaussian random vector of dimension $d_{\bz}$, $\epsilon_{k+1}$ is the learning rate,  
$ \widehat{\nabla}_{\bZ_n} \log \pi(\bZ_n^{(k)}|\bX_n,\bY_n,\bw^{(k)})$ 
denotes an unbiased estimator of $\nabla_{\bZ_n} \log \pi(\bZ_n^{(k)}|\bX_n,\bY_n$, $\bw^{(k)})$, 
and $\tau$ is the temperature that is generally set to 1 in simulations. 
 
 \item[(b)] ({\it Parameter updating}) Update the estimate of $\bw$ by stochastic gradient descent (SGD):  
 \begin{equation} \label{Algeq2}
 \bw^{(k+1)}=\bw^{(k)}+ \frac{\gamma_{k+1}}{n} \widehat{\nabla}_{\bw} \log \pi(\bw^{(k)}|\bX_n,\bY_n,\bZ_n^{(k+1)}),
 \end{equation}
 where $\gamma_{k+1}$ denotes the step size of 
 stochastic approximation \citep{robbins1951stochastic}, and 
 $\widehat{\nabla}_{\bw} \log \pi(\bw^{(k)}|\bX_n,\bY_n,\bZ_n^{(k+1)})$ denotes an unbiased estimator of $\nabla_{\bw} \log \pi(\bw^{(k)}|\bX_n,\bY_n,\bZ_n^{(k+1)})$.
 \end{itemize}
 
 The algorithm is referred to as ``adaptive'' as the transition kernel in step (a) changes along with the update of $\bw$.
 Applying the adaptive SGLD algorithm to the EFI network leads to Algorithm \ref{EFIalgorithm}, where the parameter updating step is implemented with mini-batches, and a fiducial sample collection step is added. Note that, given the current estimate of $\bw$,
 the latent variable sampling step can be executed in parallel for each observation $(x_i,y_i)$.
  Therefore, the whole algorithm is scalable  
  with respect to big data.

 \begin{algorithm}[!ht]
 \caption{Adaptive SGLD for Extended Fiducial Inference}
 \label{EFIalgorithm}
 \SetAlgoLined 
 {\bf (i) (Initialization)} Initialize the DNN weights $\bw^{(0)}$ and the latent variable $\bZ_n^{(0)}$. set $M$ as the number of fiducial samples to collect.
Let $\mK$ denote the number iterations to perform in the burn-in period, and let $\mK+M$ be the total number of iterations to perform in a run.  

\For{k=1,2,\ldots,\mbox{$\mK+M$}}{

 {\bf (ii) (Latent variable sampling)} Given $\bw^{(k)}$, simulate $\bZ_n^{(k+1)}$ by the SGLD algorithm: 
  \begin{equation} \label{SGLDtempEq}
\begin{split}
\bZ_n^{(k+1)} &= \bZ_n^{(k)} +\epsilon_{k+1} \nabla_{\bZ_n} \log \pi(\bZ_n^{(k)}|\bZ_n,\bY_n,\bw^{(k)}) +\sqrt{2 \tau \epsilon_{k+1}} \be^{(k+1)}, 
\end{split}
\end{equation}
where $\be^{(k+1)} \sim N(0,I_{d_{\bz}})$, $\epsilon_{k+1}$ is the learning rate, and 
$\tau=1$ is the temperature.

 {\bf (iii) (Parameter updating)} Draw a minibatch $\{(y_1,x_1,z_1^{(k)}),\ldots,(y_m,x_m,z_m^{(k)})\}$ 
 and update the network weights by the SGD algorithm: 
\begin{equation}
\label{sgd_update}
\bw^{(k+1)}=\bw^{(k)} +\gamma_{k+1} \left[ \frac{n}{m} \sum_{i=1}^m \nabla_{\bw} \log \pi(y_i|x_i,z_i^{(k)},\bw^{(k)})+ \nabla_{\bw} \log \pi(\bw^{(k)}) \right],
\end{equation}
where $\gamma_{k+1}$ is the step size, and $\log \pi(y_i|x_i,z_i^{(k)},\bw^{(k)})$ can be appropriately defined according to (\ref{energyfunction00}) or (\ref{energyfunction11}). 
  
{\bf (iv) (Fiducial sample collection)} If $k+1 > \mK$, calculate 
 $\hat{\btheta}_i^{(k+1)}=\hat{g}(y_i,x_i,z_i^{(k+1)},\bw^{(k+1)})$ 
 for each $i\in \{1,2,\ldots,n\}$ and average them to get a fiducial $\bar{\btheta}_n$-sample as calculated in (\ref{thetabareq}). 
}

{\bf (v) (Statistical Inference)} Conducting statistical inference for the model based on the collected fiducial samples. 
\end{algorithm}

\subsection{Convergence Theory of the EFI-DNN Algorithm} \label{theorysect}

To indicate the dependency of $\bw$ on the sample size $n$, we rewrite $\bw$  as $\bw_n$ in this subsection.
\textcolor{black}{We note that the theoretical study is conducted under the assumption 
that the EFI network has been correctly specified such that there exists a sparse solution $\tilde{\bw}_n^*$, at which  $(\bX_n,\bY_n,\bZ_n^*)$ can be generated from the EFI network; specifically, $\bZ_n^* \sim \pi(\bZ|\bX_n,\bY_n,\tilde{\bw}_n^*)$ holds, where $\bZ_n^*$ represents the values of the latent variables realized in the observations.} 
The convergence of the EFI-DNN algorithm is studied in a few steps. 
First, we show in Theorem \ref{thm1} that $\|\bw_n^{(k)} -\bw_n^* \|\stackrel{p}{\to} 0$ as $k\to \infty$, where $\bw_n^*$ is a solution to (\ref{solutioneq0}) and $\stackrel{p}{\to}$ denotes convergence in probability. 
Second, we show in Theorem \ref{thm2} 
that $\bZ_n^{(k)}$ converges weakly to $\pi(\bZ_n|\bX_n,\bY_n,\bw_n^*)$ 
in 2-Wasserstein distance as $k \to \infty$. 
 Third,  we show in Theorem \ref{thm3} and the followed discussions that  with 
an appropriate choice of the prior distribution $\pi(\bw_n)$ and as $n\to \infty$ and $\lambda\to \infty$, $\hat{g}(y_i,x_i,z_i,\bw_n^*)$ constitutes a consistent estimator of $\btheta^*$ and, 
subsequently, the EFI-DNN estimator 
\begin{equation} \label{targetest}
\bar{\btheta}_n^*:=\frac{1}{n} \sum_{i=1}^n \hat{g}(y_i,x_i,z_i,\bw_n^*),
\end{equation}
constitutes a consistent estimator for the inverse mapping $G(\bY_n,\bX_n,\bZ_n)$.
By summarizing the three theorems, we conclude that the EFI-DNN algorithm 
 leads to valid uncertainty quantification for $\btheta$.  
Finally, we show that if $\bar{\btheta}_n^*$ is consistent, $\pi(\bZ|\bX_n,\bY_n,\bw_n^*)$
is reduced to the extended fiducial distribution of $\bZ_n$ as defined in Section \ref{EFDsection}.

\subsubsection{Convergence of Algorithm \ref{EFIalgorithm}}

\begin{theorem} \label{thm1} Suppose Assumptions \ref{ass1}-\ref{ass5} (in the supplement) hold. If we set the learning rate sequence $\{\epsilon_k: k=1,2,\ldots\}$ and the step size sequence $\{\gamma_k: k=1,2,\ldots\}$ in the form $\epsilon_k=\frac{C_{\epsilon}}{c_{\epsilon}+k^{\alpha}}$ and 
$\gamma_k=\frac{C_{\gamma}}{c_{\gamma}+k^{\beta}}$ for some constants $C_{\epsilon}>0$, $c_{\epsilon}>0$, $C_{\gamma}>0$ and $c_{\gamma}>0$, $\alpha,\beta\in (0,1]$, and $\beta \leq \alpha \leq \min\{1, 2 \beta\}$, then there exists a root $\bw_n^* \in \{\bw: \nabla_{\bw} \log \pi(\bw|\bX_n,\bY_n)=0\}$ such that 
\[
\mathbb{E} \|\bw_n^{(k)}-\bw_n^*\|^2 \leq \xi \gamma_k, \quad k \geq k_0,
\]
for some constant $\xi>0$ and iteration number $k_0>0$.
\end{theorem}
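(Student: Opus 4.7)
The plan is to recognize Algorithm \ref{EFIalgorithm} as an adaptive stochastic approximation scheme in which the inner SGLD step produces a sample from a state-dependent distribution $\pi(\bZ_n|\bX_n,\bY_n,\bw)$, while the outer SGD step (\ref{sgd_update}) performs stochastic approximation on the mean field
\[
h(\bw) \;=\; \int \nabla_{\bw}\log\pi(\bw|\bX_n,\bY_n,\bZ_n)\,\pi(\bZ_n|\bX_n,\bY_n,\bw)\,d\bZ_n,
\]
which, by the Bayesian Fisher identity (\ref{identityeq}), equals $\nabla_{\bw}\log\pi(\bw|\bX_n,\bY_n)$. Hence any zero of $h$ is a root $\bw_n^*$ of (\ref{solutioneq0}), and the task reduces to proving that the coupled iterates $(\bw_n^{(k)},\bZ_n^{(k)})$ track a solution of the mean-field ODE at the claimed rate. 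This places the argument squarely in the framework of adaptive SGMCMC developed in \cite{deng2019adaptive,LiangSLiang2022,DongZLiang2022}.

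Following that framework, I would decompose the SGD increment as
\[
\bw_n^{(k+1)} - \bw_n^{(k)} \;=\; \gamma_{k+1}\,h(\bw_n^{(k)}) + \gamma_{k+1}\,\xi_{k+1} + \gamma_{k+1}\,\zeta_{k+1},
\]
where $\xi_{k+1}$ is a martingale-difference term that aggregates the mini-batch noise and the centered one-step SGLD fluctuation, and $\zeta_{k+1}$ is the bias arising because the law of $\bZ_n^{(k+1)}$ differs from $\pi(\cdot|\bX_n,\bY_n,\bw_n^{(k)})$. The bias $\zeta_{k+1}$ is controlled by a Poisson-equation argument: for each frozen $\bw$ there exists $\mu_{\bw}$ solving $\mu_{\bw}-P_{\bw}\mu_{\bw} = \nabla_{\bw}\log\pi(\bw|\bX_n,\bY_n,\cdot)-h(\bw)$, and uniform-in-$\bw$ geometric ergodicity of the SGLD kernel $P_{\bw}$ together with Lipschitz dependence of $\mu_{\bw}$ on $\bw$ yields a bound of order $O(\epsilon_{k+1}^{1/2})+O(|\gamma_{k+1}-\gamma_k|/\epsilon_{k+1})$. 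Combining these pieces with a local one-point stability $\langle \bw-\bw_n^*,h(\bw)\rangle \le -c\|\bw-\bw_n^*\|^2$ around $\bw_n^*$ (codified in Assumptions \ref{ass1}-\ref{ass5}) gives the recursion
\[
\mathbb{E}\|\bw_n^{(k+1)}-\bw_n^*\|^2 \;\le\; (1-c\gamma_{k+1})\,\mathbb{E}\|\bw_n^{(k)}-\bw_n^*\|^2 + C\gamma_{k+1}^2 + C\gamma_{k+1}\epsilon_{k+1} + C\gamma_{k+1}^3/\epsilon_{k+1}^2.
\]
The step-size conditions $\alpha\in(0,1]$, $\beta\in(0,1]$, and $\beta\le\alpha\le 2\beta$ were chosen precisely so that $\gamma_k^2=O(\gamma_k\cdot\gamma_k)$, $\epsilon_k=O(\gamma_k^{\alpha/\beta-1}\cdot\gamma_k)$ is at most $O(\gamma_k)$ when $\alpha\le 2\beta$, and $\gamma_k^2/\epsilon_k^2$ is again $O(\gamma_k)$; a discrete Gr\"onwall-type lemma then delivers $\mathbb{E}\|\bw_n^{(k)}-\bw_n^*\|^2\le \xi\gamma_k$ for all $k\ge k_0$.

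The main obstacle I anticipate is establishing the two ingredients underlying the Poisson-equation argument: uniform-in-$\bw$ geometric ergodicity of $P_{\bw}$ and Lipschitz regularity of $\mu_{\bw}$ with respect to $\bw$. Because $\log\pi(\bZ_n|\bX_n,\bY_n,\bw)$ is highly non-convex in $\bZ_n$ (the DNN composition renders the landscape multimodal), the standard drift-and-minorization route has to exploit the Gaussian tail of $\pi_0^{\otimes n}$ to dominate the DNN-driven drift outside a compact set, and then transfer Lipschitz continuity in $\bw$ from the drift to the invariant measure and to the Poisson solution. These are precisely the points where the smoothness and boundedness hypotheses on $\hat g$ and its derivatives, together with the sparsity-inducing prior $\pi(\bw)$, enter the proof and must be used carefully to close the argument without an exponential dependence on $n$.
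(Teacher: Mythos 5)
Your overall architecture is exactly the paper's: the paper proves Theorem \ref{thm1} by observing that Algorithm \ref{EFIalgorithm} is an instance of adaptive (pre-conditioned) SGLD and invoking Theorem A.1 of \cite{DongZLiang2022}, whose internal argument is precisely your decomposition into mean field, martingale noise, and Poisson-equation-controlled bias, followed by the one-point-stability recursion and a Gr\"onwall-type lemma. Two caveats. First, the ``main obstacle'' you anticipate --- establishing uniform geometric ergodicity of the SGLD kernel and Lipschitz regularity of the Poisson solution $\mu_{\bw}$ --- is not something you need to (or, under the stated hypotheses, should) prove: Assumption \ref{ass5} directly posits the existence of $\mu_{\bw}$ together with the growth and Lipschitz bounds $\|\mu_{\bw}(\bz)-\mu_{\bw'}(\bz)\|+\|\mathcal{T}_{\bw}\mu_{\bw}(\bz)-\mathcal{T}_{\bw'}\mu_{\bw'}(\bz)\|\leq \varsigma_1\|\bw-\bw'\|V(\bz)$, so the drift-and-minorization program you sketch is outside the scope of the theorem as stated. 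Second, your closing arithmetic does not check out as written: with $\epsilon_k\asymp k^{-\alpha}$ and $\gamma_k\asymp k^{-\beta}$ one has $\gamma_k^2/\epsilon_k^2\asymp k^{2\alpha-2\beta}$, which is bounded away from zero (and growing when $\alpha>\beta$) under $\beta\leq\alpha$, so it is not $O(\gamma_k)$, and the term $C\gamma_{k+1}^3/\epsilon_{k+1}^2$ in your recursion is not $O(\gamma_{k+1}^2)$ under $\beta\leq\alpha\leq 2\beta$; the condition $\alpha\leq 2\beta$ actually enters to keep $\gamma_k^2/\epsilon_k$ bounded (used for the uniform $L^2$ bounds of Lemma \ref{l2bound}), and the bias from the Poisson-equation telescoping must be bounded more sharply, as in \cite{DongZLiang2022}, for the recursion to deliver $\mathbb{E}\|\bw_n^{(k)}-\bw_n^*\|^2\leq\xi\gamma_k$.
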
 

Since the adaptive SGLD algorithm can be viewed as a special case of the adaptive pre-conditioned SGLD algorithm \citep{DongZLiang2022}, Theorem \ref{thm1} can be proved by following the proof of 
Theorem A.1 of \cite{DongZLiang2022} with minor  modifications. Regarding the convergence rate of the algorithm, \cite{DongZLiang2022} provides an explicit form of $\xi$. To make the presentation concise, we omit it in the paper.

Let $\pi^*=\pi(\bZ_n|\bX_n,\bY_n,\bw_n^*)$, let $T_k = \sum_{i=0}^{k-1}\epsilon_{i+1}$,  and let $\mu_{T_k}$ denote the probability law of $\bZ_n^{(k)}$.
   Theorem \ref{thm2} establishes convergence of $\mu_{T_k}$ in 2-Wasserstein distance. 

\begin{theorem} \label{thm2}  
Suppose Assumptions \ref{ass1}-\ref{ass6} (in the supplement) hold, and 
$\{\epsilon_k\}$ and $\{\gamma_k\}$ are set as in Theorem \ref{thm1}. Then,   
for any $k \in \mathbb{N}$, 
\[  
\mathbb{W}_2(\mu_{T_k}, \pi^*) \leq (\hat{C}_0 \delta_g^{1/4}+\tilde{C}_1 \gamma_1^{1/4})T_k + \hat{C}_2 e^{-T_k/c_{LS}},
\]
for some positive constants $\hat{C}_0$, $\hat{C}_1$, and $\hat{C}_2$, where $\mathbb{W}_2(\cdot,\cdot)$ denotes the 2-Wasserstein distance, $c_{LS}$ denotes the logarithmic Sobolev constant of $\pi^*$, and $\delta_g$ is a coefficient 
as defined in Assumption \ref{ass3} and reflects the variation of the stochastic gradient $\widehat{\nabla}_{\bZ_n} \log \pi(\bZ_n^{(k)}|\bX_n,\bY_n,\bw^{(k)})$.
\end{theorem}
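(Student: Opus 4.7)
The plan is to compare $\mu_{T_k}$, the law of the discrete adaptive SGLD iterate $\bZ_n^{(k)}$, with the target $\pi^*=\pi(\bZ_n|\bX_n,\bY_n,\bw_n^*)$ by interpolating through an auxiliary continuous-time Langevin diffusion $\{\bar{\bZ}_t\}_{t\ge 0}$ governed by the frozen drift $\nabla_{\bz}\log\pi(\bz|\bX_n,\bY_n,\bw_n^*)$, which has $\pi^*$ as invariant measure. Letting $\nu_{T_k}$ denote the law of $\bar{\bZ}_{T_k}$ initialized at the law of $\bZ_n^{(0)}$, the triangle inequality gives
\[
\mathbb{W}_2(\mu_{T_k},\pi^*) \le \mathbb{W}_2(\mu_{T_k},\nu_{T_k}) + \mathbb{W}_2(\nu_{T_k},\pi^*),
\]
so it suffices to bound the two terms separately.

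For the second term I would invoke the logarithmic Sobolev inequality for $\pi^*$ (with constant $c_{LS}$, guaranteed by the smoothness and dissipativity conditions of Assumptions \ref{ass1}--\ref{ass6} in the supplement) together with the Otto--Villani exponential contraction theorem, yielding $\mathbb{W}_2(\nu_{T_k},\pi^*)\le \hat{C}_2\, e^{-T_k/c_{LS}}$. This supplies the exponential-decay contribution in the stated bound.

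For the first term I would set up a synchronous coupling, driving both $\bZ_n^{(k)}$ and $\bar{\bZ}_t$ by the same Brownian increments on the common time grid $\{T_k\}$. The single-step coupling discrepancy then decomposes into three sources: (i) an Euler discretization error of the Langevin SDE over a step of size $\epsilon_k$; (ii) a stochastic gradient noise term whose variance is controlled by $\delta_g$ via Assumption \ref{ass3}; and (iii) a drift-mismatch term arising from using $\bw^{(k)}$ in place of $\bw_n^*$, whose $L^2$ size is $O(\gamma_k^{1/2})$ by Theorem \ref{thm1}. Using joint Lipschitz continuity of $\nabla_{\bz}\log\pi(\bz|\bX_n,\bY_n,\bw)$ in $(\bz,\bw)$ (Assumptions \ref{ass1}--\ref{ass2}), I would propagate these per-step errors by a Gronwall-type recursion, then pass from $L^2$ coupling bounds to $\mathbb{W}_2$ by taking square roots; summing the squared single-step errors yields an $O(T_k^2)$ bound in the squared Wasserstein metric and hence the linear-in-$T_k$ prefactor $(\hat{C}_0\delta_g^{1/4}+\tilde{C}_1\gamma_1^{1/4})T_k$ on the Wasserstein distance, where the uniform bound $\mathbb{E}\|\bw^{(k)}-\bw_n^*\|^2\le \xi\gamma_k\le\xi\gamma_1$ from Theorem \ref{thm1} is used to replace $\gamma_k$ by its worst-case value $\gamma_1$.

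The main obstacle is that the drift of the $\bZ$-chain is not frozen but depends on the time-varying, history-dependent $\bw^{(k)}$, so standard non-adaptive SGLD Wasserstein analyses do not apply verbatim. The structure that makes the argument go through is the two-time-scale regime $\beta\le\alpha\le\min\{1,2\beta\}$ built into the learning rate schedules: via Theorem \ref{thm1} it provides the sharp $L^2$ rate on $\bw^{(k)}-\bw_n^*$ needed to absorb (iii) into the coupling, and it ensures that the $\bZ$-chain effectively sees a slow perturbation of the frozen drift rather than an arbitrary non-autonomous one. With these ingredients the argument reduces to that of Theorem A.2 of \cite{DongZLiang2022} for adaptive preconditioned SGLD, modified in a straightforward way to the EFI-DNN drift $\nabla_{\bz}\log\pi(\bz|\bX_n,\bY_n,\bw)$.
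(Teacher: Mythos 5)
Your overall architecture is the same as the paper's: split $\mathbb{W}_2(\mu_{T_k},\pi^*)$ by the triangle inequality through the law $\nu_{T_k}$ of a continuous-time Langevin diffusion with frozen drift $\nabla_{\bz}\log\pi(\bz|\bX_n,\bY_n,\bw_n^*)$, and control $\mathbb{W}_2(\nu_{T_k},\pi^*)\le \hat{C}_2e^{-T_k/c_{LS}}$ via the logarithmic Sobolev inequality (the paper follows Proposition 10 of Raginsky et al.\ for exactly this step). The divergence is in how the discretization term $\mathbb{W}_2(\mu_{T_k},\nu_{T_k})$ is handled, and here your proposal has a genuine gap. You propose a synchronous coupling plus a Gronwall recursion on the $L^2$ coupling distance. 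Under the paper's assumptions the drift is only $M$-smooth and dissipative --- it is \emph{not} assumed strongly log-concave --- so the one-step recursion has the form $\Delta_{k+1}^{1/2}\le(1+M\epsilon_{k+1})\Delta_k^{1/2}+\epsilon_{k+1}(\mbox{per-step errors})$, and iterating produces a prefactor $\prod_j(1+M\epsilon_j)\approx e^{MT_k}$. The claimed $O(T_k^2)$ bound on the squared Wasserstein distance, and hence the linear-in-$T_k$ prefactor, does not follow from this argument; you would need a contraction (strong convexity, or a reflection/sticky coupling with additional structure) that the assumptions do not supply. A secondary symptom that the mechanism is wrong: a direct $L^2$ coupling against the variance bound $\mathbb{E}\|R_k\|^2\le\delta_g(\cdots)$ of Assumption A3 would yield a $\delta_g^{1/2}$ dependence, whereas the theorem's $\delta_g^{1/4}$ and $\gamma_1^{1/4}$ arise from taking square roots twice.

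What the paper actually does (Lemmas S3--S5 in the supplement) is to bound the \emph{Kullback--Leibler divergence on path space} between the continuous-time interpolation of the adaptive SGLD chain and the frozen-drift diffusion via the Girsanov formula; the KL divergence is additive over time with no Gronwall factor, giving $D_{\mathrm{KL}}(\mu_{T_k}\|\nu_{T_k})\le(C_0\delta_g+C_1\gamma_1)T_k$ after the data-processing inequality, where the drift mismatch from $\bw^{(k)}\ne\bw_n^*$ is absorbed using $\mathbb{E}\|\bw_k-\bw^*\|^2\le\xi\gamma_k$ from Theorem 4.1 exactly as you anticipate. It then converts KL to $\mathbb{W}_2$ via the weighted transportation-cost inequality of Bolley and Villani, $\mathbb{W}_2^2\le CT_k\bigl(D_{\mathrm{KL}}+\sqrt{D_{\mathrm{KL}}}\bigr)$, which is where the factor $T_k$ and the quartic roots of $\delta_g$ and $\gamma_1$ come from. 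If you replace your coupling-plus-Gronwall step with this Girsanov/transportation-inequality route, the rest of your outline goes through.
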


We use the full data in the sampling step such that $\delta_g=0$, choose $\alpha \in (0,1]$, 
and choose $\gamma_1 \prec \frac{1}{T_k^4}$ for any $T_k$, which ensures $\mathbb{W}_2(\mu_{T_k}, \pi^*) \to 0$ as $k\to \infty$.

\subsubsection{On the Consistency of $\bar{\btheta}_n^*$}

 Let $\mathcal{W}_n \subset \mathbb{R}^{d_w}$ denote the space of $\bw_n$, where $d_w$ denotes the dimension of $\bw_n$. Let each component of $\bw_n$ be subject to a truncated mixture Gaussian distribution with the density function given by
 \begin{equation} \label{mixtureprior}
 \pi(w_n^{(i)}) =  \rho_n f(w_n^{(i)};0,\sigma_{1,n}^2)+(1-\rho_n) f(w_n^{(i)};0,\sigma_{0,n}^2), \quad w_n^{(i)} \in \mathcal{W}_n^{(i)}, \quad i=1,2,\ldots,d_w,
 \end{equation}
 where  $\mathcal{W}_n^{(i)} \subset \mathbb{R}$ denotes the $i$th component of $\mathcal{W}_n$, 
 $\rho_n$ is the mixture proportion,  $\sigma_{0,n}<\sigma_{1,n}$,  the density function of each component of the mixture distribution is given by 
 \[
 f(w;0,\sigma^2)=\phi(w/\sigma)/\int_{\mathcal{W}_n^{(i)}} [\rho_n \phi(w/\sigma_{1,n})+(1-\rho_n) \phi(w/\sigma_{0,n})] dw, 
 \]
 and $\phi(\cdot)$ denotes the standard Gaussian density function.
 All components of $\bw_n$ are {\it a priori} independent. 
 In our experience, the weights of DNNs often cluster around a small subset near the origin ${\bf 0}$ in the space $\mathbb{R}^{d_w}$. Therefore, it is reasonable to constrain $\mathcal{W}_n$ to a compact set, as stipulated in Assumption \ref{ass7}.
 
 To establish the consistency of $\bar{\btheta}_n^*$, we first define 
 \begin{equation} \label{Q1eq}
 \widehat{\mG}(\bw_n|\tilde{\bw}_n^*):=\frac{1}{n} \log\pi(\bY_n,\bZ_n^*|\bX_n,\bw_n)+\frac{1}{n} \log \pi(\bw_n),
 \end{equation}
 where $\bZ_n^* \sim \pi(\bZ|\bY_n,\bX_n,\tilde{\bw}_n^*)$ as defined previously.  
Therefore,
 \[
 \hat{\bw}_n^*:=\arg\max_{\bw_n\in \mathcal{W}_n} \widehat{\mG}(\bw_n|\tilde{\bw}_n^*),
 \]
 is also the global maximizer of  the log-posterior $\log \pi(\bw_n|\bX_n,\bY_n,\bZ_n^*)$, given the pseudo-complete data. Further, we define 
 \begin{equation} \label{Q2eq}
 \begin{split}
 \widetilde{\mG}(\bw_n|\tilde{\bw}_n^*)& :=\frac{1}{n}\int \log \pi(\bY_n,\bZ_n^*|\bX_n,\bw_n) d\pi(\bZ_n^*|\bX_n,\bY_n,\tilde{\bw}_n^*) +\frac{1}{n} \log\pi(\bw_n) \\ 
 &=  \frac{1}{n} \Big\{ \log\pi(\bw_n|\bX_n,\bY_n) - \int \log \frac{\pi(\bZ_n^*|\bX_n,\bY_n,\tilde{\bw}_n^*)}{ \pi(\bZ_n^*|\bX_n,\bY_n,\bw_n)} d \pi(\bZ_n^*|\bX_n,\bY_n,\tilde{\bw}_n^*) \\
 & + \int \log \pi(\bZ_n^*|\bX_n,\bY_n,\tilde{\bw}_n^*) d \pi(\bZ_n^*|\bX_n,\bY_n,\tilde{\bw}_n^*)+c \Big\},\\
 \end{split}
 \end{equation}
 where $c=\log\int_{\mathcal{W}_n} \pi(\bY_n|\bX_n,\bw_n) \pi(\bw_n)d\bw_n$ is the log-normalizing constant of the posterior $\pi(\bw_n|\bX_n,\bY_n)$.
 In the derivation of (\ref{Q2eq}), $\bX_n$ can be ignored for simplicity as it is constant. For simplicity of notation, we let 
 $D_{KL}(\bw_n)= \int \log \frac{\pi(\bZ_n^*|\bX_n,\bY_n,\tilde{\bw}_n^*)}{ \pi(\bZ_n^*|\bX_n,\bY_n,\bw_n)} d \pi(\bZ_n^*|\bX_n,\bY_n,\tilde{\bw}_n^*)$ be the Kullback-Leibler divergence between $\pi(\bZ_n^*|\bX_n,\bY_n,\tilde{\bw}_n^*)$ and 
 $\pi(\bZ_n^*|\bX_n,\bY_n,\bw_n)$ in what follows. 
 

 Let $Q^*(\bw_n)=\mathbb{E}(\log \pi(Y,Z|X,\bw_n))+\frac{1}{n} \log \pi(\bw_n)$, where the expectation is taken with respect to the joint distribution of $(X,Y,Z)$. 
 Further, by Assumption \ref{ass7} and the weak law of large numbers, 
\begin{equation}\label{eq:sameloss2}
    \frac{1}{n}\log\pi(\bw_n|\bX_n,\bY_n,\bZ_n)-Q^*(\bw_n)\overset{p}{\rightarrow} 0,
\end{equation}
holds uniformly over the parameter space $\mathcal{W}_n$. Assumption \ref{ass8} restricts the shape of $Q^*(\bw_n)$ around the global maximizer, which cannot be discontinuous or too flat. Given nonidentifiability of the neural network model, see e.g. \cite{SunSLiang2021}, we have implicitly assumed that each $\bw_n$  is unique up to the loss-invariant transformations, e.g., reordering the hidden neurons of the same hidden layer and simultaneously changing the signs of some weights and biases. The same assumption has often been used in theoretical studies of neural networks, see e.g. \cite{Liang2018BNN} and  \cite{SunSLiang2021}. 

On the other hand,
by Theorem 1 of \cite{liang2018imputation}, under some regularity conditions 
we have 
 \begin{equation} \label{QQeq}
 \sup_{\bw_n\in \mathcal{W}_n}\left| \widehat{\mG}(\bw_n|\tilde{\bw}_n^*)-\widetilde{\mG}(\bw_n|\tilde{\bw}_n^*) \right| \stackrel{p}{\to} 0, \quad \mbox{as $n\to \infty$}.
 \end{equation} 
Putting (\ref{eq:sameloss2}) and (\ref{QQeq}) together and assuming 
that $Q^*(\bw_n)$ satisfies Assumption \ref{ass8}, then   
we have the following lemma, whose proof is given in the supplement. 

{\color{black}
 \begin{lemma} \label{lemma:equivalent} Suppose Assumptions \ref{ass7}-\ref{ass8} (in the supplement) hold, and
 $\pi(\bY_n,\bZ_n|\bX_n,\bw_n)$ is continuous in $\bw_n$. 
 If $\hat{\bw}_n^*$ is unique, then $\bw_n^*$ that 
 maximizes $\pi(\bw_n|\bX_n,\bY_n)$ and  minimizes $D_{KL}(\bw_n)$ is unique and, subsequently, 
$\|\hat{\bw}_n^*-\bw_n^*\| \stackrel{p}{\to} 0$ holds as $n \to \infty$.
 \end{lemma}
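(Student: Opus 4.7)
The plan is to view $\hat{\bw}_n^*$ and $\bw_n^*$ as the global maximizers of the surrogate objective $\widehat{\mG}(\cdot|\tilde{\bw}_n^*)$ and the limiting objective $\widetilde{\mG}(\cdot|\tilde{\bw}_n^*)$, respectively, and then transfer the uniform closeness (\ref{QQeq}) into closeness of their argmax via a Wald-type consistency argument. As a first step, I would unpack (\ref{Q2eq}) to write
\[
\widetilde{\mG}(\bw_n|\tilde{\bw}_n^*) \;=\; \frac{1}{n}\log\pi(\bw_n|\bX_n,\bY_n) \;-\; \frac{1}{n}D_{KL}(\bw_n) \;+\; \text{(terms free of } \bw_n\text{)}.
\]
Because $D_{KL}(\bw_n)\ge 0$ and the statement of the lemma posits that $\bw_n^*$ simultaneously maximizes $\pi(\bw_n|\bX_n,\bY_n)$ and minimizes $D_{KL}(\bw_n)$, this identifies $\bw_n^*$ as a global maximizer of $\widetilde{\mG}(\cdot|\tilde{\bw}_n^*)$ on $\mathcal{W}_n$. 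Uniqueness of $\bw_n^*$ in the sense of the lemma is therefore equivalent to uniqueness of the argmax of $\widetilde{\mG}(\cdot|\tilde{\bw}_n^*)$.

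Next, I would establish this uniqueness. Combining (\ref{eq:sameloss2}) with Assumption \ref{ass8} gives that the population criterion $Q^*(\bw_n)$ has a well-separated global maximum (modulo the loss-invariant transformations of neural-network weights already acknowledged in the text). Together with (\ref{QQeq}) and the continuity hypothesis on $\pi(\bY_n,\bZ_n|\bX_n,\bw_n)$, this shape condition propagates to $\widetilde{\mG}(\cdot|\tilde{\bw}_n^*)$ for all sufficiently large $n$. Since $\hat{\bw}_n^*$ is assumed unique, the strict ordering between $\widehat{\mG}$ values at $\hat{\bw}_n^*$ and at any nearby competitor is preserved under the uniform limit, forcing $\bw_n^*$ to be unique as well.

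Finally, I would invoke the classical argmax theorem to conclude the consistency claim: the parameter space $\mathcal{W}_n$ is compact by Assumption \ref{ass7}, $\widetilde{\mG}(\cdot|\tilde{\bw}_n^*)$ is continuous in $\bw_n$, its maximizer is well-separated, and $\sup_{\bw_n \in \mathcal{W}_n}|\widehat{\mG}(\bw_n|\tilde{\bw}_n^*) - \widetilde{\mG}(\bw_n|\tilde{\bw}_n^*)| \stackrel{p}{\to} 0$ by (\ref{QQeq}). These ingredients deliver $\|\hat{\bw}_n^* - \bw_n^*\|\stackrel{p}{\to} 0$ by the standard contradiction argument: if along a subsequence $\|\hat{\bw}_n^* - \bw_n^*\|\ge \delta$, compactness extracts a further subsequence with $\hat{\bw}_n^* \to \bar{\bw}\ne \bw_n^*$; uniform convergence then yields $\widetilde{\mG}(\bar{\bw}|\tilde{\bw}_n^*)\ge \widetilde{\mG}(\bw_n^*|\tilde{\bw}_n^*)$, contradicting the well-separated uniqueness of $\bw_n^*$.

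The hard part will be bookkeeping the loss-invariant transformations of the DNN weights: strictly speaking, both $\hat{\bw}_n^*$ and $\bw_n^*$ are unique only up to such transformations, and the norm $\|\hat{\bw}_n^* - \bw_n^*\|$ need not shrink to zero unless representatives are chosen consistently across $n$. I would handle this as in \cite{SunSLiang2021} by fixing a canonical representative within each equivalence class (e.g., by ordering hidden neurons and fixing sign conventions) so that the argmax-continuity conclusion translates into a genuine statement about Euclidean distance in $\mathbb{R}^{d_w}$.
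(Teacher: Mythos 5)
Your proposal is correct and follows essentially the same route as the paper's proof: both exploit the decomposition of $\widetilde{\mG}(\cdot|\tilde{\bw}_n^*)$ from (\ref{Q2eq}) together with the non-negativity of the Kullback--Leibler divergence to identify $\bw_n^*$ as the maximizer of $\widetilde{\mG}$, and then transfer the uniform convergence in (\ref{eq:sameloss2}) and (\ref{QQeq}) under Assumption \ref{ass8} into convergence of the argmaxes (the paper phrases the uniqueness step as a contradiction with a hypothetical second maximizer $\bw_n^{\dag}$ and cites Lemma 2 of \cite{SunLiang2022kernel} for the consistency step, whereas you argue directly via the standard argmax theorem). Your explicit remark about fixing canonical representatives of the loss-invariant weight transformations is a point the paper only addresses in the surrounding discussion rather than in the proof itself.
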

}


\textcolor{black}{
The uniqueness of $\hat{\bw}_n^*$, up to some loss-invariant transformations, can be ensured by the consistency of the posterior 
$\pi(\bw_n|\bY_n,\bX_n,\bZ_n)$ as established in Theorem \ref{thm3} with an appropriate prior $\pi(\bw_n)$. 
The condition minimizing $D_{KL}(\bw_n)$ is generally 
implied by $\tilde{U}_n(\bZ_n,\bw_n;\bX_n,\bY_n)=0$ provided the consistency 
of $\bar{\btheta}_n^*$, and the convergence of $\bw_n^*$ to a maximum of $\pi(\bw_n|\bX_n,\bY_n)$ is generally implied 
by the Monte Carlo nature of Algorithm \ref{EFIalgorithm}. 
Therefore, by Theorem \ref{thm1}, if $\bw_n^{(k)}$ converges and 
$\tilde{U}_n(\bZ_n,\bw_n^{(k)};\bX_n,\bY_n)$ converges to 0, we would 
 have $\|\hat{\bw}_n^*-\bw_n^*\| \stackrel{p}{\to} 0$, provided that  
the prior has been appropriately chosen such that the posterior consistency holds and $\hat{g}(y_i,x_i,z_i,\hat{\bw}_n^*)$ constitutes a  consistent estimator 
of $\btheta^*$.}

Suppose our choice of the prior $\pi(\bw_n)$ ensures that  
the posterior consistency holds and 
$\hat{g}(y_i,x_i,z_i,\hat{\bw}_n^*)$ is consistent for $\btheta^*$.
By Lemma \ref{lemma:equivalent}, $\hat{g}(y_i,x_i,z_i,\bw_n^*)$ 
would also be consistent for $\btheta^*$, provided  
$\hat{g}(\cdot)$ is continuous.
The posterior consistency and the consistency of $\hat{g}(y_i,x_i,z_i,\hat{\bw}_n^*)$ can be proved based on the results of \cite{SunSLiang2021}.  This is summarized in Theorem \ref{thm3}, whose proof
can be found in the supplement. Note that working on $\hat{\bw}_n^*$ is simpler than working on $\bw_n^*$, as the former is based on the complete data.

\begin{theorem} \label{thm3} 
 Suppose that $\pi(\bw_n)$ is a truncated mixture Gaussian prior distribution 
  as specified in (\ref{mixtureprior}) and  Assumptions \ref{ass1}-\ref{ass10} (in the supplement) hold. Then, under the limit  $\lambda \to \infty$, 
  the posterior consistency holds for   $\pi(\bw_n|\bY_n,\bX_n,\bZ_n)$ 
  and the inverse mapping estimator $\hat{g}(\cdot)$ (with  either the energy function (\ref{energyfunction00}) or (\ref{energyfunction11})) constitutes a consistent estimator for the model parameters, i.e., 
\[ 
 \|\hat{g}(y,x,z,\bw_n^*) -\btheta^*\| \stackrel{p}{\to} 0, \quad \mbox{as $n\to \infty$},
 \]
 where $\btheta^*$ denotes the fixed unknown parameter values, and $(y,x,z)$ denotes a generic element of $(\bY_n,\bX_n,\bZ_n)$. 
\end{theorem}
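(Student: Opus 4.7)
}

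The plan is to derive the two claims (posterior consistency of $\pi(\bw_n|\bY_n,\bX_n,\bZ_n)$ and parameter consistency of $\hat g$) from the sparse deep learning theory of \cite{SunSLiang2021} combined with an analysis of the $\lambda \to \infty$ limit of the energy function. I would organize the argument in three stages.

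First, I would treat the posterior consistency piece as a direct application of the sparse DNN posterior consistency results in \cite{SunSLiang2021}. Under the complete data $(\bX_n,\bY_n,\bZ_n^*)$, the complete-data likelihood is $\pi(\bY_n|\bX_n,\bZ_n^*,\bw_n)\propto e^{-\lambda \tilde U_n(\bZ_n^*,\bw_n;\bX_n,\bY_n)}$ with the truncated mixture-Gaussian prior (\ref{mixtureprior}). By Assumptions \ref{ass7}--\ref{ass10}, the true inverse mapping lies in a Hölder/Sobolev class that is approximable by a sparse DNN of the assumed width/depth. Hence the standard sieve/Hellinger argument of \cite{SunSLiang2021} applies and yields posterior consistency, i.e.\ any $\hat{\bw}_n^*\in\arg\max \widehat{\mG}(\bw_n|\tilde{\bw}_n^*)$ satisfies, up to loss-invariant transformations, $\hat g(y,x,z,\hat{\bw}_n^*)$ converging to the true conditional mapping in $L_2$-norm on compacta.

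Second, I would exploit the structure of $\tilde U_n$ together with $\lambda\to\infty$ to translate ``fits the complete-data likelihood well'' into ``recovers $\btheta^*$''. The penalty $\eta\sum_i\|\hat\btheta_i-\bar\btheta_n\|^2$ forces all $\hat\btheta_i$ to collapse to a common vector $\bar\btheta_n^*$, while the fitting term $\sum_i d(y_i,x_i,z_i,\hat\btheta_i)$ forces $y_i=f(x_i,z_i,\bar\btheta_n^*)$ for every $i$ at the zero-energy set; by Assumption \ref{ass:existence} the unique value satisfying this system when $\bZ_n^*$ is the realized noise is $\btheta^*$. Therefore, as $\lambda\to\infty$, the posterior $\pi(\bw_n|\bX_n,\bY_n,\bZ_n^*)$ concentrates on $\{\bw_n:\hat g(y_i,x_i,z_i,\bw_n)=\btheta^*,\ i=1,\ldots,n\}$, giving $\|\hat g(y,x,z,\hat{\bw}_n^*)-\btheta^*\|\stackrel{p}{\to}0$. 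The argument is identical for either (\ref{energyfunction00}) or (\ref{energyfunction11}), because both energy functions share the same zero set in the limit, as noted in Remark \ref{Remarkeq}.

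Third, I would transfer the consistency from $\hat{\bw}_n^*$ (complete-data maximizer) to $\bw_n^*$ (incomplete-data maximizer of $\pi(\bw_n|\bX_n,\bY_n)$), which is the quantity actually produced by Algorithm \ref{EFIalgorithm}. This is where Lemma \ref{lemma:equivalent} enters: under Assumptions \ref{ass7}--\ref{ass8} and the continuity of $\pi(\bY_n,\bZ_n|\bX_n,\bw_n)$, $\|\hat{\bw}_n^*-\bw_n^*\|\stackrel{p}{\to}0$. Combined with the continuity of $\hat g$ in $\bw_n$ (guaranteed by the smoothness of the activation function in Remark \ref{rem:DNN}), this yields $\hat g(y,x,z,\bw_n^*)\stackrel{p}{\to}\btheta^*$ as desired.

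The hard part, I expect, will be the second stage: making the $\lambda\to\infty$ concentration argument compatible with the finite-sample posterior consistency theorem. The delicate issue is that the sparse-DNN posterior consistency of \cite{SunSLiang2021} is stated for fixed likelihood normalization, whereas here the effective ``temperature'' $1/\lambda$ is being driven to zero simultaneously with $n\to\infty$. One must check that the sieve entropy bounds and the prior mass condition on Kullback-Leibler neighborhoods used in \cite{SunSLiang2021} still hold uniformly as $\lambda\to\infty$, i.e.\ that the KL ball around the truth shrinks no faster than the prior concentration rate. Once this uniformity is verified, the remaining pieces (identifiability via the inverse function, continuity of $\hat g$, and Lemma \ref{lemma:equivalent}) fit together routinely.
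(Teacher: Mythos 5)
Your three-stage architecture matches the paper's proof: (i) invoke the sparse-DNN posterior consistency theory of \cite{SunSLiang2021}, (ii) use the $\lambda\to\infty$ limit to identify the model, and (iii) transfer from the complete-data maximizer $\hat{\bw}_n^*$ to $\bw_n^*$ via Lemma \ref{lemma:equivalent} plus continuity of $\hat g$. Stages (i) and (iii) are essentially what the paper does, and your closing observation that the interaction between $\lambda\to\infty$ and the sieve/prior-mass conditions is delicate is fair (the paper handles it by taking the $\lambda\to\infty$ limit first, so that the likelihood degenerates to the exact model relationship and the problem becomes a conventional DNN regression with input $(x,y,z)$ and output $\btheta$).

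The genuine gap is in your stage two. The posterior consistency delivered by \cite{SunSLiang2021} is a statement about concentration in a statistical divergence (Hellinger/KL-type) between the fitted and true data-generating densities; it does not by itself say the posterior concentrates on the zero-energy set $\{\bw_n:\hat g(y_i,x_i,z_i,\bw_n)=\btheta^*\}$, and your appeal to Assumption \ref{ass:existence} for identifiability does not close this. What is needed — and what the paper supplies as its central technical step — is an explicit bound showing that the divergence between the two induced distributions is of order $\|G(x,y,z,\bw)-G(x,y,z,\bw')\|^2$. The paper verifies this case by case: for normal regression it computes $d_1\bigl(N(f(x,\btheta_0'),\varsigma^2),N(f(x,\btheta_0),\sigma^2)\bigr)$ and controls it by $\|f(x,\btheta_0)-f(x,\btheta_0')\|^2+|\log\sigma-\log\varsigma|^2$, and for logistic regression it bounds the divergence by $\|\mu(x,\btheta)-\mu(x,\btheta')\|^2$, in both cases using the Lipschitz continuity of $f$ (resp.\ $\mu$) in $\btheta$ from Assumption \ref{ass9}-(v). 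This equivalence is what converts distributional consistency into $L_2$ consistency of $\hat g$ viewed as a nonparametric estimator of the constant target $\btheta^*$, after which Theorem 2.3 of \cite{SunSLiang2021} (consistency of the posterior-mean-type point estimator, using the bounded-derivative conditions in Assumption \ref{ass10}) gives $\|\hat g(y,x,z,\hat{\bw}_n^*)-\btheta^*\|\stackrel{p}{\to}0$. Without this divergence-to-parameter-distance bound, your stage two asserts the conclusion rather than deriving it.
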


Following from Theorem \ref{thm3}, we immediately have 
$\| \frac{1}{n} \sum_{i=1}^n \hat{g}(y_i,x_i, z_i, \bw_n^*)-\btheta^*\| \stackrel{p}{\to} 0$ as $n\to \infty$. As a slight relaxation of Assumption \ref{ass:existence}, we can write (\ref{Inveq}) as 
\begin{equation} \label{ass*}
\btheta^*=\lim_{n\to \infty} G(\bY_n,\bX_n,\bZ_n),
\end{equation}
where $\bZ_n$ is assumed to be known.
For example,  consider the normal mean model
\begin{equation} \label{normalmean}
y_i=\btheta+z_i, \quad z_i \sim N(0,1), \quad i=1,2,\ldots,n,
\end{equation}
for which $G(\bY_n,\bZ_n)=\sum_{i=1}^n (y_i- z_i)/n \equiv \btheta^*$   and, therefore,  (\ref{ass*}) holds trivially. By combining the above two limits, we have
\begin{equation} \label{rootn-eq}
\left\| \frac{1}{n} \sum_{i=1}^n \hat{g}(y_i,x_i, z_i, \bw_n^*)-G(\bY_n,\bX_n,\bZ_n)\right\| \stackrel{p}{\to} 0, \quad \mbox{as $n\to \infty$},
\end{equation}
i.e., the EFI-DNN estimator $\bar{\btheta}_n^*:=\frac{1}{n} \sum_{i=1}^n \hat{g}(y_i,x_i, z_i, \bw_n^*)$ is consistent for the inverse mapping $G(\bY_n,\bX_n,\bZ_n)$.  
Further, by Slutsky's theorem, the uncertainty of $\bZ_n$ can be propagated to $\btheta$ via
the EFI-DNN estimator. Therefore, 
the confidence distribution of $\btheta$ can be approximated by 
\begin{equation} \label{CDestimator}
\tilde{\mu}_n(d\btheta)= \frac{1}{\mM} \sum_{k=1}^{\mM} \delta_{\bar{\btheta}_n^{*,k}} (d\btheta), \quad \mbox{as $\mM\to \infty$},
\end{equation}
where $\delta_a$ stands for the Dirac measure at a given point $a$, 
$\bar{\btheta}_n^{*,k}:= \frac{1}{n} \sum_{i=1}^n \hat{g}(x_i,y_i, z_{i}^{*,k}, \bw_n^*)$, and 
$\bZ_n^{*,k}:=(z_1^{*,k},z_2^{*,k},\ldots,z_n^{*,k})$ for $k=1,2,\ldots, \mM$  denote $\mM$ random draws from the distribution 
$\pi(\bZ_n|\bX_n,\bY_n,\bw_n^*)$ under the limit setting of $\lambda$.

In this paper, although we set both the learning rate and step size sequences to decay with iterations, for which we particularly set $0.5<\beta \leq \alpha<1$, we can still treat $(\bw_n^{(k)},\bz_n^{(k)})$ approximately equally weighted by Theorem 2 of \cite{SongLiang2020eSGLD} and some classical results of stochastic approximation MCMC (see e.g., Theorem 3.3 of \cite{LiangCL2014}). That is, we can approximate 
the confidence distribution of $\btheta$ by 
\begin{equation} \label{CDestimator2}
\hat{\mu}_n(d\btheta)= \frac{1}{\mM} \sum_{k=1}^{\mM} \delta_{\bar{\btheta}_n^k} (d\btheta), \quad \mbox{as $\mM \to \infty$},
\end{equation}
where $\bar{\btheta}_n^k:= \frac{1}{n} \sum_{i=1}^n \hat{g}(x_i,y_i, z_{i}^{k}, \bw_n^{(k)})$, 
$\bZ_n^{(k)}:=(z_1^{k},z_2^{k},\ldots,z_n^{k})$, and $(\bZ_n^{(k)},\bw_n^{(k)})$ denotes 
the sample and parameter estimate produced by Algorithm 
\ref{EFIalgorithm} at iteration $k$.  Some weighted estimation schemes, see e.g. \cite{Fluctuation2016}, 
also work, but involve extra computation.

 \begin{remark} \label{Remprior} 
 To obtain a consistent EFI-DNN estimator for the inverse mapping, we impose a truncated mixture Gaussian prior (\ref{mixtureprior}) on $\bw_n$.  It is worth noting that the hyperparameters of the prior distribution can be entirely determined from the data. Specifically, we can employ cross-validation to determine their values while constraining their orders to meet Assumption \ref{ass9}-(iv). \textcolor{black}{We refer to \cite{Yang2007CV} for the setup of the cross-validation procedure which, together with the sparse DNN approximation theory established above, ensures consistency of 
 the inverse mapping estimator.}
 This consistency property significantly mitigates the impact 
 of the prior distribution on downstream inference, aligning the EFI-DNN algorithm with the principle of fiducial inference. When the sample size $n$ is 
 much larger than the dimension of $\bw$, we can treat $\bw$ in a frequentist way. Mathematically, this is equivalent to setting $\pi(\bw^{(k)}) \propto 1$ in (\ref{sgd_update}) when running Algorithm \ref{EFIalgorithm}.
 \end{remark}

\subsubsection{On the Property of $\pi(\bz|\bY_n,\bX_n,\bw_n^*)$}

  We are now to study the property of $\pi(\bz|\bY_n,\bX_n,\bw_n^*)$.
  Consider the energy function defined in (\ref{energyfunction00}) again. For convenience, we rewrite it as
   \[
   \check{U}_{n}(\bz)= \eta \| \hat{g}(y_i,x_i,z_i,\bw_n^*) -
   \frac{1}{n} \sum_{i=1}^n \hat{g}(y_i,x_i,z_i,\bw_n^*) \|^2 
   + \sum_{i=1}^n 
   d(y_i, x_i, z_i, \hat{g}(y_i,x_i,z_i,\bw_n^*)),
   \]
   where we replace $\hat{\btheta}_i$'s and $\bar{\btheta}_n$ with their DNN expressions. Define 
   \begin{equation} \label{Z-set}
   \mathcal{Z}_{\check{U}_{n}}=\left\{\bz\in \mathbb{R}^n: \check{U}_{n}(\bz)=0 \right\}.
   \end{equation}
   Let $\Pi_n$ denote a probability measure on $(\mathbb{R}^n, \mathscr{R})$, where $\mathscr{R}$ is the Borel $\sigma$-algebra, 
   and let $\pi_0^{\otimes n}$ be the corresponding density function. 
   Further, we rewrite  $\pi(\bZ_n|\bY_n,\bX_n,\bw_n^*)$ 
   as the following:   
   \begin{equation} \label{EFDlimitZ}
    p_{n,\lambda}(\bz|\bX_n,\bY_n) \propto \pi_0^{\otimes n}(\bz)
    e^{-\lambda \check{U}_{n}(\bz)}.
   \end{equation}
   A direct application of the theory in 
   \cite{Hwang1980LaplacesMR} to (\ref{EFDlimitZ}) leads to the following lemma, for 
   which  Assumptions \ref{ass:zero}-\ref{ass:invariance} will be be justified in Remark  \ref{EFDlimitdist}. 
   
 \begin{lemma} (Proposition 2.2 and Theorem 3.1 of \cite{Hwang1980LaplacesMR}) \label{lemma:Hwang} Suppose that the EFI network, the energy function $\check{U}_n(\bz)$, the probability measure $\Pi_n$, and the zero-energy set $\mathcal{Z}_{\check{U}_n}$ satisfy Assumptions \ref{ass:zero}-\ref{ass:hwang1b}.
 \begin{itemize}
 \item[(a)] If $\Pi_n(\mathcal{Z}_{\check{U}_n})>0$, then 
 $\lim_{\lambda\to \infty} p_{n,\lambda}(\bz|\bX_n,\bY_n)$ is given by 
 \begin{equation} \label{FDlimita}
 \frac{P_n^*(\bz|\bX_n,\bY_n)}{d\bz}=\frac{1}{\Pi_n( \mathcal{Z}_{\check{U}_n})} \pi_0^{\otimes n}(\bz), 
 \quad \bz \in \mathcal{Z}_{\check{U}_n}. 
 \end{equation}
  \item[(b)] If $\Pi_n(\mathcal{Z}_{\check{U}_n})=0$, 
 then  $\lim_{\lambda\to \infty} p_{n,\lambda}(\bz|\bX_n,\bY_n)$ is given by 
  \begin{equation} \label{FDlimitb}
   \frac{P_n^*(\bz|\bX_n,\bY_n)}{d\nu}= \frac{\pi_0^{\otimes n}(\bz) \left( {\rm det} \nabla_{\bt}^2 \check{U}_{n}(\bz)) (\bz) \right)^{-1/2}}{\int_{\mathcal{Z}_{\check{U}_{n}}} \pi_0^{\otimes n}(\bz) \left( 
   {\rm det} \nabla_{\bt}^2 \check{U}_{n}(\bz)) (\bz) \right)^{-1/2} d \nu }, \quad \bz \in \mathcal{Z}_{\check{U}_n},
  \end{equation}
   where $\mathscr{\nu}$ is the sum of intrinsic measures on 
   the  $p$-dimensional manifold in $\mathcal{Z}_{\check{U}_n}$.
  \end{itemize}
  \end{lemma}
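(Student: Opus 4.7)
The plan is to realize $p_{n,\lambda}$ as a Laplace-type probability measure of exactly the form studied in Hwang (1980), and then invoke his Proposition 2.2 for case (a) and his Theorem 3.1 for case (b) verbatim. Setting $\epsilon := 1/\lambda$, I would rewrite
\[
p_{n,\lambda}(\bz\mid\bX_n,\bY_n) \;\propto\; \exp\!\bigl(-\check{U}_n(\bz)/\epsilon\bigr)\,\pi_0^{\otimes n}(\bz),
\]
so that the limit $\lambda\to\infty$ becomes the small-temperature limit $\epsilon\downarrow 0$ that Hwang analyzes. Under this correspondence $\check{U}_n$ plays the role of his function $U$, $\pi_0^{\otimes n}$ the role of his reference density, and $\mathcal{Z}_{\check{U}_n}$ the role of the zero set on which the limiting measure is expected to concentrate.

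Next I would verify that the triple $\bigl(\check{U}_n,\pi_0^{\otimes n},\mathcal{Z}_{\check{U}_n}\bigr)$ meets the hypotheses in Hwang's two results, which the lemma delegates to Assumptions \ref{ass:zero}--\ref{ass:hwang1b}. Non-negativity of $\check{U}_n$, the existence of its minimum value $0$, and the characterization of its zero set come directly from Assumption \ref{ass:zero}. Assumption \ref{ass:hwang1a} supplies $\Pi_n(\check{U}_n<a)>0$ for every $a>0$, which is what controls the behavior of the normalizing constant as $\epsilon\downarrow 0$. The remaining conditions for case (b) — compactness of a sublevel set, continuity of $\pi_0^{\otimes n}$, $C^3$ regularity of $\check{U}_n$, the manifold structure of $\mathcal{Z}_{\check{U}_n}$ of highest dimension $p$, nonvanishing of $\pi_0^{\otimes n}$ on that manifold, and the nondegeneracy $\det\!\bigl(\nabla_{\bt}^2\check{U}_n\bigr)\neq 0$ in the normal directions — are precisely Assumption \ref{ass:hwang1b}(i)--(iv).

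With the hypotheses in place, I would conclude case (a), in which $\Pi_n(\mathcal{Z}_{\check{U}_n})>0$, by a direct appeal to Hwang's Proposition 2.2: the limit concentrates uniformly on $\mathcal{Z}_{\check{U}_n}$ with respect to $\Pi_n$, giving (\ref{FDlimita}). For case (b), with $\Pi_n(\mathcal{Z}_{\check{U}_n})=0$, I would invoke Hwang's Theorem 3.1: using the tubular-neighborhood decomposition (\ref{decompeq}) of $\bz$ into tangential coordinates $m(u_1,\dots,u_p)$ and normal coordinates $\bt$, Laplace's method in $\bt$ produces the Hessian factor $(\det\nabla_{\bt}^2\check{U}_n)^{-1/2}$ in the numerator, and the identical factor appears in the denominator through the normalizing integral over $\mathcal{Z}_{\check{U}_n}$, yielding (\ref{FDlimitb}).

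The main obstacle is the $C^3$ smoothness demanded in case (b): since $\check{U}_n$ inherits its regularity from the sparse DNN $\hat{g}(y,x,z,\bw_n^*)$, one must insist that the activation be $C^3$ (e.g., tanh, softplus, or sigmoid), a point already foreshadowed in the remark on the EFI-DNN architecture. If ReLU is used instead, $\check{U}_n$ fails $C^3$ only on a set of Lebesgue measure zero, so (\ref{FDlimitb}) still holds almost surely, while (\ref{FDlimita}) in case (a) is unaffected because Proposition 2.2 of Hwang does not require any differentiability. Beyond this smoothness caveat the argument is a direct reduction to Hwang (1980), which is why a detailed proof can reasonably be omitted.
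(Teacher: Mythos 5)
Your proposal matches the paper's treatment: the lemma is stated as a direct restatement of Hwang's Proposition 2.2 and Theorem 3.1 under the substitution $\epsilon = 1/\lambda$, with the verification of Assumptions \ref{ass:zero}--\ref{ass:hwang1b} for the EFI network deferred to Remark \ref{EFDlimitdist} and the ReLU almost-sure caveat handled exactly as you describe in Remark \ref{rem:DNN}. No gaps; your argument is essentially the paper's own.
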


   \begin{remark}  \label{EFDlimitdist}
 The conditions specified in Assumptions \ref{ass:zero}-\ref{ass:hwang1b} are readily met by the EFI network. The existence of the minimum $\min_{\bz} \check{U}_n(\bz)=0$ is asymptotically guaranteed by the consistency of $\hat{g}(y_i,x_i,z_i,\bw_n^*)$. 
 In particular, we have $\check{U}_n(\bZ_n^*) \stackrel{p}{\to} 0$ as $n \to \infty$.
    The condition $\Pi_n(\mathcal{Z}_{\check{U}_n})>0$ is  satisfied for logistic regression as discussed in Section \ref{EFI:logistic} of the supplement. While  the condition $\Pi_n(\mathcal{Z}_{\check{U}_n})=0$ is naturally satisfied for normal linear/nonlinear regression problems, as $\mathcal{Z}_{\check{U}_n}$ forms a manifold in $\mathbb{R}^n$ in this case. In the model (\ref{modeleq}), 
 if the function $f$ satisfies the continuity condition as required in 
 Assumption \ref{ass:hwang1b}-(ii), we can ensure that the EFI network also satisfies it by employing appropriate activation functions, such as sigmoid, {\it tanh} and softplus. The other conditions are standard and generally hold.
 \end{remark}

 \begin{remark} 
 Lemma \ref{lemma:Hwang} implies that the choice of $\eta$ is not critical for the convergence of the EFI-DNN algorithm, as long as $\lambda \to \infty$. Specifically, different choices of $\eta$ will result in the same zero-energy set as $\lambda \to \infty$. 
 In practice, to enhance the convergence of the EFI-DNN estimator to the desired inverse function, one can set $\eta$ to a moderate value such as 2, 5, or 10, and set $\lambda$ to be reasonably large. \textcolor{black}{Recall that 
 $\eta$ represents a regularization parameter as defined in (\ref{energyfunction00}).}  
 An appropriate value of $\lambda$ can be determined by gradually increasing it until the resulting confidence intervals of the model parameters cease to shrink.
 \end{remark}


\textcolor{black}{In summary, we have developed a valid algorithm for conducting fiducial inference for general statistical models by leveraging a sparse DNN for the inverse function approximation. 
The EFI-DNN algorithm is computationally efficient. When simulating the latent variables, it essentially samples from (\ref{z-dist}) with a small value of $\epsilon$ rather than directly  from the limiting distribution (\ref{theoreticalzdist}). This circumvents the need to compute the determinant ${\rm det}(\nabla^2_{\bt} U_n(\bz))$, thereby significantly enhancing computational efficiency. On the other hand, since the algorithm
is designed  to sample from the limiting 
distribution of (\ref{z-dist}), it can be applied to models 
with any type of noise, whether additive or non-additive. 
Furthermore, thanks to the universal approximation capability of DNNs, the EFI-DNN algorithm is highly versatile and can be applied to 
statistical models of various complexities.  }

 \subsection{Some Variants of the EFI-DNN Algorithm}
 
 In Algorithm \ref{EFIalgorithm}, the latent variable sampling step is performed using a SGLD algorithm. This can be replaced with an advanced stochastic gradient MCMC algorithm, such as stochastic gradient Hamiltonian Monte Carlo (SGHMC) \citep{SGHMC2014}, 
 momentum SGLD \citep{kim2020stochastic}, 
 or preconditioned SGLD \citep{pSGLD}. 
 The convergence of adaptive SGHMC has been studied in \cite{LiangSLiang2022}, 
 where similar theoretical results to Theorem \ref{thm1} and Theorem \ref{thm2} were achieved.
 Compared to SGLD, SGHMC includes an extra momentum term, which 
 enables faster exploration of the sample space \citep{Li2019StochasticGH}.

 Other than adaptive SGHMC, we also recommend replacing SGLD   
 with tempering SGLD in Algorithm \ref{EFIalgorithm}. 
 In this tempering algorithm, the temperature $\tau$ in (\ref{SGLDtempEq}) is replaced by a decreasing sequence ${\tau_k}$ that converges to 1 
 along with iterations. Such a tempering algorithm is particularly useful for outlier 
 detection problems, as illustrated in Section \ref{outliersection}. With the tempering technique, random errors of large magnitudes can be easily drawn for some observations, accelerating the convergence of the simulation. 

Similar to the tempering technique discussed above, using an increasing sequence of $\{\lambda_k\}$  that converges to a target value along with iterations can also improve the convergence of the simulation. As $\lambda_k$ increases, the latent variable samples gradually shift toward the set $\mathcal{Z}_{\check{U}_n}$. In this setup, Algorithm \ref{EFIalgorithm} possesses a dual adaptive mechanism, adapting both the values of $\lambda_k$ and $\bw^{(k)}$. The convergence properties of such an algorithm will be investigated in future work, following a framework similar to \cite{LiangCL2014}.

\section{Illustrative Examples} 

\subsection{Linear Regression} \label{OLSsection}
We begin by considering a linear regression model given by
\begin{equation} \label{LinearEx1}
y_i=x_i^T \btheta+ \sigma z_i,  \quad i=1,2,\ldots,n, 
\end{equation}
where $z_i \sim N(0,1)$, $x_i=(x_{i,0},\ldots,x_{i,9})^T$, $x_{i,0}=1$, $x_{i,k}\sim N(0,1)$ for $k=1,\ldots,9$,  $\sigma=1$, and the regression coefficient $\btheta=(\theta_0,\theta_1,\ldots,\theta_9)^T=(1,1,1,1,1,0,0,0,0,0)^T$. 
For convenience, we refer to $(\theta_0,\theta_1,\ldots,\theta_4)$ as signal parameters and $(\theta_5,\theta_6,\ldots,\theta_9)$ as noise parameters. We simulated 100 datasets from this model, each with a sample size of $n=500$. 

EFI-a and EFI were applied to this example with $\sigma$ assumed to be known.  \textcolor{black}{For EFI, we have also tried different activation functions, including ReLU, softplus, tanh, and sigmoid.}
Refer to the supplement for the settings of the experiment. 
The numerical results are summarized in Table \ref{Table:LinearComparison}. 
Figure \ref{Figure:LinearComparison} illustrates the concept of EFI. The left plot displays a scatter plot of $\bZ_n$ versus $\hat{\bZ}_n$, where $\bZ_n$ represents the true random errors realized for the observations and $\hat{\bZ}_n$ represents a set of random errors imputed by EFI. The scatter plot highlights the presence of uncertainty in the random errors 
 contained in the data.  
 According to the theory of EFI, the uncertainty in $\bZ_n$ propagates to $\btheta$, giving rise to uncertainty in $\btheta$. 
 The middle plot is a quantile-quantile (Q-Q) plot for $\bZ_n$ and $\hat{\bZ}_n$,  indicating 
 that they follow the same distribution. 
 The right plot compares the confidence intervals of $\beta_1$ produced by 
 EFI and the OLS method. 
 For this dataset, the two methods produced nearly identical confidence intervals 
 for $\beta_1$. This complies with our theoretical result presented 
 in Example 1 of Section \ref{EFDsection}.

\begin{figure}[!ht]
    \begin{center}
    \begin{tabular}{c}
    \includegraphics[width=0.75\textwidth]{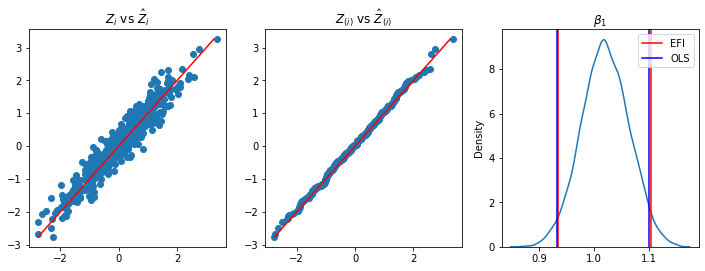} 
    \end{tabular} 
    \caption{ \label{Figure:LinearComparison} Results of EFI (with the ReLU activation function) for one dataset simulated from 
    (\ref{LinearEx1}) with $n=500$: (left) scatter plot of $\hat{\bz}_n$ ($y$-axis) versus $\bz_n$ 
    ($x$-axis), (middle) Q-Q plot of $\hat{\bz}_n$  and $\bz_n$, 
    (right) confidence intervals of $\beta_1$ produced by EFI and OLS. }
    \end{center}
\end{figure}

For comparison, we have applied OLS and GFI to this example. The OLS method is simple, whose implementation is available in many statistical packages such as {\it R Studio}.  
There are two ways to implement GFI as described in Section \ref{conceptsection}. 
One is to use the acceptance-rejection procedure as described in Section 2. However, due to its importance sampling nature, this procedure becomes highly inefficient 
for the problems with a large value of $n$. 
For instance, in this example, we attempted to generate 50,000,000 samples of $\bZ_n$ from $N(0,I_n)$ for $n=500$, but none of them was accepted. The other way 
involves direct simulations from the limiting distribution as given in Theorem 1 of \cite{hannig2016gfi}.  
For this example, the limiting distribution is given by 
$\btheta \sim N(\bX_n^T \bX_n)^{-1} \bX_n^T \bY_n, \sigma^2(\bX_n^T \bX_n)^{-1})$, where $\bX_n$ represents the design matrix of (\ref{LinearEx1}) and $\bY_n=(y_1,y_2,\ldots,y_n)^T$, which is identical to the extended fiducial distribution. 

\begin{table}[htbp]
\caption{Statistical inference results for the model (\ref{LinearEx1}) with known $\sigma^2$, where ``Coverage'' refers to the averaged coverage rate over 100 datasets and respective parameters, and ``CI-width'' refers to the average width of respective confidence intervals.}
\label{Table:LinearComparison}
\begin{center}
\begin{tabular}{ccccccc} \toprule
&  & \multicolumn{2}{c}{Signal parameters} & & \multicolumn{2}{c}{Noise parameters}  \\  \cline{3-4} \cline{6-7}  
 Method & Activation & Coverage rate &  CI-width & & Coverage rate & CI-width  \\ \midrule
       OLS & ---  & 0.95 & 0.177 & & 0.956 & 0.177 \\  
           GFI & ---  &    0.95 & 0.177  &  & 0.952 & 0.177  \\
          EFI-a & ReLU   & 0.948 & 0.176 &  & 0.95  & 0.171    \\  
          EFI  & Sigmoid  & 0.948 & 0.176 & & 0.956 & 0.176     \\
          EFI  & Tanh & 0.948 & 0.176   & & 0.956 & 0.176   \\
          EFI   & Softplus &  0.95 & 0.177 & & 0.95 & 0.176 \\
          EFI   & ReLU  & 0.95  & 0.176 &  & 0.95  & 0.176  \\   
           \bottomrule
\end{tabular}
\end{center}
\end{table}

Table \ref{Table:LinearComparison} shows that both versions of EFI work very well for this example. 
In our experience, EFI-a often requires a larger value of $\eta$ to control the variability of $\hat{\btheta_i}$ than EFI. Additionally, EFI tends to be more robust to parameter settings than EFI-a, as it directly use the average $\bar{\btheta}_n$  
in generating the fitted values $\tilde{y}_i$'s. 
Since EFI and EFI-a are asymptotically equivalent, as mentioned in Remark \ref{Remarkeq}, we will only present the results of EFI in the following analysis. \textcolor{black}{Furthermore, Table \ref{Table:LinearComparison} 
shows that EFI is robust to the choice of the activation 
functions. Note that each of these activation 
functions is Lipschitz continuous (with a 
Lipschitz constant of 1) and can result in a consistent estimator 
for the inverse function. }

For a comprehensive treatment of the model (\ref{LinearEx1}), we applied EFI to the simulated datasets with $\sigma^2$ assumed to be unknown. The results are summarized in 
Table \ref{Table:Linear_unknownVariance},  which demonstrates the validity of EFI for performing  
statistical inference on the model. \textcolor{black}{In this case, we experimented different settings 
of $\eta$ and $\lambda$,  and EFI 
proved to be robust to these settings.  }

\begin{table}[!ht]
\caption{Statistical inference results for the model (\ref{LinearEx1}) with unknown $\sigma^2$, where ``Coverage'' refers to the averaged coverage rate over 100 datasets and respective parameters, and `CI-width'' refers to the average width of respective confidence intervals.} 
\label{Table:Linear_unknownVariance}
\begin{center}
\begin{tabular}{cccccccccc} \toprule
        &  &  \multicolumn{2}{c}{Signal parameters} & & \multicolumn{2}{c}{Noise parameters} &  &  \multicolumn{2}{c}{Variance ($\sigma^2$)}  \\ \cline{3-4} \cline{6-7} \cline{9-10}   
 Method &  $(\eta,\lambda)$ & Coverage & CI-width & & Coverage & CI-width & & Coverage & CI-width   \\ \midrule
  OLS  & --- & 0.948 & 0.176 & & 0.948 & 0.175 & & 0.95 & 0.252 \\  
GFI  & --- & 0.952 & 0.177 & & 0.946 & 0.176 & & 0.95 & 0.251 \\  
EFI  & (2,30)  & 0.95 & 0.180 &  & 0.948 & 0.178 & & 0.95 & 0.255 \\
EFI  & (2,40) & 0.952 & 0.179 &  & 0.954 & 0.179 & & 0.95 & 0.252 \\
EFI & (2,50) & 0.95 & 0.178 &  & 0.946 & 0.177 & & 0.95 & 0.252 \\
EFI &  (4,50) & 0.954 & 0.178 &  & 0.946 & 0.175 & & 0.95 & 0.252 \\
\bottomrule  
\end{tabular}
\end{center}
\end{table}

In summary, EFI performs as expected for this example, yielding similar results to OLS and GFI. \textcolor{black}{This is consistent with our analytic results in Example 1, where we showed that EFI results in the same theoretical confidence distribution as OLS and GFI for the linear regression model.} 
It is worth noting that in this particular example, the observations precisely follow the presumed model. In Section \ref{outliersection}, we will demonstrate that EFI can outperform likelihood-based methods when this situation is altered.

\subsection{Behrens-Fisher problem} \label{BFsection}

Consider two Gaussian distributions $N(\mu_1,\sigma_1^2)$ and $N(\mu_2,\sigma_2^2)$.
Suppose that two independent random samples of sizes $n_1$ and $n_2$ are drawn from them, respectively. The structural equations are given by 
\begin{equation}\label{BFproblem}
\begin{split}
    y_{1i}&=\mu_1+\sigma_1 z_{1i}, \quad i=1,\ldots,n_1, \\
    y_{2i}&=\mu_2+\sigma_2 z_{2i}, \quad i=1,\ldots,n_2, \
    \end{split}
\end{equation}
where $z_{i1}, z_{i2}\sim N(0,1)$ independently. 
The Behrens-Fisher problem pertains to the inference for the difference $\mu_1-\mu_2$ when the ratio $\sigma_1/\sigma_2$ is unknown.
Behrens \cite{Behrens1929} proposed the first solution to the problem 
in the context of testing the hypothesis $H_0: \mu_1=\mu_2$ versus $H_1: \mu_1 \ne \mu_2$, 
based on the pivot: 
\begin{equation} \label{pivoteq}
T= \frac{(\bar{Y}_1-\bar{Y}_2)-(\mu_1-\mu_2)}{ \sqrt{S_1^2/n_1+S_2^2/n_2}},
\end{equation} 
where $\bar{Y}_i$ and $S_i^2$ denote, respectively, the sample mean and sample variance 
of population $i$ for $i=1,2$. 
Fisher \cite{Fisher1935The} pointed out that this solution could be justified using the fiducial theory. Jeffreys \cite{Jeffreys1961TheOP} showed that a Bayesian calculation with the prior 
$\pi(\btheta) \propto (\sigma_1 \sigma_2)^{-1}$ yields the same confidence interval as the fiducial method.
From a frequentist perspective, Bartlett \cite{Bartlett1936TheIA} noted that inverting Behrens' test can lead to a conservative confidence interval for $\mu_1 - \mu_2$, i.e., its coverage probability is greater than the nominal level.
Later, based on the same statistic $T$, Welch \cite{Welch1947} proposed a $t$-test for which the resulting confidence interval for $\mu_1 - \mu_2$ has a coverage probability nearly equal to the nominal level. However, Fisher \cite{Fisher1956OnAT} criticized Welch's test for its negatively biased relevant selections, i.e.,  the coverage rate of its confidence interval can be lower than the nominal level for some instances.
As shown in \cite{Linnik1968}, there are no exact fixed-level tests based on the complete sufficient statistics for this problem. However, exact solutions based on other statistics and approximate solutions based on the complete sufficient statistics do exist.
Recently,  Martin and Liu \cite{Martin2015ConditionalIM} applied the {\it inferential model} method to this problem, resulting in the same confidence interval as Hsu-Scheffé's \cite{Hsu1938BF, Scheffe1970BF}, but which is known to be conservative \cite{Dudewicz2007ExactST}.  Wang and Jia \cite{wang2022te} developed a non-asymptotic $t$-test for the problem based on a statistic different from $T$, but the efficiency of the test is still unclear.

\textcolor{black}{We applied EFI to this problem by solving the two structural equations in (\ref{BFproblem}) 
separately: one for $(\mu_1, \sigma_1)$ and the other for $(\mu_2,\sigma_2)$.  
 Let $\{\hat{\mu}_1^{(k)}: k=1,2,\ldots, \mM\}$ and $\{\hat{\mu}_2^{(k)}: k=1,2,\ldots,\mM\}$ denote, respectively, the fiducial samples for the population means produced by the two EFI solvers. Then, the 95\% confidence interval for $\mu_1-\mu_2$ can be directly constructed by finding the 2.5th and 97.5th percentiles of the samples  $\{\hat{\mu}_1^{(k)} - \hat{\mu}_2^{(k)}: k=1,2,\ldots,\mM\}$. 
 This confidence interval construction method sets EFI significantly apart from existing methods, as it doesn't directly seek the distribution of a test statistic. This advantage of EFI will be further illustrated in Section \ref{EFI:testH}.}

 In our first simulations, we set $n_1 = n_2 = 50$, $\mu_1 = 1$, $\mu_2 = 0$,  
   and varied the values of $(\sigma_1^2, \sigma_2^2)$ as provided in Table \ref{Table:Behrens-Fisher}. The widths and coverage rates of the resulting confidence intervals are reported in Table \ref{Table:Behrens-Fisher}, where the results were obtained with $\mM=10,000$ and by averaging over 200 independent datasets. 
   For comparison, we also report the results from the Behrens-Fisher method (available in the R package `asht' \citep{ashtR2023}), Welch's method, Hsu-Scheffé's method, and  Te-test \cite{wang2022te}.
 The comparison suggests that for this example, EFI tends to be more efficient than the existing methods, yielding shorter confidence intervals while maintaining the same level of coverage rates. 

\begin{table}[!ht]
\caption{Statistical inference results for the Behrens-Fisher problem, where ``Coverage'' refers to the coverage rate of $\mu_1-\mu_2$ calculated by averaging over 200 datasets, and ``CI-width'' refers to the average width of respective confidence intervals.} 
\label{Table:Behrens-Fisher}
\begin{center}
\begin{tabular}{cccccccc} \toprule
        &   \multicolumn{3}{c}{($\sigma_1^2,\sigma_2^2)=(0.25,1)$} & & \multicolumn{3}{c}{($\sigma_1^2,\sigma_2^2)=(1,1)$} 
        \\ \cline{2-4} \cline{6-8}    
 Method & Coverage & CI-width & std CI&  & Coverage & CI-width & std CI
 \\ \midrule
 $n_1=n_2=50$   &       &        &  & & & & \\ 
 Behrens-Fisher  &  0.95 & 0.634 &  0.0040& & 0.955 & 0.802 & 0.0043 \\
  Welch  &  0.95 & 0.630 & 0.0040&   & 0.95 &  0.794 & 0.0042 \\
  Hsu-Scheff\'e & 0.95 & 0.635 &  0.0040& &  0.955  & 0.804 & 0.0043 \\
Te-Test & 0.94 & 0.633 &  0.0045 & & 0.95 & 0.800 & 0.0055 \\
 EFI & 0.95 & 0.609 & 0.0058  & & 0.955 &  0.788 & 0.0047 \\
\midrule  
$n_1=n_2=500$   &       &        &  & & & & \\ 
Behrens-Fisher  &  0.95 & 0.196&  0.0004 & & 0.95 & 0.248 & 0.0004\\
  Welch  &  0.95 & 0.196 & 0.0004&   & 0.95 &  0.247 & 0.0004 \\
  Hsu-Scheff\'e & 0.95 & 0.196 &  0.0004 & &  0.95  & 0.248 & 0.0004 \\
Te-Test & 0.95 & 0.196 &  0.0005 & & 0.95 & 0.248& 0.0005 \\
 EFI & 0.95 & 0.198 & 0.0006 & & 0.95 &  0.245 & 0.0014 \\
\bottomrule  
\end{tabular}
\vspace{0.2in}
\end{center}
\end{table}


\begin{figure}[!ht]
    \begin{center}
    \begin{tabular}{c}
    \includegraphics[width=0.6\textwidth]{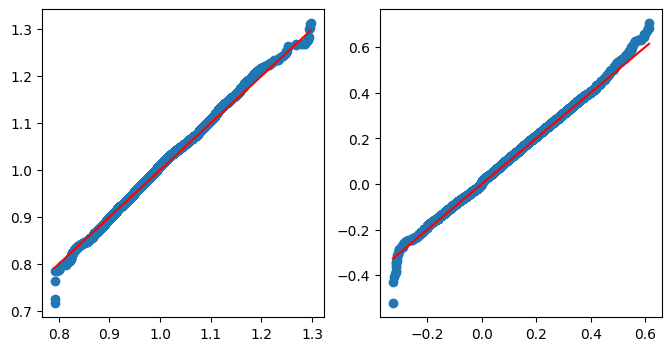} 
    \end{tabular} 
    \caption{Results of EFI for one dataset simulated from 
    (\ref{BFproblem}) with $n_1=n_2=50$: (left) Q-Q plot of $\{\hat{\mu}_1^{(k)}: k=1,2,\ldots \mM\}$ ($x$-axis)  and $\{\tilde{t}_1^{(k)}: k=1,2,\ldots \mM \}$
    ($y$-axis); (right) Q-Q plot of $\{\hat{\mu}_2^{(k)}: k=1,2,\ldots \mM\}$  ($x$-axis) and $\{\tilde{t}_2^{(k)}: k=1,2,\ldots \mM\}$ ($y$-axis)}. 
    \label{EFI-t-tail}
    \end{center}
\end{figure}

To explain the efficiency of EFI, we present in Figure \ref{EFI-t-tail} 
the Q-Q plots of $\{\hat{\mu}_i^{(k)}: k=1,2,\ldots,\mM\}$  
versus $\{ \tilde{t}_i^{(k)}: k=1,2,\ldots,\mM\}$ for $i=1,2$.
Here,  $\tilde{t}_i^{(k)}=\bar{y}_i - \frac{s_i}{\sqrt{n_i}} t_{n_i-1}^*(k)$, 
and $t_{n_i-1}^*(k)$ denotes the $k$th sample randomly drawn from a 
student $t$-distribution with $n_i-1$ degrees of freedom.   
Since the sample size $n$ is finite and thus the samples can be viewed as drawn from 
a tail-truncated distribution, EFI imputes the latent variables essentially from a tail-truncated distribution, due to its conditional inference nature. As a result, the Q-Q plots in Figure \ref{EFI-t-tail} display a tail-cut phenomenon. Therefore, when the sample size $n$ is small, the EFI confidence intervals can be shorter than those from unconditional inference methods, even they have the same coverage rates. 
However, when the sample size becomes large, this feature of conditional inference can disappear as illustrated by Table \ref{Table:Behrens-Fisher} 
with the results of $n_1=n_2=500$. 
We refer to this feature as the finite-sample effect for conditional inference.  
It is worth noting that since EFI solves for $(\mu_1,\sigma_1)$ and $(\mu_2,\sigma_2)$ separately, the Behrens-Fisher problem essentially becomes 
a linear regression problem with unknown variances for EFI. Therefore, it is not surprising that the empirical distribution of $\hat{\mu}_i$ closely matches 
 a location-scale student $t$-distribution.

\subsection{Bivariate Normal Distribution} \label{binormalsection}

Let $\by_1, \by_2,\ldots,\by_n$, with $\by_i=(y_{i,1},y_{i,2})^T$ for $i=1,2,\ldots,n$, be  
independent samples from a bivariate normal distribution 
with the mean vector and covariance matrix given as follows: 
\[
\begin{pmatrix} \mu_{1} \\ \mu_{2} \end{pmatrix}= \begin{pmatrix} 1 \\ 0 \end{pmatrix}, \quad \begin{pmatrix} \sigma_{1}^2 & \rho \sigma_{1} \sigma_{2} \\ \rho\sigma_{1} \sigma_{2} & \sigma_{2}^2 
\end{pmatrix}=\begin{pmatrix} 1 & 0.5 \\ 0.5 & 1 \end{pmatrix},
\]
where $\rho$ is the coefficient of correlation between two components of the bivariate normal vector. 
To perform EFI, we consider the following decomposition: 
\begin{equation}\label{BivariateFproblem}
\begin{split}
    y_{i,1}&=\mu_{1}+l_1z_{i,1}, \\
    y_{i,2}&=\mu_{2}+l_2z_{i,1}+l_3z_{i,2}, 
    \end{split}
\end{equation}
where $l_1>0$ and $l_3>0$, and $z_{i,k}$'s (for $k=1,2$ and $i=1,2,\ldots,n)$ are i.i.d standard normal random variables.  
It is easy to derive that $\sigma_{1}=l_1$, $\sigma_{2}=\sqrt{l_2^2+l_3^2}$, and  
$\rho=\frac{l_2}{\sqrt{l_2^2+l_3^2}}$.  Based on this decomposition, we set 
$\btheta=(\mu_{1},\mu_{2},\log(l_1),l_2,\log(l_3))^T$ for EFI. The results are presented 
in Table \ref{Table:binormal}, where we  
calculated the coverage rates and confidence interval widths based on 100 replications of the data set. 
The sample size is $n=100$ for each dataset. 




Inference for the parameters of the bivariate normal distribution has served as a classical example of fiducial inference. This can be seen in works such as Fisher \cite{Fisher1930Inv,Fisher1956Book}, Segal \cite{Segal1938FiducialDO}, and Bennett \cite{Bennett1969}. Their derivations have yielded the following established results:
\begin{itemize}
\item The marginal fiducial distribution of either $\mu_k$ is given by  
 $\sqrt{n} (\bar{y}_k-\mu_i)/s_k \sim t(n-2)$, where $\bar{y}_k=\frac{1}{n}\sum_{i=1}^n y_{i,k}$, $s_k=\frac{1}{\sqrt{n-1}} \sqrt{\sum_{i=1}^n (y_{i,k}-\bar{y}_k)^2}$, and $t(n-2)$ denotes a student-$t$ distribution with the degree of freedom $n-2$. 

\item The marginal fiducial distribution of either $\sigma_k^2$ is given by  
 $(n-1) s_k^2/\sigma_k^2 \sim \chi^2_{n-2}$, where $\chi^2_{n-2}$ denotes a chi-squared distribution with the degree of freedom being $n-2$. 
\end{itemize} 

According to \cite{Berger2006TheCF}, the marginal fiducial distribution of $\rho$ that was derived by Fisher  \citep{Fisher1930Inv} is the same as its marginal posterior distribution 
when the parameters are subject to the right-Haar prior $\pi(\mu_1,\mu_2,\sigma_1,\sigma_2,\rho) \propto \sigma_1^{-2}(1-\rho^2)^{-1}$. More precisely, the marginal fiducial distribution of $\rho$ has a stochastic representation as 
\[
\psi\left( - \sqrt{ \frac{\chi_1^{2*}}{\chi_{n-1}^{2*}}}+ 
 \sqrt{ \frac{\chi_{n-2}^{2*}}{\chi_{n-1}^{2*}}}
\frac{r}{\sqrt{1-r^2}} \right), \quad \mbox{where $\psi(x)=\frac{x}{\sqrt{1+x^2}}$},
\]
$r=\frac{1}{n-1} \sum_{i=1}^n (y_{i,1}-\bar{y}_1)(y_{i,2}-\bar{y}_2)/(s_1 s_2)$ is the sample correlation coefficient,  $\chi_1^{2^*}$, $\chi_{n-1}^{2*}$ and $\chi_{n-2}^{2*}$ are chi-squared random variables with the indicated degrees of freedom, and all the random variables 
are mutually independent.



  \begin{table}[htbp]
\caption{Comparison of the fiducial and EFI for inference of the parameters of the bivariate normal distribution, where the coverage rate and confidence interval length, given in the parentheses, 
were calculated by averaging over 100 datasets of sample size $n=100$. }
\label{Table:binormal}
\vspace{-0.2in}
\begin{center}
\begin{tabular}{lccccccc} \toprule
 Method & $\mu_{1}$ &   $\mu_{2}$ & $\sigma_{1}$  & $\sigma_{2}$ & $\rho$   & Average \\ \midrule
 Fiducial & 0.96 (0.398) & 0.96  (0.399)  & 0.97 (0.592)  & 0.96 (0.597) & 0.95 (0.295) & 0.96
  \\ 
 EFI & 0.95 (0.394) &  0.96  (0.404)  & 0.97 (0.564)  & 0.97 (0.555) & 0.95 (0.289)  & 0.96\\ \bottomrule 
\end{tabular}
\end{center}
\vspace{-0.2in}
\end{table}

The comparison suggests that for this example, EFI tends to produce shorter confidence intervals than the Fiducial method for the scale parameters $\sigma_1$, $\sigma_2$, and $\rho$, while the two methods tend to yield similar results for the location parameters $\mu_1$ and $\mu_2$. 
Once again, we attribute the efficiency of EFI in this example to the finite-sample effect, similar to 
the Behrens-Fisher problem.

 \subsection{Fidelity in Parameter Estimation} \label{outliersection}


The frequentist methods often conduct parameter estimation under the maximum likelihood principle.
As implied by the constraint (\ref{MLEconstraint}), 
the MLE can be easily contaminated by outliers. In contrast, as implied by (\ref{energyfunction00}) and (\ref{eqA12}), 
EFI essentially estimates $\btheta$ by maximizing the  predictive likelihood function  $\pi(\bZ_n|\bX_n,\bY_n,\btheta)\propto \pi_0^{\otimes n}(\bZ_n) e^{-\lambda \sum_{i=1}^n 
d(y_i,x_i,z_i,\btheta)}$,
which balances the fitting errors and the likelihood of random errors. Compared to the MLE  $\hat{\btheta}_{MLE}= \arg\max_{\btheta}\pi_0^{\otimes n}(\bZ_n)$, 
where $\bZ_n$ can be expressed as a function of $(\bY_n,\bX_n,\btheta)$, 
the EFI estimator tends to be more robust to outliers and provides higher fidelity in parameter estimation. However, if the model is correctly specified, no outliers exist, and the sample size is reasonably large, maximizing 
$\pi_0^{\otimes n}(\bZ_n)$ leads to an approximate minimization of the fitting error  $\sum_{i=1}^n d(y_i,x_i,z_i,\btheta)$. 
Specifically, when $\hat{\btheta}_{MLE} \stackrel{p}{\to} \btheta^*$, $\bZ_n^*$ can be recovered in probability and thus  $\sum_{i=1}^n 
d(y_i,x_i,z_i,\btheta) \stackrel{p}{\to} 0$. 
In such cases,  the two methods will yield similar estimates, refer to Table \ref{Table:Linear_unknownVariance} for an illustrative example 
of this issue. 

To illustrate EFI's robustness to outliers, we consider the model (\ref{structeq}) again. In this new simulation, we set $n=600$ and generated random errors from a mixture Gaussian distributions: $z_1,z_2,\ldots,z_{540} \sim N(0,1)$ and $z_{541},z_{542},\ldots,z_{600} \sim N(4,1)$. The latter cases were considered as outliers, although some of them might be indistinguishable from the former ones. 
 Figure \ref{residualscomparsionB} compares the performances of EFI and OLS on a simulated dataset. It suggests that EFI only slightly shrank 
the random errors and led to a more accurate estimate of $\sigma^2$ ($\approx 1.0$) and narrower 
confidence intervals for $\bbeta$, while the OLS estimate of $\sigma^2$ ($\approx 1.8$) was significantly enlarged by outliers and the resulting confidence intervals of $\bbeta$ were much wider.
The Bayesian method performs similarly to the maximum likelihood estimation method, 
as they both are likelihood-based.

\begin{figure}[htbp]
    \centering
    \includegraphics[width=0.8\textwidth]{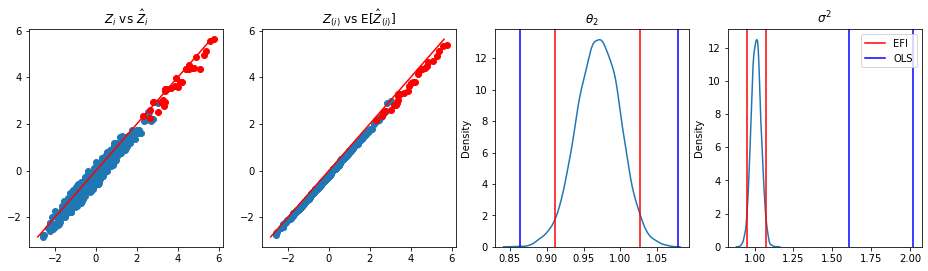}
    \caption{Fidelity of EFI in parameter estimation: (left) 
    scatter plot of residuals: $z_i$ versus $\hat{z}_i$; (middle left) scatter plot of ordered residuals: $z_{(i)}$ versus $\hat{z}_{(i)}$; (middle right) EFI and OLS confidence intervals for $\beta_1$; (right) EFI and OLS confidence intervals for $\sigma^2$. }
    \label{residualscomparsionB}
\end{figure}

In Section \ref{EFI:nonlinear} of the supplement, we present another example which shows that the EFI estimator is less prone to overfitting compared to those from the maximum likelihood or ordinary least square method. This is again attributed to its emphasis on balancing the fitting errors and the likelihood of random errors.

\section{EFI for Semi-Supervised Learning}
 
As mentioned previously, the incorporation of computer technology into science and daily life
has enabled scientists to collect massive volumes of data during the past two decades. 
However, many of the data are unlabeled, as acquisition of labeled data 
for many problems can be expensive.
In such situations, semi-supervised learning (SSL), which is to combine a small amount of labeled data with a large amount of unlabeled data to enhance the learning of a classifier, 
can be of great practical value. However, to make use of unlabeled data, some assumptions 
about the distribution of the data are needed \citep{Chapelle2006SemiSupervisedL}. 
For example, one often makes i) the {\it smoothness assumption} that the points closing to each other are more likely to share a label,  ii) the {\it cluster assumption} that the points form some clusters and those in the same cluster are more likely to share a label (although the data share a label may spread across multiple clusters), or iii) the {\it manifold assumption} that the high-dimensional data lie roughly on a low-dimensional manifold.  
 The existing SSL methods  can be roughly divided into categories such as  consistency regularization, proxy-label, generative models, and graph-based methods. See \cite{Zhu2005SemiSupervisedLL} and \cite{Ouali2020AnOO} for overviews.  

 For a better explanation of the idea behind the current SSL methods, let's consider a text classification problem.
   Let $\bx_l$ denote labeled text data, let $\by_l$ denote the labels, and let $\bx_u$ denote unlabeled text data. 
 \cite{Nigam2006SemiSupervisedTC} modeled the text data using a mixture multinomial 
 distribution as a generative model. 
 By treating  $\by_u$, the labels of $\bx_u$, as missing data, 
 they derived the incomplete data posterior: 
 \begin{equation} \label{SSLeq}
 \begin{split}
 \log \pi(\btheta|\by_l, \bx_l, \bx_u) & = Const+\log\pi(\btheta) 
  + \sum_{x_i \in \bx_l} \log\left( p(y_i=c_j|\btheta) p(x_i|y_i=c_j,\btheta) \right) \\
  & +\sum_{x_i \in \bx_u} \log\big(\sum_{c_j \in S} p(c_j|\btheta) p(x_i|c_j, \btheta) 
   \big), \\
\end{split}
 \end{equation}
where $S$ denotes the set of classes, $\btheta$ denotes the set of parameters 
of the mixture distribution, and $\pi(\btheta)$ denotes the  prior of $\btheta$. 
As implied by (\ref{SSLeq}), the key for SSL is to model the text data $(\bx_l,\bx_u)$ for its 
class-wise distribution, i.e., $p(x_i|c_j,\btheta)$. 
 Otherwise, under the conventional regression setting  where $(\bx_l,\bx_u)$ is treated 
as constants, the last term in (\ref{SSLeq}) will be dropped and the unlabeled data 
will not be able to help to improve the estimate of $\btheta$. 

In contrast, as indicated by Figure \ref{EFInetwork}, EFI uses both the text data $\bx$ and labels $\by$ as input, and models the distribution of $\bx$ in an implicit way. 
Moreover, such an implicit model is general and user friendly due to the universal approximation power of deep neural networks. Therefore, EFI can be easily adapted to SSL by treating $\by_u$ as missing data, which
will be sampled along with the latent variable $\bZ_n$ in step (ii) of Algorithm \ref{EFIalgorithm}. 
To illustrate the potential of EFI in SSL, we consider some classification problems
taken at UCI machine learning repository.

For binary classification, the second term (i.e.,  fitting error term)  
in (\ref{energyfunction00}) 
can be replaced by 
\begin{equation} \label{penalty_semisupervised}
   \sum_{i=1}^{n_{l}} \rho((u_i-x_i^T\hat{\btheta}_i)(2y_i-1))+\sum_{j=1}^{n_{u}} \rho((u_j^{miss}-x_j^T\hat{\btheta}_j)(tanh(\frac{v_j^{miss}}{\tau})),
   \end{equation}
where $\rho(\cdot)$ is a ReLU function, $n_l$ denotes the number of labeled data, $n_u$ denotes the number of unlabeled data, $u_i$ and $u_j^{miss}$ are latent variables, 
$v_j^{miss}$ is defined through the equation $P(y_j^{miss}=1)=\frac{1}{1+e^{-v_j^{miss}/\tau}}$ for the missed label, and $\tau$ is a scale parameter. 
In simulations, we set $\tau=1/50$, ensuring the probability $\frac{1}{1+e^{-v_i^{miss}/\tau}}$  is dichotomized to either 1 or 0, and treat $\{ u_i: i=1,2,\ldots,n_l\}$ and
$\{ u_j^{miss}, v_j^{miss}: j=1,2,\ldots,n_u\}$ as latent variables to simulate at each iteration.
For EFI, (\ref{penalty_semisupervised}) can be changed by replacing $\hat{\btheta}_i$'s and 
$\hat{\btheta}_j$'s with $\bar{\btheta}_n$. 
 For multiclass classification problems, (\ref{penalty_semisupervised}) can be slightly modified.


For each dataset, EFI was run in 5-fold cross-validation,  where the labels were removed from 50\% of the training samples. The results are summarized in Table \ref{SSLtab}, where the results of supervised 
learning were obtained with the classical logistic regression.
For comparison, the self-training algorithm \citep{Yarowsky1995UnsupervisedWS} and label-propagation algorithm \citep{Bengio2006Label,Delalleau2004EfficientNF}, which both 
belong to the category of proxy-label methods and are available in 
the package {\it scikit-learn 1.2.0}, were applied to the datasets. 
\textcolor{black}{In self-training, a model is first trained on labeled data, 
this trained model is then used to predict the classification probabilities of unlabeled data, and predictions with high confidence are added to the training set to retrain the model.
In label-propagation learning, a graph is first created to 
connect the training samples, and then the known labels are propagated through the edges of the graph to unlabeled samples 
in the training set.
A drawback of these methods is that the model is unable to correct its own mistakes, potentially amplifying wrong classifications or biases through the training process. The supervised learning methods are to learn a logistic regression model for each of the datasets.}
 

The comparison shows the superiority of EFI in SSL, which can generally perform much 
better than the self-training and label propagation algorithms. 
For the dataset ``Raisin'', EFI even outperforms supervised learning, 
\textcolor{black}{and we would attribute this performance of EFI to its fidelity in parameter estimation.}
For these datasets, we have also applied EFI to the full training set and labeled data only. 
The results are similar to those from the logistic regression. Refer to the supplement for the detail. 

 \begin{table}[htbp]
\caption{Comparison of EFI with supervised learning  and 
semi-supervised learning  algorithms 
 for some classification problems, where $\mu\pm se$ represents the mean prediction accuracy 
 of the 5-fold cross validation runs and the standard deviation  of the mean value.}
\label{SSLtab}
\begin{center}
\begin{tabular}{cccccccc} \toprule
        &  &  \multicolumn{2}{c}{Supervised Learning} & & \multicolumn{3}{c}{Semi-Supervised Learning} \\ 
         \cline{3-4} \cline{6-8}
 Dataset & size & Full & Labeled Only & & Self-training & Label-propagation  &  EFI \\ \midrule
 Divorce & 170 & 98.82$\pm$1.05 & 96.47$\pm$1.29 & &  92.94$\pm$3.87 &  96.47$\pm$1.29  & 98.82$\pm$1.05 \\
 Diabetes & 520 & 89.62$\pm$1.29 & 87.69$\pm$1.69 & & 87.31$\pm$2.01 & 85.77$\pm$2.01 & 88.08$\pm$ 0.64 \\
Breast Cancer & 699 & 96.52$\pm$0.66 & 95.36$\pm$0.26 & &  94.39$\pm$0.76 & 95.07$\pm$0.52 &  96.23$\pm$0.52 \\
Raisin & 900 & 82.89$\pm$1.16 & 83.78$\pm$0.24 & & 58.67$\pm$1.35 & 50.22$\pm$0.20 & 85.56$\pm$ 0.99 \\ \bottomrule
\end{tabular}
\end{center}
\end{table}

\section{EFI for Complex Hypothesis Tests} \label{EFI:testH}

 As the scale and complexity of scientific data grow, there is often an interest in testing more complex hypotheses. However, within the frequentist framework, it is usually challenging to derive the theoretical reference distributions for the corresponding test statistics. In contrast, EFI operates in the mode of conditional inference, circumventing the need for theoretical reference distributions and enabling easy hypothesis testing based on collected fiducial samples. In this sense, EFI is driving statistical inference toward an automated process.

  To illustrate the automaticity of EFI in hypothesis testing, we consider the following mediation  analysis model \citep{Baron1986TheMV}: 
 \begin{equation} \label{medmodel}
     \begin{split}
      Y&=\beta_T T+\beta M+\bbeta_x^T X +\epsilon_{Y}, \quad \epsilon_{Y}\sim N(0,\sigma_Y^2), \\
      M&=\gamma T +\bgamma_x^T X + \epsilon_{M}, \quad  \epsilon_{M}\sim N(0,\sigma_M^2),
     \end{split}
 \end{equation}
 where $Y$, $T$, $M$ and $X$ denote the outcome, treatment, mediator and design matrix, 
 respectively.  The mediator effect can be inferred by testing the hypothesis $H_0:\beta\gamma=0$ against $H_A:\beta\gamma\neq 0$ with the natural 
 test statistic $\hat{\beta}\hat{\gamma}$. As mentioned by \cite{Miles2021mmoptimal},
 this is a challenging inferential task due to the non-uniform asymptotics of 
 the univariate test statistic. Specifically, the null hypothesis consists of three cases: (i) $\beta=0,\gamma\neq 0$, (ii) $\beta\neq 0, \gamma=0$, and (iii) 
 $\beta=\gamma=0$, while the theoretical reference distribution of $\hat{\beta} \hat{\gamma}$ under case (iii) is different from that under cases (i) and (ii). 
 It is known that traditional statistical tests such as Sobel's test \citep{Sobel1982sobel} 
  and Max-P test \citep{Mackinnon2002MaxP} are conservative under case (iii).
 Recently, with a fine theoretical analysis, \cite{Miles2021mmoptimal} derived a test that is minimax optimal with respect to local power over the alternative parameter space while preserving type-I error. 
 
 In contrast, applying EFI to such a composite hypothesis test is straightforward. 
 The mediator effect can be directly inferred based on the fiducial samples of 
 $\beta$ and $\gamma$, which can be collected along with iterations of Algorithm \ref{EFIalgorithm}. \textcolor{black}{
 We note that the bootstrap method \citep{EfronT1993} works in a similar way to EFI, which performs conditional inference for the model parameters and approximates their confidence distributions in an 
 empirical way. In this paper, we implemented the bootstrap method for the model (\ref{medmodel}) using the R package ``mediation'' \citep{Tingley2014mediationRP} under the default setting.}

 \paragraph{Simulation Studies} For illustration, we simulated 100 datasets from the model 
 (\ref{medmodel}) under each of the cross settings of $n\in \{500,1000, 2000\}$ and  
 $(\beta,\gamma)\in \{(0.2,0), (0,0.2), (0,0)\}$, 
 where $X=(X_1,X_2)$ consists of two independent standard Gaussian random variables, \textcolor{black}{
 $\sigma_Y=\sqrt{2}$, $\sigma_M=1$}, $\bbeta_x=(0.2,0.4)^T$, $\beta_T=1$, $\bgamma_x=(0.4,0.6)^T$.  
 The results are summarized in Table \ref{medtab_typeI}, which 
 indicates the validity and superiority of EFI in testing complex hypotheses. 
 Compared to the other methods, the type-I errors
 of EFI are much closer to the nominal level 0.05, see Figure \ref{fig:Table6} in the supplement for a graphical view of the results.

\begin{table}[!ht]
\caption{Type-I errors of the Sobel, MaxP, minimax optimal (mm-opt), bootstrap, and EFI tests for the mediator effect, where the significance level of each test is $\alpha=0.05$. } 
\label{medtab_typeI}
 \begin{center} 
  \begin{tabular}{cccccccccccc} \toprule 
   & \multicolumn{3}{c}{$n=500$} & & \multicolumn{3}{c}{$n=1000$} & & \multicolumn{3}{c}{$n=2000$} \\ \cline{2-4} \cline{6-8} \cline{10-12} 
 $(\beta,\gamma)$  & (0.2,0) & (0,0.2) & (0,0) & &  (0.2,0) & (0,0.2) & (0,0) & & (0.2,0) & (0,0.2) & (0,0) \\ \midrule 
 Sobel & 0.01 & 0.00 & 0.00 & & 0.05 & 0.02 & 0.00 && 0.04 & 0.06 & 0.00 \\ 
 MaxP  & 0.04 & 0.03 & 0.00 & & 0.06 & 0.05 & 0.00 && 0.07 & 0.07 & 0.00  \\
 mm-opt & 0.05 & 0.04 & 0.03 && 0.06 & 0.05 & 0.07 && 0.07 & 0.07 & 0.07 \\ 
 Bootstrap & 0.06 & 0.05 & 0.01 && 0.04 & 0.07 & 0.00 && 0.13 & 0.04 & 0.00 \\ 
 EFI &    0.05 & 0.06 & 0.04 && 0.06 & 0.04 & 0.04 && 0.05 & 0.04 & 0.05 \\
 \bottomrule
 \end{tabular}
 \end{center}
 \end{table}

  Further, we simulated datasets for comparison of the powers of these tests, where 
 $(\beta,\gamma)\in \{(0.1,0.4)$, $(-0.1,0.4)$, $(0.2,0.2)\}$ and other parameters were as set 
 in the type-I error experiments. The results are summarized in Table \ref{medtab_power}, see also Figure \ref{fig:Table7} in the supplement for a graphical view of the results. 
 The comparison indicates that the EFI test has higher power than the other methods. \textcolor{black}{The superiority of EFI over the Bootstrap method is particularly encouraging, highlighting the great potential of EFI in conditional inference and advancing the automation of statistical inference.}

\begin{table}[htbp]
\caption{Powers of the Sobel, MaxP, minimax optimal (mm-opt), bootstrap, and EFI tests for the mediator effect, where the significance level of each test is $\alpha=0.05$.
Part of the results of mm-opt are not available (NA), as the test is inefficient for 
the alternative hypothesis settings of $(\beta,\gamma)$ when the sample size becomes large.
} 
\label{medtab_power}
 \begin{center} 
 \begin{adjustbox}{width=1.0\textwidth}
  \begin{tabular}{cccccccccccc} \toprule 
   & \multicolumn{3}{c}{$n=500$} & & \multicolumn{3}{c}{$n=1000$} & & \multicolumn{3}{c}{$n=2000$} \\ \cline{2-4} \cline{6-8} \cline{10-12} 
 $(\beta,\gamma)$  & (0.1,0.4) & (-0.1,0.4) & (0.2,0.2) & &  (0.1,0.4) & (-0.1,0.4) & (0.2,0.2) & & (0.1,0.4) & (-0.1,0.4) & (0.2,0.2) \\ \midrule 
 Sobel & 0.29 & 0.31 & 0.67 & & 0.65 & 0.57 & 0.96 && 0.78 & 0.89 & {\bf 1.00} \\ 
 MaxP  & 0.34 & 0.37 & 0.79 & & 0.66 & 0.59 & {\bf 0.98} && 0.78 & 0.89 & {\bf 1.00}  \\
 mm-opt & 0.34 & 0.37 & 0.79 && NA & NA & NA && NA & NA & NA \\ 
  Bootstrap & 0.33 & 0.42  & 0.52 &&  0.59 & 0.51 & 0.93 &&  {\bf 0.93} & 0.92 & {\bf 1.00}   \\ 
 EFI &   {\bf 0.48} & {\bf 0.64} & {\bf 0.84} && {\bf 0.70} & {\bf 0.74} & 0.97 && 0.86 & {\bf 0.95} & {\bf 1.00} \\
 \bottomrule
 \end{tabular}
 \end{adjustbox}
 \end{center}
 \end{table}

\begin{remark} \label{Remtest} This example demonstrates the potential of  EFI in hypothesis testing. 
Due to its conditional inference nature, EFI eliminates the need for theoretical reference distributions, thereby automating the process of hypothesis testing.
Moreover, compared to frequentist methods, EFI lowers the requirement for sample size. In particular, under high-dimensional scenarios where the model dimension $p$ grows with the sample size $n$,  frequentist methods typically require ${p^2}/n \to 0$  for achieving asymptotic normality  (see e.g. \cite{Portnoy1986OnTC} and \cite{Portnoy1988}). For EFI, we believe that $p/n \to 0$ is sufficient for achieving valid fiducial inference, which ensures   Assumption (\ref{ass*}) holds for many data generation equations.   
A further theoretical study on this issue will be reported elsewhere.
\end{remark}

\section{Discussion} 


We have developed EFI as a novel and flexible framework for statistical inference,  applicable to general statistical models regardless of the type of noise, whether additive or non-additive. 
We have also introduced the EFI-DNN algorithm for effective implementation of EFI, which jointly imputes the realized random errors in observations
using stochastic gradient Markov chain Monte Carlo 
and estimates the inverse function using a sparse DNN based on all available data. The consistency of the sparse DNN estimator ensures that the uncertainty embedded in the observations is properly propagated to the model parameters through the estimated inverse function, thereby validating downstream statistical inference. The EFI-DNN algorithm has demonstrated appealing properties in parameter estimation, hypothesis testing, and semi-supervised learning. 
Additionally, thanks to the conditional inference nature of EFI and the universal approximation power of DNNs, the EFI-DNN algorithm holds  great potential to automate statistical inference.  
Toward this direction, further study on the theoretical properties 
of the EFI-DNN inference is of great interest. 


The EFI-DNN algorithm is scalable, which can handle very large-scale datasets with the use of adaptive stochastic gradient MCMC algorithms. Specifically, its parameter updating step can be accelerated by the mini-batch strategy; and the latent variable sampling step can be executed separately for each observation, enabling straightforward implementation in a parallel architecture. Theoretical guarantees for the convergence of the algorithm have been studied; we established the weak convergence of the imputed random errors and the consistency of the inverse function estimator.
  
 This paper has considered only the problems
 where  $p$ is either fixed or grows with $n$ slowly enough to satisfy Assumption \ref{ass9}-(ii).  
 Extending the EFI-DNN algorithm to high-dimensional problems, where $p> n$ and/or $p$ grows with $n$ at a higher rate, is possible.  
 For instance, if the high-dimensional issue arises from including an excessively large number of covariates,  a model-free sure independence screening procedure (see e.g.,\cite{XueLiang2017a,Cui2015ModelFreeFS}) can be performed 
  on the data before applying the algorithm.
Furthermore, if one aims to examine the uncertainty of a parameter for an individual covariate, the Markov neighborhood regression (MNR) approach  \cite{LiangXJ2020MNR,Liang2022DoubleR,Sun2022MarkovNR} can be applied. This approach decomposes the high-dimensional inference problem into a sequence of low-dimensional inference problems based on the graphical model formed by the covariates.


 

\section*{Availability} 

The code that implements the EFI method can be found at \url{https://github.com/sehwankimstat/EFI}.

\section*{Acknowledgments} 
Liang's research is support in part by the NSF grants DMS-2015498 and DMS-2210819, and the NIH grants R01-GM126089 and R01-GM152717. The authors thank the editor, associate editor, and three referees for their constructive comments, which have led to significant improvement of this paper.

\newpage

\appendix

{\Large \bf Appendix: Supplement for ``Extended Fiducial Inference:  Toward an Automated Process of Statistical Inference''} 

\vspace{0.1in}

\setcounter{section}{0}
\renewcommand{\thesection}{$\S$\arabic{section}}
\setcounter{table}{0}
\renewcommand{\thetable}{S\arabic{table}}
\setcounter{figure}{0}
\renewcommand{\thefigure}{S\arabic{figure}}
\setcounter{equation}{0}
\renewcommand{\theequation}{S\arabic{equation}}
\setcounter{lemma}{0}
\renewcommand{\thelemma}{S\arabic{lemma}}
\setcounter{theorem}{0}
\renewcommand{\thetheorem}{S\arabic{theorem}}
\setcounter{remark}{0}
\renewcommand{\theremark}{S\arabic{remark}}
\setcounter{assumption}{0}
\renewcommand{\theassumption}{A\arabic{assumption}}

This supplement is organized as follows. Section \ref{sectproof1} provides the proofs for Theorem \ref{thm1} and Theorem \ref{thm2}. Section \ref{sectproof2} provides the proof for Theorem \ref{thm3}. 
Section \ref{EFDproof}  provides the proof for Example 1 of Section 3.1 of the main text. 
Section \ref{moreresults} provides more numerical results. Section \ref{moresettings} presents detailed parameter settings used in the numerical experiments. 


\section{Proof of Theorem \ref{thm1} and Theorem \ref{thm2}} \label{sectproof1}
 
 \noindent {\bf Notation:} For both Theorem \ref{thm1} and Theorem \ref{thm2}, the sample size $n$ is fixed. For simplicity of notation, we will replace the dataset notation $(\bX_n,\bY_n,\bZ_n)$ by $(\bx_n,\by_n,\bz_n)$ and further drop the subscripts of $\bx_n$, $\by_n$, 
 $\bz_n$, $\bw_n$ and $\mathcal{W}_n$ in the remaining part of this section. 
 Additionally, for convenience, we redefine $h(\bw):=\nabla_{\bw} \log \pi(\bw|\bx,\by)$, $\mathcal{H}(\bw,\bz):=\nabla_{\bw} \log \pi(\bw|\bx,\by,\bz)$, 
 $\pi_D(\bz|\bw):=\pi(\bz|\bx,\by,\bw)$, and $F_D(\bz,\bw):=\log \pi_D(\bz|\bw)$, where $D$ represents a training dataset. Furthermore, 
 with a slight abuse of notation, we use $\bz_k$ and $\bw_k$ to denote the latent variable sample and parameter estimate obtained at iteration $k$ of Algorithm \ref{EFIalgorithm}.

 \subsection{Proof of Theorem \ref{thm1}}
 
 With the simplified notation, 
 the equation (\ref{identityeq}) of the main text can be rewritten as  
 \begin{equation} \label{sa00}
 h(\bw)=\int  \mH(\bw,\bz) \pi_D(\bz|\bw) d\bw=0,
\end{equation}
where $\bw\in \mathbb{R}^{d_w}$, $\bz \in \mathbb{R}^{d_z}$, and $d_w$ and $d_z$ denote the dimensions of $\bw$ and $\bz$, respectively. 
The adaptive SGLD algorithm used for solving  equation (\ref{sa00}) can be written in a general form as 
\begin{equation} \label{ASGLDeq}
\begin{split} 
\bz_{k+1}& =\bz_{k}+\epsilon_{k+1} g(\bz_k,\bw_{k},u_{D,k})+\sqrt{2\epsilon_{k+1}} \be_{k+1}, \\
\bw_{k+1}&=\bw_{k}+\gamma_{k+1} \mH(\bw_{k},\bz_{k+1}),
\end{split}
\end{equation}
where $k \in \mathbb{N}$ indexes iterations, $\epsilon_{k+1}\in \mathbb{R^+}$ denotes the learning rate, $\gamma_{k+1} \in \mathbb{R^+}$ denotes the step size, $\be_k \sim N(0,I_{d_{\bz}})$ is a zero mean standard Gaussian random vector,  $g(\bz_k,\bw_{k},u_{D,k}): \mathbb{R}^{d_{\bz}} \times \mathbb{R}^{d_w} \times \mathcal{U} \to \mathbb{R}^{d_{\bz}}$ denotes an unbiased estimator of $\nabla_{\bz} F_D(\bz_k,\bw_{k})$, $\mathcal{U}=\{1,2,\ldots,n\}$ is the index set of the observations in $D$,  and $\{u_{D,k}: k=1,2,\ldots\}$ is a sequence of i.i.d random elements of $\mathcal{U}$ with probability measure $\mathcal{Q}_D$. In general, $u_{D,k}$ can be understood as the index set of a
mini-batch sample. In the case that the full dataset is used at each iteration, we have  
$u_{D,k}=\mathcal{U}$ for all $k$.

 To prove the convergence of the adaptive SGLD algorithm (\ref{ASGLDeq}), 
 we make the following assumptions. 
   
\begin{assumption} \label{ass1}
 The step size sequence $\{\gamma_k\}_{k \in \mathbb{N}}$ is a positive decreasing 
 sequence of real numbers such that 
\begin{equation}
   \lim_{k\to \infty} \gamma_{k}=0, \quad 
   \sum_{k=1}^{\infty} \gamma_{k}= \infty.
 \end{equation}
 There exist $\delta>0$ and a stationary point $\bw^*$ such that for any $\bw\in \mathcal{W}$,
 \[
  \langle \bw-\bw^*, h(\bw) \rangle \leq -\delta \|\bw-\bw^*\|^2,
  \]
 and, in addition, 
  \begin{equation}
  \liminf_{k\to \infty} 2\delta\frac{\gamma_{k}}{\gamma_{k+1}}+\frac{\gamma_{k+1}-\gamma_{k}}{\gamma_{k+1}^2}>0,
\end{equation}
where $\|\cdot\|$ denotes the default $l_2$-norm. 
\end{assumption}

\begin{assumption} \label{ass2} $F_D(\bw,\bz)$ is \textit{M-smooth} on $\bw$ and $\bz$ with $M>0$, and $(m,b)$-dissipative on $\bz$ for some constants $m>1$ and $b>0$. In other words, for any $\bz, \bz^{\prime}, \bz^{\prime\prime}\in \mathcal{X}$ and $\bw, \bw^{\prime} \in\mathcal{W}$, the following  conditions are satisfied:
\begin{align}
  \mbox{(smoothness)}  & \quad \|\nabla_{\bz}F_D(\bw,\bz^{\prime})- \nabla_{\bz}F_D(\bw^{\prime},\bz^{\prime\prime}) \| \leq M \|\bz^{\prime}-\bz^{\prime\prime} \|  + M\| \bw-\bw^{\prime} \|,  \label{M1eq} \\
  \mbox{(dissipativity)} &  \quad \langle \nabla_{\bz}F_D(\bw^*,\bz), \bz \rangle \leq b-m\|\bz\|^2, 
    \label{disspateq} 
\end{align}
where 
$\bw^*$ is a stationary point  as defined in Assumption \ref{ass1}. 
\end{assumption}

Let $(\bw^*,\bz^*)$ be a minimizer of $F_D(\bw,\bz)$ and $\bw^*$ be a stationary point such that $\nabla_{\bz}F_D(\bw^*,\bz^*)=0$. By (\ref{disspateq}),  we have $\|\bz^*\|^2\leq \frac{b}{m}$. Therefore,
\[
\begin{split}
    \|\nabla_{\bz}F_D(\bw,\bz)\|&\leq \| \nabla_{\bz}F_D(\bw^*,\bz^*)\|+M\|\bz^*-\bz\|+M\|\bw-\bw^*\| \\
    & \leq M\|\bw-\bw^*\|+M\|\bz\|+{B},
\end{split}    
\]
where $B=M \sqrt{\frac{b}{m}}$, and 
\begin{equation} \label{boundF}
\|\nabla_{\bz} F_D(\bw,\bz)\|^2\leq 3M^2\|\bz\|^2+3M^2\|\bw-\bw^*\|^2+3 B^2. 
\end{equation}

\begin{assumption} \label{ass3}   Let $R_k=g(\bw_k,\bz_k,u_{D,k})- \nabla_{\bz} F_D(\bw_k,\bz_k)$. Assume that $R_k$'s are mutually independent white noise, 
and they satisfy the conditions
\begin{equation} \label{noiseeq}
     \mE(R_k|\mathcal{F}_k)=0,   \quad  
    \mE\|R_k\|^2\leq \delta_g (M^2 \mE\|\bz_k\|^2+M^2 \mE\|\bw_k-\bw^*\|^2+B^2), 
\end{equation}
where $\delta_g$ and $B$ are positive constants, and $\mathcal{F}_k=\sigma\{\bw_1,x_1,\bw_2,x_2,\ldots, \bw_k,x_k\}$ denotes a $\sigma$-filtration.  
\end{assumption}

\begin{assumption} \label{ass4}
There exist positive constants $M$ and $B$ such that 
\[
\begin{split}
    \|\mH(\bw,\bz)\|^2 & \leq M^2 \|\bw-\bw^*\|^2+M^2 \|\bz\|^2+B^2.\\
\end{split}
\]
\end{assumption}

\begin{lemma}(Uniform $L^2$ bounds; Lemma A.2 of  \cite{DongZLiang2022}) \label{l2bound} Suppose Assumptions \ref{ass1}-\ref{ass4} hold, and the learning rate sequence $\{\epsilon_k: k=1,2,\ldots\}$ and the step size sequence $\{\gamma_k: k=1,2,\ldots\}$ are set in the form:
\[
\epsilon_k=\frac{C_{\epsilon}}{c_{\epsilon}+k^{\alpha}}, \quad \gamma_k=\frac{C_{\gamma}}{c_{\gamma}+k^{\beta}},
\]
for some constants $C_{\epsilon}>0$, $c_{\epsilon}>0$, $C_{\gamma}>0$, $c_{\gamma}>0$, $\alpha,\beta\in (0,1]$, and $\beta \leq \alpha \leq \min\{1,2 \beta\}$.
Then there exist constants $G_z$ and $G_{\bw}$ such that $\mE \|\bz_k\|^2 \leq G_z$ and $\mE \|\bw_k-\bw^*\|^2 \leq G_{\bw}$ for all $k=0,1,2,\ldots$.
\end{lemma}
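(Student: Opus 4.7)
The plan is to bound the two recursions jointly by induction on $k$, exploiting the two-timescale condition $\beta \le \alpha \le 2\beta$ to decouple them asymptotically. Throughout I will rely on the dissipativity/smoothness package of Assumption \ref{ass2}, the strong one-point monotonicity of $h(\bw)$ from Assumption \ref{ass1}, the second-moment bound on the stochastic gradient from Assumption \ref{ass3}, and the growth bound on $\mathcal{H}$ from Assumption \ref{ass4}.

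First I would expand $\|\bw_{k+1}-\bw^*\|^2$ using the update $\bw_{k+1}=\bw_k+\gamma_{k+1}\mathcal{H}(\bw_k,\bz_{k+1})$ and take conditional expectation given $\mathcal{F}_k$. By Fisher's identity the conditional mean of $\mathcal{H}(\bw_k,\bz_{k+1})$, when $\bz_{k+1}$ has law close to $\pi_D(\cdot|\bw_k)$, is close to $h(\bw_k)$, so Assumption \ref{ass1} gives the drift term $\langle \bw_k-\bw^*, h(\bw_k)\rangle \le -\delta\|\bw_k-\bw^*\|^2$. The square term is handled by Assumption \ref{ass4}, producing $\gamma_{k+1}^2 M^2 \mE\|\bz_{k+1}\|^2$ on the right-hand side. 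This yields a recursion of the schematic form
\begin{equation*}
\mE\|\bw_{k+1}-\bw^*\|^2 \le (1-2\delta\gamma_{k+1}+C_1\gamma_{k+1}^2)\mE\|\bw_k-\bw^*\|^2+C_2\gamma_{k+1}^2\mE\|\bz_{k+1}\|^2+C_3\gamma_{k+1}^2.
\end{equation*}

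Next I would expand $\|\bz_{k+1}\|^2$ via the SGLD update, take conditional expectation, and use the decomposition $g=\nabla_{\bz}F_D(\bw_k,\bz_k)+R_k$. Dissipativity at $\bw^*$ plus the $M$-Lipschitz dependence of $\nabla_{\bz}F_D$ on $\bw$ provides the key drift inequality
\begin{equation*}
\langle \bz_k,\nabla_{\bz}F_D(\bw_k,\bz_k)\rangle \le b-m\|\bz_k\|^2+M\|\bw_k-\bw^*\|\cdot\|\bz_k\|,
\end{equation*}
and Young's inequality absorbs the cross term into $\tfrac{m}{2}\|\bz_k\|^2+C\|\bw_k-\bw^*\|^2$. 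Combining this with the noise bound from Assumption \ref{ass3}, the Gaussian contribution $2\epsilon_{k+1}d_{\bz}$, and the quadratic smoothness bound (\ref{boundF}), I obtain a companion recursion
\begin{equation*}
\mE\|\bz_{k+1}\|^2 \le (1-m\epsilon_{k+1}+C_4\epsilon_{k+1}^2)\mE\|\bz_k\|^2+C_5\epsilon_{k+1}\mE\|\bw_k-\bw^*\|^2+C_6\epsilon_{k+1}.
\end{equation*}

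Now I would run a simultaneous induction: assume $\mE\|\bw_j-\bw^*\|^2\le G_{\bw}$ and $\mE\|\bz_j\|^2\le G_z$ for all $j\le k$ and constants $G_{\bw},G_z$ to be chosen, then close the induction at step $k+1$. Substituting the inductive hypothesis into the $\bz$-recursion shows that whenever $C_5 G_{\bw}\epsilon_{k+1}+C_6\epsilon_{k+1}\le m\epsilon_{k+1}G_z/2$, one can take $G_z$ proportional to $(C_5 G_{\bw}+C_6)/m$; substituting into the $\bw$-recursion with the resulting $G_z$ and using Assumption \ref{ass1}'s condition $\liminf 2\delta\gamma_k/\gamma_{k+1}+(\gamma_{k+1}-\gamma_k)/\gamma_{k+1}^2>0$ guarantees that the linear contraction $(1-2\delta\gamma_{k+1})$ dominates the $O(\gamma_{k+1}^2)$ terms for all $k\ge k_0$. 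The schedule constraint $\alpha\le 2\beta$ is what keeps $\gamma_{k+1}^2\mE\|\bz_{k+1}\|^2=O(\gamma_{k+1}^2\epsilon_{k+1}^{-1})=O(\gamma_{k+1})$ under control, which is exactly the condition needed for the coupled system to stabilize.

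The main obstacle will be the coupling: the $\bw$-bound depends on the current $\bz$-bound and vice versa, so naive induction can fail unless the constants $G_{\bw}$ and $G_z$ are chosen in the right order and the schedule inequality $\beta\le\alpha\le\min\{1,2\beta\}$ is used sharply to control the cross terms. A careful accounting of the Young's-inequality constants — so that the drift coefficients $\delta$ (for $\bw$) and $m/2$ (for $\bz$) strictly dominate the perturbations fed in from the other iterate — closes the argument, and the finite initial segment $k<k_0$ is handled by the trivial bound that each of $\mE\|\bz_k\|^2$ and $\mE\|\bw_k-\bw^*\|^2$ is finite for every fixed $k$.
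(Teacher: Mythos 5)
The paper does not actually prove this lemma; it is imported verbatim as Lemma A.2 of \cite{DongZLiang2022}, so the only fair comparison is with the standard argument that reference uses — which is indeed the coupled induction on $\mE\|\bz_k\|^2$ and $\mE\|\bw_k-\bw^*\|^2$ that you set up, with dissipativity driving the $\bz$-contraction and the schedule condition $\beta\le\alpha\le\min\{1,2\beta\}$ taming the cross-feeding. Your $\bz$-recursion (dissipativity at $\bw^*$, $M$-smoothness to transfer to $\bw_k$, Young's inequality on the cross term, Assumption \ref{ass3} for the noise) is sound, and the overall induction architecture is the right one.

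The genuine gap is in the $\bw$-recursion. To obtain the contraction factor $(1-2\delta\gamma_{k+1}+O(\gamma_{k+1}^2))$ you replace $\mE[\mH(\bw_k,\bz_{k+1})\mid\mathcal{F}_k]$ by $h(\bw_k)$ on the grounds that the law of $\bz_{k+1}$ is ``close to'' $\pi_D(\cdot\mid\bw_k)$. That is precisely what cannot be assumed here: $\bz_{k+1}$ is a single SGLD step from $\bz_k$, and for an a priori moment bound that must hold for \emph{all} $k$ — including the early iterations, where the chain is nowhere near its conditional stationary law — the bias $\mE[\mH(\bw_k,\bz_{k+1})\mid\mathcal{F}_k]-h(\bw_k)$ is not small and is not controlled by Assumptions \ref{ass1}--\ref{ass4}. (Controlling exactly this bias is the job of the Poisson-equation machinery in Assumption \ref{ass5}, which is deliberately \emph{not} among this lemma's hypotheses and which in any case yields a telescoping/averaged control, not a per-step one.) Without that step, the only bound available from Assumption \ref{ass4} alone is $2\gamma_{k+1}|\langle\bw_k-\bw^*,\mH\rangle|\le\gamma_{k+1}(1+M^2)\|\bw_k-\bw^*\|^2+\gamma_{k+1}(M^2\|\bz_{k+1}\|^2+B^2)$, whose prefactor exceeds $1$ and, since $\sum_k\gamma_k=\infty$, produces a divergent Gronwall bound rather than a uniform one. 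So the induction as written does not close; one needs either a per-sample drift condition on $\mH$ itself, or the Poisson-equation decomposition, to make the $\bw$-half of the argument rigorous. A smaller point: the claim $\gamma_{k+1}^2\mE\|\bz_{k+1}\|^2=O(\gamma_{k+1}^2\epsilon_{k+1}^{-1})$ is not how $\alpha\le2\beta$ enters — under the inductive hypothesis that term is already $O(\gamma_{k+1}^2)$; the schedule constraint is needed to control how the $O(\gamma_k)$ drift of $\bw_k$ perturbs the $\epsilon_k$-scale contraction of the $\bz$-chain.
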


\begin{assumption} \label{ass5} (Solution of Poisson equation) 
For any $\bw\in\mathcal{W}$, $\bz \in \mX$, and a function $V(\bz)=1+\|\bz\|$, 
there exists a function $\mu_{\bw}$ on $\mX$ that solves the Poisson equation $\mu_{\bw}(\bz)-\mathcal{T}_{\bw}\mu_{\bw}(\bz)=\mathcal{H}(\bw,\bz)-h(\bw)$, 
where $\mathcal{T}_{\bw}$ denotes a probability transition kernel with
$\mathcal{T}_{\bw}\mu_{\bw}(\bz)=\int_{\mX} \mu_{\bw}(\bz')\mathcal{T}_{\bw}(\bz,\bz') d \bz'$,  such that 
\begin{equation} \label{poissoneq0}
   \mathcal{H}(\bw_k,\bz_{k+1})=h(\bw_k)+\mu_{\bw_k}(\bz_{k+1})-\mathcal{T}_{\bw_k}\mu_{\bw_k}(\bz_{k+1}), \quad k=1,2,\ldots.
\end{equation}
Moreover, for all $\bw, \bw'\in \mathcal{W}$ and $\bz \in \mX$, 
$\|\mu_{\bw}(\bz)-\mu_{\bw'}(\bz)\|  + \|\mathcal{T}_{\bw}\mu_{\bw}(\bz)-\mathcal{T}_{\bw'} \mu_{\bw'}(\bz)\| \leq \varsigma_1 \|\bw-\bw'\| V(\bz)$ and
$\|\mu_{\bw}(\bz) \| + \|\mathcal{T}_{\bw} \mu_{\bw}(\bz) \|\leq \varsigma_2 V(\bz)$ for some constants $\varsigma_1>0$ and $\varsigma_2>0$. 
\end{assumption}

\begin{lemma}\label{thm1A} [Theorem A.1 of \cite{DongZLiang2022}] Suppose Assumptions \ref{ass1}-\ref{ass5} hold, and the learning rate sequence $\{\epsilon_k: k=1,2,\ldots\}$ and the step size sequence $\{\gamma_k: k=1,2,\ldots\}$ are chosen as in 
Lemma \ref{l2bound}. Then there exists a root $\bw^*\in \{\bw: h(\bw)=0\}$ such that
\begin{equation} \label{thetacon2}
 \mathbb{E}\|\bw_k-\bw^*\|^2\leq \xi \gamma_k, \quad k\geq k_0,
 \end{equation} 
 where $\xi$ and $k_0$ are some constants determined by  the sequences $\{\epsilon_k\}$ and $\{\gamma_k\}$ and
 the constants $(\delta,\delta_g,M,B$, $m,b,\varsigma_1,\varsigma_2)$.
\end{lemma}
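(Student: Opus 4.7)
The plan is to establish the bound through a standard stochastic approximation argument adapted to the Markov-noise setting, where the crux is that the sample $\bz_{k+1}$ at each step is drawn from an adaptive transition kernel rather than directly from the target $\pi_D(\cdot|\bw_k)$. First I would expand the squared error of the parameter update,
\begin{equation*}
\|\bw_{k+1}-\bw^*\|^2 = \|\bw_k-\bw^*\|^2 + 2\gamma_{k+1}\langle \bw_k-\bw^*,\mathcal{H}(\bw_k,\bz_{k+1})\rangle + \gamma_{k+1}^2\|\mathcal{H}(\bw_k,\bz_{k+1})\|^2,
\end{equation*}
and decompose the cross term using the Poisson equation from Assumption \ref{ass5}:
\begin{equation*}
\mathcal{H}(\bw_k,\bz_{k+1}) = h(\bw_k) + \mu_{\bw_k}(\bz_{k+1}) - \mathcal{T}_{\bw_k}\mu_{\bw_k}(\bz_{k+1}).
\end{equation*}
The strong-monotonicity condition in Assumption \ref{ass1} applied to $\langle \bw_k-\bw^*,h(\bw_k)\rangle$ produces the contractive factor $-2\delta\gamma_{k+1}\|\bw_k-\bw^*\|^2$, while the quadratic remainder $\gamma_{k+1}^2\|\mathcal{H}\|^2$ is dominated via Assumption \ref{ass4} together with the uniform $L^2$ bounds $\mathbb{E}\|\bz_k\|^2\leq G_z$ and $\mathbb{E}\|\bw_k-\bw^*\|^2\leq G_{\bw}$ provided by Lemma \ref{l2bound}.

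The main obstacle is handling the Poisson residual $\mu_{\bw_k}(\bz_{k+1})-\mathcal{T}_{\bw_k}\mu_{\bw_k}(\bz_{k+1})$, which is not a martingale difference because the kernel $\mathcal{T}_{\bw_k}$ itself depends on the current iterate $\bw_k$. The standard trick is Abel summation (summation by parts): rewrite the telescoping terms so that one gets a genuine martingale piece $\mu_{\bw_k}(\bz_{k+1})-\mathbb{E}[\mu_{\bw_{k+1}}(\bz_{k+1})\mid \mathcal{F}_k]$ plus a boundary term and a remainder controlled by the Lipschitz bound $\|\mu_{\bw}-\mu_{\bw'}\|+\|\mathcal{T}_{\bw}\mu_{\bw}-\mathcal{T}_{\bw'}\mu_{\bw'}\|\leq \varsigma_1\|\bw-\bw'\|V(\bz)$ in Assumption \ref{ass5}. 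Each such remainder carries a factor of $\|\bw_{k+1}-\bw_k\|=O(\gamma_{k+1})$ (after using Assumption \ref{ass4} and Cauchy–Schwarz), so after taking expectations and using $\mathbb{E}[V(\bz_k)^2]<\infty$ from Lemma \ref{l2bound}, the cumulative noise contribution is of the same order as the $\gamma_{k+1}^2$ remainder.

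Combining these pieces yields a one-step inequality of the form
\begin{equation*}
\mathbb{E}\|\bw_{k+1}-\bw^*\|^2 \leq (1-2\delta\gamma_{k+1})\mathbb{E}\|\bw_k-\bw^*\|^2 + C\gamma_{k+1}^2,
\end{equation*}
for a constant $C$ depending on $(M,B,m,b,\delta_g,\varsigma_1,\varsigma_2,G_z,G_{\bw})$. The final step is to close the recursion: assuming $\mathbb{E}\|\bw_k-\bw^*\|^2\leq \xi\gamma_k$, one checks that the induction propagates provided
\begin{equation*}
\xi\,\frac{\gamma_{k+1}}{\gamma_k}(1-2\delta\gamma_{k+1}) + C\gamma_{k+1}^2 \leq \xi\gamma_{k+1},
\end{equation*}
which is equivalent to $\xi\bigl(2\delta\frac{\gamma_k}{\gamma_{k+1}}+\frac{\gamma_{k+1}-\gamma_k}{\gamma_{k+1}^2}\bigr)\geq C$ in the limit. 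The liminf hypothesis on step sizes in Assumption \ref{ass1} is precisely what guarantees the left-hand side stays strictly positive asymptotically, so a sufficiently large $\xi$ and threshold $k_0$ can be chosen to close the induction. The regime $\beta\leq\alpha\leq\min\{1,2\beta\}$ from Lemma \ref{l2bound} is what keeps both the learning-rate-induced discretization bias and the Poisson-residual contribution of matching order, so that the $O(\gamma_k)$ rate is preserved without being degraded to something slower.
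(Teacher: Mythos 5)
Your proposal is correct and follows essentially the same route as the proof the paper defers to (Theorem A.1 of Dong, Zhang and Liang, 2022, which this lemma simply imports): Poisson-equation decomposition of $\mathcal{H}(\bw_k,\bz_{k+1})$, strong monotonicity of $h$ from Assumption \ref{ass1} for the contraction, Assumption \ref{ass4} together with the uniform $L^2$ bounds of Lemma \ref{l2bound} for the quadratic remainder, Abel summation plus the Lipschitz continuity of $\bw\mapsto\mu_{\bw}$ from Assumption \ref{ass5} for the Markov-noise residual, and an induction closed by the liminf step-size condition. The only blemish is a typo in your intermediate induction inequality---substituting the hypothesis into the recursion gives $(1-2\delta\gamma_{k+1})\xi\gamma_k + C\gamma_{k+1}^2 \le \xi\gamma_{k+1}$ rather than the version carrying a factor $\gamma_{k+1}/\gamma_k$---but your final equivalent form $\xi\bigl(2\delta\tfrac{\gamma_k}{\gamma_{k+1}}+\tfrac{\gamma_{k+1}-\gamma_k}{\gamma_{k+1}^2}\bigr)\ge C$ is the correct one and is exactly what the liminf hypothesis in Assumption \ref{ass1} guarantees.
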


\paragraph{Proof of Theorem \ref{thm1}}
\begin{proof} \cite{DongZLiang2022} proved the result (\ref{thetacon2}) for the adaptive 
Langevinized ensemble Kalman filter (LEnKF) algorithm, which is equivalent to an adaptive pre-conditioned SGLD algorithm. Since the SGLD algorithm (\ref{ASGLDeq}) is a special case 
of the pre-conditioned SGLD algorithm, this theorem can be proved by following the proof of 
\cite{DongZLiang2022} with minor modifications. 
We omit the details of the proof.
\end{proof}

\begin{remark} \label{remrate} 
Regarding the convergence rate of $\bw_k$, we note that \cite{DongZLiang2022} gives an explicit form of $\xi$. Refer to Theorem A.1 of \cite{DongZLiang2022} for the detail. 
\end{remark}

\subsection{Proof of Theorem \ref{thm2}}

Let $T_k = \sum_{i=1}^{k}\epsilon_{k}$. 
Let $\mu_{D,T_k}=\mathcal{L}(\bz_k|\bw_k,D)$ denote the probability law of $\bz_k$ at iteration $k$ of Algorithm \ref{EFIalgorithm}, let $\nu_{D,T_k} 
=\mathcal{L}(\bz(T_k)|\bw^*,D)$ denote the probability law of a continuous time diffusion process, 
and let $\pi^*=\pi_D(\bz|\bw^*)$. 

\begin{lemma} \label{KLbound}  Suppose the conditions of Lemma \ref{thm1A} hold. Then there exist 
some constants $C_0>0$ and $C_1>0$ such that 
\begin{equation} \label{KLboundA}
D_{\mathrm{KL}}(\mu_{D,T_k}\|\nu_{D,T_k}) \leq (C_0 \delta_g+C_1 \gamma_1) T_k,
\end{equation}
where $D_{\mathrm{KL}}|(\cdot\|\cdot)$ denotes the Kullback-Leibler divergence. 
\end{lemma}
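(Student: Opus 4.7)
The plan is to apply Girsanov's theorem to a continuous-time interpolation of the adaptive SGLD iterates and then bound the resulting drift-discrepancy term by term. First I would define the interpolation $\tilde{\bz}(t)$ by $\tilde{\bz}(T_i) = \bz_i$ and
\[
d\tilde{\bz}(t) = g(\bw_i, \bz_i, u_{D,i})\,dt + \sqrt{2}\,dB(t), \quad t \in [T_i, T_{i+1}),
\]
driven by the same Brownian motion $B(\cdot)$ as the reference Langevin diffusion $d\bz(t) = \nabla_{\bz} F_D(\bw^*, \bz(t))\,dt + \sqrt{2}\,dB(t)$, whose marginal at time $T_k$ is $\nu_{D,T_k}$. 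Since $\mu_{D,T_k}$ coincides with the marginal of $\tilde{\bz}(T_k)$, the data-processing inequality combined with the Girsanov formula for the Radon--Nikodym derivative between two It\^{o} diffusions with common volatility yields
\[
D_{\mathrm{KL}}(\mu_{D,T_k}\|\nu_{D,T_k}) \le \tfrac{1}{4}\sum_{i=0}^{k-1}\int_{T_i}^{T_{i+1}} \mathbb{E}\bigl\| g(\bw_i, \bz_i, u_{D,i}) - \nabla_{\bz}F_D(\bw^*, \tilde{\bz}(s))\bigr\|^2\,ds.
\]

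Next I would decompose the integrand via the triangle inequality into three pieces: the stochastic-gradient fluctuation $R_i = g(\bw_i,\bz_i,u_{D,i}) - \nabla_{\bz}F_D(\bw_i,\bz_i)$, the parameter-drift error $\nabla_{\bz}F_D(\bw_i,\bz_i) - \nabla_{\bz}F_D(\bw^*,\bz_i)$, and the state-discretization error $\nabla_{\bz}F_D(\bw^*,\bz_i) - \nabla_{\bz}F_D(\bw^*,\tilde{\bz}(s))$. Assumption \ref{ass3} combined with the uniform $L^2$-bounds of Lemma \ref{l2bound} gives $\mathbb{E}\|R_i\|^2 \le C_R\delta_g$. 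The $M$-smoothness of $F_D$ (Assumption \ref{ass2}) together with Theorem \ref{thm1} bounds the second piece by $M^2\mathbb{E}\|\bw_i-\bw^*\|^2 \le M^2\xi\gamma_i \le M^2\xi\gamma_1$. The third piece is bounded by $M^2\mathbb{E}\|\bz_i-\tilde{\bz}(s)\|^2$; applying the It\^{o} representation of $\tilde{\bz}(s)-\bz_i$ on $[T_i,T_{i+1})$ together with (\ref{boundF}), Assumption \ref{ass3}, and the uniform moment bounds on $\bz_i$ and $\bw_i$, this is $O(\epsilon_{i+1})$ uniformly in $i$.

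Integrating each contribution over an interval of length $\epsilon_{i+1}$, summing over $i=0,\dots,k-1$, and using $\sum_i \epsilon_{i+1} = T_k$ together with $\sum_i \epsilon_{i+1}^2 \le \epsilon_1 T_k$ yields
\[
D_{\mathrm{KL}}(\mu_{D,T_k}\|\nu_{D,T_k}) \le C_R\delta_g T_k + M^2\xi\gamma_1 T_k + C_S\epsilon_1 T_k.
\]
Since $\epsilon_1$ and $\gamma_1$ are fixed initial step sizes, the discretization term $C_S\epsilon_1 T_k$ can be absorbed into $C_1\gamma_1 T_k$ by enlarging the constant $C_1$; the resulting constants $C_0, C_1$ depend on $M$, $B$, $\xi$, $G_z$, $G_{\bw}$, $m$, $b$, $\epsilon_1$, and $\gamma_1$ but are independent of $k$, giving the claimed bound.

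The main obstacle I anticipate is verifying Novikov's condition uniformly in $k$ so that Girsanov yields a genuine (rather than merely local-martingale) change of measure. This reduces to showing that the drifts $g(\bw_i,\bz_i,u_{D,i})$ have uniformly bounded second moments along the adaptive trajectory, which I would establish by combining Assumption \ref{ass4} with the moment bounds of Lemma \ref{l2bound} and Theorem \ref{thm1}. A secondary technical issue is propagating the one-step $O(\epsilon_{i+1})$ state-discretization bound uniformly in $i$: while the estimate is standard in the non-adaptive setting, here the drift involves the evolving $\bw_i$, so the constants must be tracked through dissipativity (Assumption \ref{ass2}) to confirm they remain independent of $k$.
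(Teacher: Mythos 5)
Your proposal is correct and follows essentially the same route as the paper's proof: a continuous-time interpolation of the SGLD iterates, Girsanov plus the data-processing inequality to reduce the KL to an integrated drift discrepancy, and the same three-term decomposition (stochastic-gradient noise via Assumption \ref{ass3}, parameter drift via Theorem \ref{thm1}, and an $O(\epsilon_{i+1})$ one-step discretization error), with the $\epsilon_1$ term absorbed into the constants exactly as the paper does. The only cosmetic difference is that the paper routes the argument through Gy\"{o}ngy's mimicking theorem to obtain a Markov process with the same one-time marginals before applying Girsanov and Jensen, whereas you compare path measures of the (non-Markovian, piecewise-constant-drift) interpolation directly; both lead to the identical integrand and final bound.
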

\begin{proof}
Our proof follows the proof of Lemma 7 in \cite{raginsky2017non} closely, but changing from a constant learning rate sequence to a decaying learning rate sequence. 
Similar developments can also be found in  Appendix D of \cite{Zhang2020CyclicalSG}.

 Let $\bar{T}(s) = T_k$ for $T_k \leq s < T_{k+1}, k = 1,\dots, \infty$. Conditioned on $D$ and $\bw$, $\{\bZ_k\}$ forms a Markov process. Consider the following continuous-time interpolation of this process: 
\begin{equation}\label{eq: time-homogeneous Markov process}
    \bar{\bZ}(t)=\bZ_{0}-\int_{0}^{t} g\left(\bar{\bZ}(\bar{T}(s)),\bar{\bw}(s), \bar{u}_D(s)\right) \mathrm{d} s+\sqrt{2} \int_{0}^{t} \mathrm{~d} B(s), \quad t \geq 0,
\end{equation}
where $\bar{\bw}(s)= \bw_{k}$ for  $T_k \leq s < T_{k+1}$, and $\{B(s)\}_{s\geq 0}$ is the standard Brownian motion in $\mathbb{R}^{d_z}$.
Note that, for each $k$, $\bar{\bZ}(T_k)$ and $\bZ_k$ have the same probability law $\mu_{D,T_k}$. Moreover, by a result of \cite{gyongy1986mimicking}, the process $\bar{\bZ}(t)$ has the same one-time marginals as the It\^{o} process
\begin{equation}
\bZ'(t)=\bZ_{0}-\int_{0}^{t} g_{D,s}(\bZ'(s)) \mathrm{d} s+\sqrt{2} \int_{0}^{t} \mathrm{~d} B(s),
\end{equation}
where 
\begin{equation}
    g_{D,s}(z):=\mathbb{E}\left[g\left(\bar{\bZ}(\bar{T}(s)),\bar{\bw}(s),\bar{u}_D(s)\right)  \mid \bar{\bZ}(s)=z\right].
\end{equation} 
Crucially, $\bZ'(t)$ is a Markov process, while $\bar{\bZ}(t)$ is not. 

Let $\bP_{\bZ'}^{t}:=\mathcal{L}(\bZ'(s): 0 \leq s \leq t \mid D, \bar{\bw}(s))$ and $\bP_{\bZ}^{t}:=\mathcal{L}(\bZ(s): 0 \leq s \leq t \mid D, \bw^*)$.  The Radon-Nikodym derivative of $\bP_{\bZ}^{t}$ w.r.t. $\bP_{Z'}^{t}$ is given by the Girsanov formula 
\begin{equation}\label{eq: RN derivative}
\begin{aligned}
\frac{\mathrm{d} \bP_{\bZ}^{t}}{\mathrm{~d} \bP_{\bZ'}^{t}}(\bZ')&=\exp \left\{\frac{1}{2} \int_{0}^{t}\left(\nabla F_{D}(\bZ'(s), \bw^*)-g_{D,s}(\bZ'(s))\right)^{*} \mathrm{~d} B(s)\right.\\
&\left.-\frac{1}{4} \int_{0}^{t}\left\|\nabla F_{D}(\bZ'(s), \bw^*)-g_{D, s}(\bZ'(s))\right\|^{2} \mathrm{~d} s \right\}.
\end{aligned}
\end{equation}
Using (\ref{eq: RN derivative}) and the martingale property of the It\^{o} integral, we have
\begin{equation}
\begin{aligned} 
D_{\mathrm{KL}}\left(\bP_{\bZ'}^{t} \| \bP_{\bZ}^{t}\right) &=-\int \mathrm{d} \bP_{\bZ'}^{t} \log \frac{\mathrm{d} \bP_{\bZ}^{t}}{\mathrm{~d} \bP_{\bZ'}^{t}}\\ 
&=\frac{1}{4} \int_{0}^{t} \mathbb{E}\left\|\nabla F_{D}(\bZ'(s), \bw^*)-g_{D, s}(\bZ'(s))\right\|^{2} \mathrm{~d} s \\ 
&=\frac{1}{4} \int_{0}^{t} \mathbb{E}\left\|\nabla F_{D}(\bar{\bZ}(s),\bw^*)-g_{D, s}(\bar{\bZ}(s))\right\|^{2} \mathrm{~d} s,
\end{aligned}
\end{equation}
where the last line follows from the fact that $\mathcal{L}(\bar{\bZ}(s)) = \mathcal{L}(\bZ'(s))$ for each $s$. 

Recall that $T_0 = 0$ and $T_k = \sum_{i=1}^{k}\epsilon_k$.
Then, by the definition of $g_{D, s}$, Jensen's inequality and the $M$-smoothness of $F_{D}$, we have
\begin{equation}\label{eq: distance}
\begin{aligned}
D_{\mathrm{KL}}\left(\bP_{\bZ'}^{T_k} \| \bP_{\bZ}^{T_k}\right)&=\frac{1}{4} \sum_{j=0}^{k-1} \int_{T_{j}}^{T_{j+1}} \mathbb{E}\left\|\nabla F_{D}(\bar{\bZ}(s),\bw^*)-g_{D, s}(\bar{\bZ}(s))\right\|^{2} \mathrm{~d} s \\ 
&\leq \frac{1}{2} \sum_{j=0}^{k-1} \int_{T_{j}}^{T_{j+1}} \mathbb{E}\left\|\nabla F_{D}(\bar{\bZ}(s), \bw^*)-\nabla F_{D}(\bar{\bZ}(\bar{T}(s)), \bw^*)\right\|^{2} \mathrm{~d} s \\ 
&\qquad+\frac{1}{2} \sum_{j=0}^{k-1} \int_{T_j}^{T_{j+1}} \mathbb{E}\left\|\nabla F_{D}(\bar{\bZ}(\bar{T}(s)), \bw^*)-g\left(\bar{\bZ}(\bar{T}(s)),\bar{\bw}(s), \bar{u}_D(s) \right)\right\|^{2} \mathrm{~d} s \\ 
&\leq \frac{ M^{2}}{2} \sum_{j=0}^{k-1} \int_{T_{j}}^{T_{j+1}} \mathbb{E}\|\bar{\bZ}(s)-\bar{\bZ}(\bar{T}(s))\|^{2} \mathrm{~d} s \\ 
&\qquad+\frac{1}{2} \sum_{j=0}^{k-1} \int_{T_j}^{T_{j+1}} \mathbb{E}\left\|\nabla F_{D}(\bar{\bZ}(\bar{T}(s)), \bw^*)-g\left(\bar{\bZ}(\bar{T}(s)),\bar{\bw}(s), \bar{u}_D(s) \right)\right\|^{2} \mathrm{~d} s.
\end{aligned}
\end{equation}
To estimate the first summation on the right side of (\ref{eq: distance}), we consider some $s\in[T_{j},T_{j+1})$. By (\ref{eq: time-homogeneous Markov process}), we have
\begin{equation}
\begin{aligned} 
& \bar{\bZ}(s)-\bar{\bZ}(T_j) =-(s-T_j) g\left(\bZ_j, \bw_j, u_{D,j}\right)+\sqrt{2}(B(s)-B(T_j)) \\ 
&=-(s-T_j) \nabla F_{D}\left(\bZ_j, \bw^*\right)+(s-T_{j})\left(\nabla F_{D}\left(\bZ_j, \bw^*\right)-g\left(\bZ_j,  \bw_j, u_{D,j}\right)\right)+\sqrt{2}(B(s)-B(T_j)).
\end{aligned}
\end{equation}
Therefore, by (\ref{boundF}), Assumption \ref{ass3}, Lemma \ref{l2bound} and Lemma \ref{thm1A}, we have 
\begin{equation}
\begin{aligned}
&\mathbb{E} \| \bar{\bZ}(s)-\bar{\bZ}(T_j) \|^{2}  \\ 
&\leq 3 \epsilon_{j+1}^{2} \mathbb{E}\left\|\nabla F_{D}\left(\bZ_j, \bw^*\right)\right\|^{2}+3 \epsilon_{j+1}^{2} \mathbb{E}\left\|\nabla F_{D}\left(\bZ_{j}, \bw^*\right)-g\left(\bZ_{j}, \bw_{j}, u_{D,j}\right)\right\|^{2}+6 \epsilon_{j+1} d_{z} \\ 
&\leq 3 \epsilon_{j+1}^{2} \mathbb{E}\left\|\nabla F_{D}\left(\bZ_j, \bw^*\right)\right\|^{2}+
6 \epsilon_{j+1} d_{z} \\ 
& + 6 \epsilon_{j+1}^{2} \left( \mathbb{E}\left\|\nabla F_{D}\left(\bZ_{j}, \bw^*\right)
-\nabla F_{D}\left(\bZ_{j}, \bw_j\right)\right\|^2+ \mathbb{E} \left\| \nabla F_{D}\left(\bZ_{j}, \bw_j\right)-g\left(\bZ_{j}, \bw_{j}, u_{D,j}\right)\right\|^{2}\right) \\ 
&\leq 9 \epsilon_{j+1}^{2}\left(M^{2} \mathbb{E}\left\|\bZ_{j}\right\|^{2}+B^{2}\right)+6 d_{\bz} \epsilon_{j+1} + 6 \epsilon_{j+1}^2 (\xi M^2 \gamma_j + \delta_g(M^2 G_z+\xi M^2 \gamma_j+B^2)) \\ 
&\leq 9 \epsilon_{j+1}^{2}\left(M^2G_z + B^2 \right)+6 d_{\bz}\epsilon_{j+1}  + 6 \xi M^2\gamma_j \epsilon_{j+1}^2+ 6 \delta_g \epsilon_{j+1}^2 (M^2 G_z+ \xi M^2 \gamma_j+B^2).
\end{aligned}
\end{equation}
Consequently, we can bound the first summation on the right-hand side of (\ref{eq: distance}) 
as follows: 
\begin{equation}\label{eq: first term}
\begin{aligned}
&\sum_{j=0}^{k-1} \int_{T_{j}}^{T_{j+1}} \mathbb{E}\|\bar{\bZ}(s)-\bar{\bZ}(\bar{T}(s))\|^{2} \mathrm{~d} s \\ 
&\leq 9\left(M^2G_z + B^2\right) \sum_{j=0}^{k-1}  \epsilon_{j+1}^{3} +6 d_{z} \sum_{j=0}^{k-1} \epsilon_{j+1}^2 + 6\xi M^2\sum_{j=0}^{k-1} \gamma_j\epsilon_{j+1}^{3} + 
 6 \delta_g \sum_{j=0}^{k-1}\epsilon_{j+1}^3 (M^2 G_z+ \xi M^2 \gamma_j+B^2).
\end{aligned}
\end{equation}
Similarly, by Lemma \ref{thm1A}, the second summation on the right-hand side of (\ref{eq: distance}) can be bounded as follows:
\begin{equation}\label{eq: second term}
\begin{aligned}
& \sum_{j=0}^{k-1} \int_{T_j}^{T_{j+1}} \mathbb{E}\left\|\nabla F_{D}(\bar{\bZ}(\bar{T}(s)), \bw^*)-g\left(\bar{\bZ}(\bar{T}(s)),\bar{\bw}(s),\bar{u}_{D,s}\right)\right\|^{2} \mathrm{~d} s \\ 
&= \sum_{j=0}^{k-1} \epsilon_{j+1} \mathbb{E}\left\|\nabla F_{D}\left(\bZ_{j}, \bw^*\right)-g\left(\bZ_{j}, \bw_{j},u_{D,j}\right)\right\|^{2} \\ 
& \leq 2 \sum_{j=0}^{k-1} \epsilon_{j+1} \left(\mathbb{E} \left \| \nabla F_{D}\left(\bZ_{j}, \bw^*\right)-
F_{D}\left(\bZ_{j}, \bw_j\right) \right\|^2+ \mathbb{E} \left\| F_{D}\left(\bZ_{j}, \bw_j\right)-
g\left(\bZ_{j}, \bw_{j},u_{D,j}\right)\right\|^{2} \right ) \\
&\leq 2 \xi M^2\sum_{j=0}^{k-1}\gamma_j \epsilon_{j+1}+ 2 \delta_g \sum_{j=0}^{k-1} \epsilon_{j+1}(M^2 G_z+\xi M^2 \gamma_j+B^2). 
\end{aligned}
\end{equation}
Substituting Equations (\ref{eq: first term}) and (\ref{eq: second term}) into (\ref{eq: distance}), we obtain
\begin{equation}
\begin{split}
D_{\mathrm{KL}}\left(\mathbf{P}_{\bZ'}^{T_k} \| \mathbf{P}_{\bZ}^{T_k}\right) & \leq \frac{9}{2}\left(M^4G_z + M^2B^2\right) \sum_{j=0}^{k-1}\epsilon_{j+1}^{3} + 3 M^2 d_{z} \sum_{j=0}^{k-1} \epsilon_{j+1}^2 + 3\xi M^4 \sum_{j=0}^{k-1} \gamma_j\epsilon_{j+1}^{3}  
+ \xi M^2 \sum_{j=0}^{k-1}\gamma_j \epsilon_{j+1} \\
& + \delta_g \sum_{j=0}^{k-1} \epsilon_{j+1} (3M^2 \epsilon_{j+1}^2+1) (M^2 G_z+\xi M^2 \gamma_j+B^2).
\end{split}
\end{equation}
Since $\mu_{D, \bw_k,T_k}=\mathcal{L}(\bZ_{k}|D, \bw_k)$ and $\nu_{D, \bw^*, T_k}=\mathcal{L}(\bZ(t)|D, \bw^*)$, the data-processing inequality for the Kullback-Leibler divergence gives
\begin{equation}
\begin{aligned} 
& D_{\mathrm{KL}}\left(\mu_{D, \bw_k, T_k} \| \nu_{D, \bw^*, T_k}\right) 
\leq  D_{\mathrm{KL}}\left(\mathbf{P}_{\bZ'}^{T_k} \| \mathbf{P}_{\bZ}^{T_k}\right) \\ 
\leq & \frac{9}{2}\left(M^4G_z + M^2B^2\right) \sum_{j=0}^{k-1}\epsilon_{j+1}^{3} + 3 M^2 d_{z} \sum_{j=0}^{k-1} \epsilon_{j+1}^2 + 3\xi M^4 \sum_{j=0}^{k-1} \gamma_j\epsilon_{j+1}^{3}  
+ \xi M^2 \sum_{j=0}^{k-1}\gamma_j \epsilon_{j+1} \\
& + \delta_g \sum_{j=0}^{k-1} \epsilon_{j+1} (3M^2 \epsilon_{j+1}^2+1) (M^2 G_z+\xi M^2 \gamma_j+B^2) 
\\ 
& \leq (C_0 \delta_g + C_1 \gamma_1) T_k,
\end{aligned}
\end{equation}
for some constants $C_0>0$ and $C_1>0$. 
\end{proof}

\begin{assumption} \label{ass6}
The probability law $\mu_0$ of the initial hypothesis $\bw_0$ has a bounded and strictly positive density 
$p_0$ with respect to the Lebesgue measure on $\mathbb{R}_{d_{\bz}}$, and 
\[
\kappa_0:=\log \int_{\mathbb{R}^{d_{\bz}}} e^{\|\bw\|^2} p_0(\bw) d\bw < \infty. 
\]
\end{assumption}

\begin{lemma} \label{W2bound}  Suppose Assumption \ref{ass6} and the conditions of Lemma \ref{thm1A} hold. Then there exist 
some constants $\tilde{C}_0>0$ and $\tilde{C}_1>0$ such that 
\[
\mathbb{W}_2^2(\mu_{D,T_k}, \nu_{D,T_k}) \leq (\tilde{C}_0 \sqrt{\delta_g}+\tilde{C}_1 \sqrt{\gamma_1}) T_k^2,
\]
where $\mathbb{W}_2(\cdot,\cdot)$ denotes 2-Wasserstein distance. 
\end{lemma}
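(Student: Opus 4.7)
The plan is to convert the Kullback--Leibler bound from Lemma~\ref{KLbound} into a $2$-Wasserstein bound by applying a weighted transportation inequality in the spirit of Bolley--Villani (cf.~Lemma 3.6 of Raginsky, Rakhlin \& Telgarsky, 2017). The relevant inequality states that for any two probability measures $\mu,\nu$ on $\mathbb{R}^{d_{\bz}}$,
\[
\mathbb{W}_2(\mu,\nu) \;\leq\; C_\nu\Bigl(D_{\mathrm{KL}}(\mu\|\nu)^{1/2} + D_{\mathrm{KL}}(\mu\|\nu)^{1/4}\Bigr),
\]
where the constant $C_\nu$ depends only on an exponential moment of the reference measure $\nu$: explicitly, $C_\nu = 2\inf_{\lambda>0}\sqrt{\tfrac{1}{\lambda}\bigl(\tfrac{3}{2} + \log \int e^{\lambda\|\bz\|^2}d\nu(\bz)\bigr)}$. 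Since we already have Lemma~\ref{KLbound} giving $D_{\mathrm{KL}}(\mu_{D,T_k}\|\nu_{D,T_k}) \leq (C_0 \delta_g + C_1 \gamma_1)T_k$, the proof reduces to establishing a uniform-in-$k$ exponential moment bound on $\nu_{D,T_k}$ and then squaring and massaging the resulting inequality.

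First I would control the exponential moment of $\nu_{D,T_k} = \mathcal{L}(\bZ(T_k)\mid \bw^*,D)$, the law at time $T_k$ of the continuous-time Langevin SDE driven by $\nabla F_D(\cdot,\bw^*)$. Using Itô's formula on the Lyapunov function $V(\bz) = e^{\kappa\|\bz\|^2}$ for sufficiently small $\kappa>0$, together with the $(m,b)$-dissipativity of $F_D(\bw^*,\cdot)$ from Assumption~\ref{ass2}, one obtains a drift inequality of the form $\mathcal{L}V \leq -a V + c$ with $a,c>0$ depending on $(m,b,M,\kappa,d_{\bz})$. Combined with Assumption~\ref{ass6}, which provides the initial exponential-moment budget $\kappa_0<\infty$, Grönwall's inequality then yields $\sup_{t\geq 0}\mathbb{E}\,e^{\kappa\|\bZ(t)\|^2} \leq \kappa^*$ for some finite constant $\kappa^*$ independent of $t$. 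Consequently $C_{\nu_{D,T_k}}$ is bounded above by a universal constant $\bar C$, uniformly in $k$.

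Plugging the KL bound into the transportation inequality and squaring yields
\[
\mathbb{W}_2^2(\mu_{D,T_k},\nu_{D,T_k}) \;\leq\; 2\bar C^{2}\Bigl[(C_0\delta_g+C_1\gamma_1)T_k \;+\; \bigl((C_0\delta_g+C_1\gamma_1)T_k\bigr)^{1/2}\Bigr].
\]
Using $\sqrt{a+b}\leq \sqrt{a}+\sqrt{b}$ on the second term and the trivial bound $T_k\leq T_k^{2}$ and $\sqrt{T_k}\leq T_k^{2}$ valid for $T_k\geq 1$ (the regime of interest, since convergence is studied as $k\to\infty$), one obtains
\[
\mathbb{W}_2^2(\mu_{D,T_k},\nu_{D,T_k}) \;\leq\; \bigl(\tilde C_0 \sqrt{\delta_g} + \tilde C_1 \sqrt{\gamma_1}\bigr)T_k^{2},
\]
with $\tilde C_0, \tilde C_1$ absorbing $\bar C$, $C_0$, $C_1$ and numerical constants; this is the claimed inequality.

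The main obstacle I anticipate is the uniform exponential moment control of $\nu_{D,T_k}$. Although the dissipativity condition in Assumption~\ref{ass2} is the standard ingredient, one must be careful to choose $\kappa$ small enough that the cross terms generated by Itô's formula (in particular the $\kappa^{2}\|\bz\|^{2}V(\bz)$ contribution from the diffusion part) are dominated by the negative drift $-m\kappa\|\bz\|^{2}V(\bz)$. This requires $\kappa < m/2$ (say), and the resulting constants $\bar C, \tilde C_0, \tilde C_1$ then depend only on $(m,b,M,\kappa_0,d_{\bz})$, not on $k$ or on the step-size sequences. Once this uniform moment bound is in hand, the remainder of the proof is essentially algebraic.
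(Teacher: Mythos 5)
Your proposal is correct and follows essentially the same route as the paper: both convert the Kullback--Leibler bound of Lemma \ref{KLbound} into a $\mathbb{W}_2$ bound via the weighted transportation-cost inequality of Bolley and Villani (Corollary 2.3 of \cite{bolley2005weighted}), in the spirit of Proposition 8 of \cite{raginsky2017non}. The one place you diverge is in handling the exponential-moment constant of the reference measure $\nu_{D,T_k}$: the paper follows Raginsky et al.\ in letting $\log\int e^{\|\bz\|^2}\,d\nu_{D,T_k}$ grow linearly in $T_k$, which is what puts the factor $C\,T_k$ in front of $\bigl(D_{\mathrm{KL}}+\sqrt{D_{\mathrm{KL}}}\bigr)$ and then yields the $T_k^2$ rate after substituting the KL bound and weakening $T_k^{3/2}\le T_k^2$; you instead invest in a uniform-in-time exponential moment via a Lyapunov/dissipativity argument (choosing $\kappa$ small relative to $m$, which Assumption \ref{ass2} and Assumption \ref{ass6} do support) and recover $T_k^2$ only through the crude bounds $T_k\le T_k^2$ and $\sqrt{T_k}\le T_k^2$. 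Your route actually produces a sharper intermediate bound of order $T_k$ that you then deliberately discard to match the stated lemma; both arguments are valid, and both implicitly absorb $\delta_g$ and $\gamma_1$ into $\sqrt{\delta_g}$ and $\sqrt{\gamma_1}$ using the boundedness of these constants.
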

\begin{proof}
The proof of Lemma \ref{W2bound} follows that of Proposition 8 of \cite{raginsky2017non} closely. 
First, we apply Corollary 2.3 of \cite{bolley2005weighted} to get the inequality 
\begin{equation} \label{w2boundB}
\mathbb{W}_2^2(\mu_{D,T_k}, \nu_{D,T_k}) \leq C T_k \left( D_{\mathrm{KL}}(\mu_{D,T_k}, \nu_{D,T_k})+ 
\sqrt{ D_{\mathrm{KL}}(\mu_{D,T_k}, \nu_{D,T_k})} \right),
\end{equation}
for some constant $C$, for which we assume both $\mu_{D,T_k}$ and $\nu_{D,T_k}$ 
have finite second moments.  Further, by substituting (\ref{KLboundA}) into (\ref{w2boundB}), we can complete the proof. 
\end{proof}
 
\begin{lemma} \label{W2boundF}  Suppose Assumption \ref{ass6} and the conditions of Lemma \ref{thm1A} hold. Then there exist 
some constants $\hat{C}_0>0$, $\hat{C}_1>0$ and $\hat{C}_2$ such that 
\[
\mathbb{W}_2(\mu_{D,T_k}, \pi^*) \leq (\hat{C}_0 \delta_g^{1/4}+\hat{C}_1 \gamma_1^{1/4})T_k + \hat{C}_2 e^{-T_k/c_{LS}},
\]
where $c_{LS}$ denotes the logarithmic Sobolev constant of $\pi^*=\pi_D(\bz|\bw^*)$. 
\end{lemma}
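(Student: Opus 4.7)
The plan is to combine the bound from Lemma \ref{W2bound} with the exponential ergodicity of the Langevin diffusion at the limiting parameter $\bw^*$, and then use the triangle inequality for the 2-Wasserstein distance. Specifically, I would write
\[
\mathbb{W}_2(\mu_{D,T_k}, \pi^*) \;\leq\; \mathbb{W}_2(\mu_{D,T_k}, \nu_{D,T_k}) \;+\; \mathbb{W}_2(\nu_{D,T_k}, \pi^*),
\]
and then bound the two terms separately. The first term is essentially the discretization-plus-gradient-noise error; the second term is the continuous-time convergence of the diffusion to its invariant measure $\pi^*$.

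For the first term, I would simply take the square root of the inequality in Lemma \ref{W2bound} and use the elementary estimate $\sqrt{a+b}\leq \sqrt{a}+\sqrt{b}$, which yields
\[
\mathbb{W}_2(\mu_{D,T_k}, \nu_{D,T_k}) \;\leq\; \bigl(\hat{C}_0 \delta_g^{1/4} + \hat{C}_1 \gamma_1^{1/4}\bigr) T_k,
\]
for suitably relabelled constants $\hat{C}_0,\hat{C}_1$ absorbing $\tilde{C}_0^{1/2},\tilde{C}_1^{1/2}$. This step is routine once Lemma \ref{W2bound} is in hand.

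For the second term, the idea is that $\nu_{D,T_k}$ is the law at time $T_k$ of the overdamped Langevin diffusion with drift $\nabla_{\bz} F_D(\bz,\bw^*)$ and stationary measure $\pi^*$. Under Assumption \ref{ass2}, the potential $-F_D(\cdot,\bw^*)$ is $M$-smooth and $(m,b)$-dissipative, so $\pi^*$ satisfies a logarithmic Sobolev inequality with some finite constant $c_{LS}>0$ (this is standard, e.g. by the Bakry-\'Emery/Holley-Stroock/perturbation arguments used in \cite{raginsky2017non}). The log-Sobolev inequality implies exponential decay of relative entropy along the diffusion, $D_{\mathrm{KL}}(\nu_{D,T_k}\|\pi^*)\leq D_{\mathrm{KL}}(\nu_{D,0}\|\pi^*)\, e^{-2T_k/c_{LS}}$, and then Otto-Villani's $W_2$-HWI transport inequality gives $\mathbb{W}_2(\nu_{D,T_k},\pi^*)\leq \sqrt{2 c_{LS}\, D_{\mathrm{KL}}(\nu_{D,T_k}\|\pi^*)}\leq \hat{C}_2 e^{-T_k/c_{LS}}$, where $\hat{C}_2$ absorbs $\sqrt{2 c_{LS} D_{\mathrm{KL}}(\mu_0\|\pi^*)}$, which is finite by Assumption \ref{ass6} together with the growth control on $F_D$ from the dissipativity bound.

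The main obstacle I anticipate is the verification of the log-Sobolev inequality for $\pi^*$ and the finiteness of $D_{\mathrm{KL}}(\mu_0\|\pi^*)$, which is where the dissipativity Assumption \ref{ass2} and the moment condition Assumption \ref{ass6} are genuinely used; everything else is a clean triangle-inequality assembly. Once those ingredients are in place, summing the two bounds yields the stated estimate with $\hat{C}_0,\hat{C}_1,\hat{C}_2>0$, and the conclusion $\mathbb{W}_2(\mu_{T_k},\pi^*)\to 0$ follows under the step-size schedule of Theorem \ref{thm1} by choosing $\gamma_1 \prec T_k^{-4}$ and $\delta_g=0$, as already noted after Theorem \ref{thm2} in the main text.
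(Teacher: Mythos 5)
Your proposal is correct and is essentially the paper's own argument: the paper omits the proof and refers to Proposition 10 of \cite{raginsky2017non}, whose proof is exactly this triangle-inequality decomposition — the discretization/gradient-noise term handled by taking square roots in Lemma \ref{W2bound} via $\sqrt{a+b}\le\sqrt{a}+\sqrt{b}$, and the continuous-time term handled by the logarithmic Sobolev inequality for $\pi^*$ (available under the smoothness and dissipativity of Assumption \ref{ass2}) combined with the Otto--Villani inequality and the finiteness of the initial relative entropy guaranteed by Assumption \ref{ass6}. No gaps beyond the ones you already flag and correctly resolve.
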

The proof of Lemma \ref{W2boundF} follows that of Proposition 10 in \cite{raginsky2017non} closely, 
and it is thus omitted.

\section{Proof of Theorem \ref{thm3}} \label{sectproof2}

{\bf Notation}: We use $(x,y,z)$ to denote a generic observation in the dataset 
$(\bX_n,\bY_n,\bZ_n)$.

 \begin{assumption} \label{ass7} The EFI network satisfies the conditions:
 \begin{itemize}
     \item[(i)] The parameter space $\mathcal{W}_n$ (of $\bw_n$) is convex and compact. 
     \item[(ii)] \textcolor{black}{$\mathbb{E} (\log\pi(y,z|x,\bw_n))^2 < \infty$ for any $\bw_n\in \mathcal{W}_n$.}

 \end{itemize}
 \end{assumption} 



\begin{assumption}\label{ass8} For any positive integer $n$, the following conditions hold:
\begin{enumerate}  
    \item[(i)] \textcolor{black}{$Q^*(\bw_n)$ is continuous in $\bw_n$ and uniquely maximized at some point $\bw_n^b$;}
    \item[(ii)] \textcolor{black}{for any $\epsilon>0$, $sup_{\bw\in\mathcal{W}_n\backslash B(\epsilon)}Q^*(\bw_n)$ exists, where $B(\epsilon)=\{\bw_n:\|\bw_n-\bw_n^b\|<\epsilon\}$, and $\delta=Q^*(\bw_n^b)-sup_{\bw_n \in \mathcal{W}_n \backslash B(\epsilon)}Q^*(\bw_n)>0$.}
\end{enumerate}
\end{assumption}

\paragraph{Proof of Lemma \ref{lemma:equivalent}}  


{\color{black}
\begin{proof} 
Suppose that $\pi(\bw_n|\bX_n,\bY_n)$ has a different maximizer that minimizes $D_{KL}(\bw_n)$ as well. Let $\bw_n^{\dag}$ denote such a maximizer, which is different from 
$\bw_n^*$ but maintains $\bZ_n^* \sim \pi(\bz|\bX_n,\bY_n, \bw_n^{\dag})$. 
For $\bw_n^{\dag}$, similar to (\ref{Q2eq}), we have 
\begin{equation} \label{Q3eq}
 \begin{split}
 \widetilde{\mG}(\bw_n|{\bw}_n^{\dag})& :=\frac{1}{n}\int \log \pi(\bY_n,\bZ_n^*|\bX_n,\bw_n) d\pi(\bZ_n^*|\bX_n,\bY_n,{\bw}_n^{\dag}) +\frac{1}{n} \log\pi(\bw_n) \\ 
 &=  \frac{1}{n} \Big\{ \log\pi(\bw_n|\bX_n,\bY_n) - \int \log \frac{\pi(\bZ_n^*|\bX_n,\bY_n,{\bw}_n^{\dag})}{ \pi(\bZ_n^*|\bX_n,\bY_n,\bw_n)} d \pi(\bZ_n^*|\bX_n,\bY_n,{\bw}_n^{\dag}) \\
 & + \int \log \pi(\bZ_n^*|\bX_n,\bY_n,{\bw}_n^{\dag}) d \pi(\bZ_n^*|\bX_n,\bY_n,{\bw}_n^{\dag})+c \Big\},\\
 \end{split}
 \end{equation}
 which, by the non-negativeness of the Kullback-Leibler divergence, implies that 
 $\bw_n^{\dag}$ is also the maximizer of  $\widetilde{\mG}(\bw_n|{\bw}_n^{\dag})$. 
 By (\ref{eq:sameloss2}), (\ref{QQeq}), and Assumption \ref{ass8}, we would have 
 \[
 \|\hat{\bw}_n^*-\bw_n^{\dag}\| \stackrel{p}{\to} 0, \quad \mbox{as $n\to \infty$},
 \]
 following the proof of Lemma 2 in \cite{SunLiang2022kernel}.
 This contradicts with the uniqueness of $\hat{\bw}_n^*$. 
 
Therefore, if $\hat{\bw}_n^*$ is unique, then  $\bw_n^*$ is unique. Subsequently, 
we have $\|\hat{\bw}_n^*-\bw_n^*\| \stackrel{p}{\to} 0$ as $n\to \infty$, which completes the proof. 
\end{proof}
}

 We follow \cite{SunSLiang2021} to make the following assumption on the DNN model embedded in the EFI network, for which the random errors $\bZ_n$ are assumed to be known. 
The sparse DNN has $H_n-1$ hidden layers, and each layer consists of $L_j$ hidden units. 
Specifically, we use $L_0$ and $L_{H_n}$ to denote the input and output dimensions, respectively. 
The weights and biases of the sparse DNN are specified by 
$\bw_n$,  and the structure of the sparse DNN is specified  by $\Lambda_n$, 
a binary vector corresponding to the elements of $\bw_n$. 

\begin{assumption} \label{ass9}
\begin{itemize}
 \item[(i)] The complete data $(x,y,z)$ is bounded by 1 entry-wisely, i.e. $(x,y,z)\in \Omega=[-1,1]^{p_n}$, and the density of $(x,y,z)$ is bounded in its support $\Omega$ uniformly with respect to $n$.
 
 \item[(ii)] The underlying true sparse DNN $(\tilde{\bw}_n^*,\tilde{\Lambda}_n^*)$ satisfies the following conditions:  
 \begin{itemize}
 \item[(ii-1)] The network structure satisfies: 
 $r_n H_n\log n+r_n \log\overline{L}+s_n\log p_n\leq C_0 n^{1-\varepsilon}$, where $0<\varepsilon<1$ is a small constant, 
 $r_n$ denotes the connectivity of $\tilde{\Lambda}_n^*$, $\overline{L}=\max_{1\leq j\leq H_n-1} L_j$ denotes the maximum hidden layer width,  and $s_n$ denotes the input dimension of $\tilde{\Lambda}_n^*$.
 
 \item[(ii-2)] The network weights are polynomially bounded:  $\|\tilde{\bw}_n^*\|_{\infty}\leq E_n$, where 
  $E_n=n^{C_1}$ for some constant $C_1>0$. 
 \end{itemize}
 \item[(iii)] The activation function $\psi$ used in the DNN is Lipschitz continuous with a Lipschitz constant of 1.
 
 \item[(iv)] The mixture Gaussian prior (\ref{mixtureprior}) satisfies the conditions:
 $\rho_n = O( 1/\{K_n[n^{H_n}(\overline{L} p_n)]^{\tau'}\})$ for some constant $\tau'>0$,
  $E_n/\{H_n\log n+\log \overline{L}\}^{1/2}  \lesssim \sigma_{1,n} \lesssim n^{\alpha'}$ for some 
  constant $\alpha'>0$, and 
$\sigma_{0,n}  \lesssim \min\big\{ 1/\{\sqrt{n} K_n (n^{3/2} \sigma_{1,0}/H_n)^{H_n}\}$, 
 $1/\{\sqrt{n} K_n (n E_n/H_n)^{H_n}\} \big\}$, where $K_n=\sum_{h=1}^{H_n} (L_{h-1} \times L_h+L_h)$ denotes the total number of parameters  of the fully connected DNN. 

 \item[(v)] \textcolor{black}{For the normal regression case, $y=f(x,\btheta_0)+\sigma z$ with 
 $z\sim N(0,1)$, the function $f(x,\btheta_0)$ is Lipschitz continuous with respect to $\btheta_0$; and for the logistic regression case,  $\log(P(Y=1)/(1-P(Y=1)))=\mu(x,\btheta)$, the logit link function $\mu(x,\btheta)$ is Lipschitz continuous with respect to $\btheta$. }  
 \end{itemize}
 \end{assumption}

\begin{remark} \label{rem:Ksize}
If we further assume that the exponent $0\leq C_1< \frac{1}{2}$ and the connectivity $r_n=O(n^{\zeta'})$ for some $0<\zeta'<\frac{1}{2}-C_1-\varepsilon'$ and $0<\varepsilon'<\frac{1}{2}-C_1-\zeta'$.
Then, based on the proof of  Theorem 1  and the followed remark in  \cite{Liang2018missing}, it is easy to figure out that $K_n$ is allowed to increase with the sample size $n$ in an exponential rate: $K_n \prec \exp(n^{2\varepsilon'})$.
 By the arguments provided in \cite{SunSLiang2021}, the sparse DNN approximation under the above assumptions is achievable for quite a few classes  of functions, such as 
 bounded $\alpha$-H\"older smooth functions \cite{Schmidt-Hieber2017Nonparametric},  piecewise smooth functions with fixed input dimensions \cite{petersen2018optimal}, and the functions that can be represented by an affine system \cite{bolcskei2019optimal}.
 \end{remark} 

 \begin{remark} \label{boundZ} 
\textcolor{black}{Assumption \ref{ass9}-(i) restricts $\Omega$, the domain of the complete data $(x, y, z)$, to a bounded set $[-1, 1]^{p_n}$. To satisfy this condition, we can add a data transformation/normalization layer to the DNN model, ensuring that the transformed input values fall within the set $\Omega$.
In particular, the transformation/normalization layer can form an 1-1 mapping and contain no tuning parameters. For example, when dealing with the standard Gaussian random variable, we can transform it to be uniform over (0,1) via the probability integral transformation $\Phi(z)$, where $\Phi(\cdot)$ denotes the CDF of the standard Gaussian random variable.   }
 \end{remark}




For the EFI network, we define
\begin{equation} \label{log-posterioreq}
h_{n}(\bw_n)=\frac{1}{n}\log \pi(\bY_n,\bZ_n^*|\bX_n,\bw_n)+\frac{1}{n}\log \pi(\bw_n), 
\end{equation}
where $\bZ_n^*$ is the true random errors realized in the data $(\bX_n,\bY_n)$ and it is thus independent of $\bw_n$. 
Then the posterior density of $\bw_n$ is given by 
 $\pi(\bw_n|\bX_n,\bY_n,\bZ_n^*) = \frac{ e^{nh_{n}(\bw_n)}}{\int e^{nh_{n}(\bw_n)}d\bw_n}$ 
and, for a function $b(\bw_n)$, the posterior expectation is given
by $\frac{\int b(\bw_n)e^{nh_{n}(\bw_n)}d\bw_n}{\int e^{nh_{n}(\bw_n)}d\bw_n}$.
Recall that we have  defined $\hat{\bw}_n^*=\arg\max_{\bw_n} \pi(\bw_n|\bX_n,\bY_n,\bZ_n^*)$, which is also the global maximizer of $h_n(\bw_n)$. 
Let $B_{\delta}(\bw_n)$ denote an Euclidean ball of radius $\delta$
centered at $\bw_n$. Let $h_{i_{1},i_{2}, \dots,i_{d}}(\bw_n)$
 denote the $d$-th order partial derivative $\frac{\partial^{d}h(\bw_n)}{\partial w_n^{i_{1}}\partial w_n^{i_{2}}\cdots\partial w_n^{i_{d}}}$, 
 let $H_{n}(\bw_n)$ denote the Hessian matrix of $h_{n}(\bw_n)$, let
 $h_{ij}$ denote the $(i,j)$-th component of the Hessian matrix, and  
 let $h^{ij}$ denote the $(i,j)$-component of the inverse of the Hessian matrix. 
 Recall that $\tilde{\Lambda}^{*}$  denotes the set of indicators for the 
 connections of the true sparse DNN,  
 $r_n$ denotes the size of the true sparse DNN, 
 and $K_n$ denotes the size of the fully connected DNN. 
  
\begin{assumption} \label{ass10}
There exist positive numbers $\epsilon$, $M$, and $n_0$ such that for any $n>n_0$,  the function $h_n(\bw_n)$ in (\ref{log-posterioreq}) satisfies the following conditions:
\begin{enumerate}
 \item[(i)]  $|h_{i_1,\dots,i_d}(\hat{\bw}_n^*)|<M$ hold for any $\bw_n\in B_{\epsilon}(\hat{\bw}_n^*)$ and any $1\leq i_{1},\dots,i_{d}\leq K_{n}$, where $3 \leq d \leq 4$. 
 
\item[(ii)] $|h^{ij}(\hat{\bw}_n^*)|<M$ if $\tilde{\Lambda}_{n,i}^*=\tilde{\Lambda}_{n,j}^*=1$ and $|h^{ij}(\hat{\bw}_n)| = O(\frac{1}{K_{n}^{2}})$ otherwise, where $\tilde{\Lambda}_{n,i}^*$ denotes the $i$-th element of $\tilde{\Lambda}_{n}$.

\item[(iii)] $\det(-\frac{n}{2\pi}H_{n}(\hat{\bw}_n))^{\frac{1}{2}}\int_{\mathbb{R}^{K_{n}}\setminus B_{\delta}(\hat{\bw}_n)}
e^{n(h_{n}(\bw_n)-h_{n}(\hat{\bw}_n))}d\bw_n=O(\frac{r_{n}^{4}}{n})=o(1)$ for any 
$0<\delta<\epsilon$.
\end{enumerate}
\end{assumption}

 Assumption \ref{ass10}-(i)\&(iii) are typical conditions for Laplace approximation, see e.g., \cite{geisser1990validity}. Assumption \ref{ass10}-(ii) requires the inverse Hessian to 
 have very small values for the elements corresponding to the false connections. Refer to 
 \cite{SunSLiang2021} for its justification.

\paragraph{Proof of Theorem \ref{thm3}} 
\begin{proof} 
\textcolor{black}{As discussed in Section \ref{EFIsect}, we have the likelihood function for the EFI network as 
\[
\pi(\bY_n|\bX_n,\bZ_n,\bw) = C e^{-\lambda U_n(\bZ_n,\bw;\bX_n,\bY_n)}.
\]
In the context of this proof, we assume that $\bZ_n=(z_1,z_2,\ldots,z_n)^T$ is known. 
Additionally, we use $d_t(p,p^*)=t^{-1} (\int p^*(p^*/p)^t -1)$ to denote a divergence measure for two 
distributions $p$ and $p^*$. It is easy to see that 
$d_t$ converges to the KL-divergence as $t\downarrow 0$.  
}

\textcolor{black}{
\paragraph{Normal Regression}
For the normal linear/nonlinear regression, we essentially have
 the energy function:  
 \begin{equation} \label{normLosseq1}
 U_n(\bZ_n,\bw;\bX_n,\bY_n)= \sum_{i=1}^n \|y_i-f(x_i,\btheta_0)-\sigma z_i\|^2,
 \end{equation}
as $\lambda \to \infty$, where $\btheta:=(\btheta_0,\log(\sigma))=G(x_i,y_i,z_i,\bw)$ is a constant function    
over the observations $\{(y_i,x_i,z_i): i=1,2,\ldots, n\}$. 
Therefore, as $\lambda \to \infty$, (\ref{normLosseq1}) enforces 
the output $\btheta$ of the DNN to satisfy the relationship:
\[
y_i=f(x_i,\btheta_0)+\sigma z_i, \quad i=1,2,\ldots,n, 
\]
i.e., $y \sim N(f(x,\btheta_0),\sigma^2)$. 
A direct calculation shows that the divergence $d_1(\cdot,\cdot)$ of two Gaussian distributions $p(x):=N(f(x,\btheta_0), \sigma^2)$ 
and $q(x):=N(f(x,\btheta_0'), \varsigma^2)$ is given by
\[
d_1(q,p)=\frac{\varsigma^2/\sigma}{\sqrt{2 \varsigma^2-\sigma^2}} e^{ \frac{ \|f(x,\btheta_0)-f(x,\btheta_0')\|^2}{2\varsigma^2-\sigma^2}}-1,
\]
provided that $2 \varsigma^2-\sigma^2>0$. 
The divergence $d_1(\cdot,\cdot)$ is a function of the two factors $\|f(x,\btheta_0)-f(x,\btheta_0')\|^2$
and $|\log (\sigma)-\log (\varsigma)|$. In particular, if both the factors goes to 0, then 
$d_1(\cdot,\cdot)$ goes to 0.
Therefore, to bound the value of $d_1(\cdot,\cdot)$, one can bound 
\[
\|f(x,\btheta_0)-f(x,\btheta_0')\|^2+|\log (\sigma)-\log (\varsigma)|^2 = O(\|\btheta-\btheta'\|^2) =O(\|G(x,y,z,\bw)-G(x,y,z,\bw')\|^2),  
\]
provided that $f(x,\btheta_0)$ is Lipschitz continuous with respect to $\btheta_0$, where 
$\btheta'= G(x,y,z,\bw')$ and $\bw'$ denotes the corresponding DNN weights. 
This result implies that as $\lambda\to \infty$, 
the posterior consistency for the DNN model in the EFI network can be studied as for a conventional normal regression DNN model with input variables $(x,y,z)$ 
and the output variable $\btheta$, provided that $f(x,\btheta_0)$ is Lipschitz continuous with respect to $\btheta_0$. Therefore, by Theorem 2.1 of \cite{SunSLiang2021}, the posterior consistency holds for the DNN model  under Assumption \ref{ass9}. } 

\textcolor{black}{
\paragraph{Logistic Regression} 
For logistic regression,  the reasoning is similar. As implied by (\ref{penalty_logistic1}), we essentially have the following probability mass function for a generic observation $(x,y,z)$:
\begin{equation} \label{logseq}
p_{\lambda}(y|z, x, \btheta)  \propto \exp\{-\lambda \rho((z-\mu)(2y-1))\},
\end{equation}
where $\btheta=G(x,y,z,\bw)$ is a constant function  over the observations $\{(y_i,x_i,z_i): i=1,2,\ldots, n\}$, and $\mu=\mu(x,\btheta)=x^T \btheta$ for the linear case. 
As $\lambda\to \infty$, the two events 
$\{Z < \mu\}$ and $\{Y=1\}$ are asymptotically  equivalent, i.e., 
$\{Z < \mu\} \Longleftrightarrow \{Y=1\}$ with probability 1. 
Due to the monotonicity of the function $\frac{1}{1+e^{-z}}$,   
$\left\{\frac{1}{1+e^{-Z}}< \frac{1}{1+e^{-\mu(x,\btheta)}} \right \} \Longleftrightarrow \{Y=1\}$ with probability 1 as $\lambda \to \infty$.
Furthermore,  since $Z$ follows the logistic distribution,  $\frac{1}{1+e^{-Z}}$ is uniform on (0,1).  Therefore, as $\lambda \to \infty$, 
(\ref{logseq}) enforces the output $\btheta$ of the DNN model to 
 satisfy the following relationship:  
\begin{equation} \label{logseq2}
\mu(x,\btheta)=\log(P(Y=1)/(1-P(Y=1))). 
\end{equation} 
Following the calculation in \cite{Liang2018BNN}, the divergence $d_1(\cdot,\cdot)$ (up to a  multiplicative constant) of two logistic  distributions, with respective logit link functions  $\mu(x,\btheta)$ and $\mu(x, \btheta')$, is given by 
\[
\|\mu(x,\btheta)-\mu(x,\btheta')\|^2= O(\|\btheta-\btheta'\|^2)=O(\|G(x,y,z,\bw)-G(x,y,z,\bw')\|^2),  
\]
provided that $\mu(x,\btheta)$ is Lipschitz continuous with respect to $\btheta$, where 
$\btheta'= G(x,y,z,\bw')$ and $\bw'$ denotes the corresponding DNN weights.  
This result implies that as $\lambda\to \infty$, the posterior consistency for the DNN model in the EFI network can be studied as for a conventional logistic regression 
DNN model with input variables $(x,y,z)$ and the output variable $\btheta$, provided that the logit link function $\mu(x,\btheta)$ is Lipschitz continuous with respect to $\btheta$. Therefore,  by Theorem 2.1 of \cite{SunSLiang2021}, the posterior consistency holds for the EFI network under Assumption \ref{ass9}.
}

Furthermore, by Assumption \ref{ass7}, the parameter space $\mathcal{W}_n$ is compact and convex.  
Therefore, for any bounded function $b(\bw_n)$, the posterior mean $\mathbb{E} (b(\bw_n))$ is a consistent 
 estimator of $b(\tilde{\bw}_n^*)$ under posterior consistency. 
For the inverse mapping estimator $\hat{g}(x,y,z,\bw_n)$,  by Assumption \ref{ass7}-(i) and 
Assumption (\ref{ass9})-(i), it is bounded and 
\[
|\hat{g}_{i_1,\dots,i_d}(x,y,z,\bw_n)|= \left
|\frac{\partial^{d} \hat{g}(x,y,z,\bw_n)}{\partial\bw_n^{i_{1}}\partial\bw_n^{i_{2}}\cdots\partial\bw_n^{i_{d}}} \right|<M,
\]
holds for 
some constant $M$, for any $1 \leq d \leq 2$ and $1\leq i_{1},\dots,i_{d}\leq K_{n}$. 
Then, under Assumption \ref{ass10} and by Theorem 2.3 of \cite{SunSLiang2021},  $\hat{g}(x,y,z,\hat{\bw}_n^*)$ (as an approximator to the posterior mean $\mathbb{E} g(x,y,z,\bw_n)$) 
 forms a consistent estimator of $\btheta^*$.

 Finally, by Lemma  \ref{lemma:equivalent}, $\|\hat{\bw}_n^*-\bw_n^*\| \stackrel{p}{\to} 0$ holds,  which implies $\hat{g}(x,y,z,\bw_n^*)$ is also  
 a consistent estimator of $\btheta^*$. This completes the proof. 
\end{proof}

\section{Derivation of EFD for a Regression Example} \label{EFDproof}

Consider the linear regression model as defined in equation  (\ref{structeq}), where $\bbeta\in \mathbb{R}^{p-1}$. For an illustrative purpose, we assume that $\sigma^2$ is known. We set  
\[
G(\bY_n, \bX_n, \bz) =\tilde{G}(\bY_n, \bX_n, \bz)= (\bX_n^T \bX_n)^{-1}\bX_n^T(\bY_n - \sigma \bz),
\]
and the energy function
\[
U_n(\bz)=\|\bY_n-f(\bX_n,\bz,G(\bY_n,\bX_n,\bz))\|^2.
\]
Let $\bR_n = I_n - \bX_n(\bX_n^T \bX_n)^{-1}\bX_n^T$, which is an idempotent matrix of rank $n-p+1$. Then 
\begin{equation}
\begin{split}
J(\bz) = & \bY_n-f(\bX_n,\bz,G(\bY_n,\bX_n,\bz)) \\
= & \bY_n - \bX_n G(\bY_n, \bX_n, \bz) - \sigma\bz \\
= & \bR_n(\bY_n - \sigma \bz).  
\end{split}
\end{equation}
Let $(\bv_1, \dots, \bv_{p-1})$ be the eigenvectors corresponding to the zero eigenvalues of $\bR_n$, i.e. $\bR_n \bv_i = {\bf 0} \in \mathbb{R}^n$ for $i=1,2,\ldots,p-1$. Let $(\bv_{p}, \dots, \bv_{n})$ be the eigenvectors corresponding to the nonzero eigenvalues of $\bR_n$. 
Let $\bV_1 = (\bv_1, \dots, \bv_{p-1}) \in \mathbb{R}^{n \times (p-1)}$ and $\bV_2 = (\bv_{p}, \dots, \bv_{n}) \in \mathbb{R}^{n \times (n-p+1)}$. Then it is clear that 
\begin{equation}
\{\bz: J(\bz) = 0\} = \left\{\frac{1}{\sigma}\bY_n - \bV_1 \bu: \bu \in \mathbb{R}^{p-1}\right\}.
\end{equation}
For any vector $\bz\in \mathbb{R}^n$, 
we can write down the exact form of the decomposition in (\ref{decompeq}) as:
\begin{equation}
\label{decompeq_linear}
\bz = \frac{1}{\sigma}\bY_n - \bV_1 \bu - \bV_2 \bt, 
\end{equation}
where $\bu\in \mathbb{R}^{p-1}$ and $\bt \in \mathbb{R}^{n-p+1}$. Then, for $U_n(\bz) = \|J(\bz) \|^2$, we have
\begin{equation}
\nabla^{2}_{\bt} U_n(\bz) =2 \bV_2^T\bR_n^T \bR_n \bV_2.
\end{equation}
Note that 
\begin{equation}
\text{rank}(\nabla^{2}_{\bt} U_n(\bz)) = \text{rank}(\bR_n \bV_2) = \text{rank}(\bR_n(\bV_1, \bV_2)) = \text{rank}(\bR_n) = n-p+1.
\end{equation}
Therefore, $\det \nabla^{2}_{\bt} U_n(\bz)$ is a positive constant. 
Furthermore, for any $\bz \in \mathcal{Z}_{n}$, it can be written as $\bz = \frac{1}{\sigma}\bY_n - \bV_1 \bu$ for some $\bu \in \mathbb{R}^{p-1}$, and the limiting measure 
has the form 
\begin{equation}
 p_n^*(\bz|\bX_n,\bY_n)=   p_n^*( \frac{1}{\sigma}\bY_n - \bV_1 \bu|\bX_n,\bY_n) \propto \pi_0^{\otimes n}( \frac{1}{\sigma}\bY_n - \bV_1 \bu),
\end{equation}
which corresponds to a truncation of $\pi_0^{\otimes n}(\cdot)$ on the manifold $\mathcal{Z}_{n}$. Therefore,  
\[
\frac{1}{\sigma}\bY_n - \bV_1 \bu \sim N({\bf 0}, I_n).
\]
For  any $\bz \in \mathcal{Z}_{n}$, we set
\begin{equation}
\begin{split}
 \tilde{G}(\bY_n, \bX_n, \bz) & = (\bX_n^T \bX_n)^{-1}\bX_n^T (\bY_n-\sigma (\frac{1}{\sigma} \bY_n-\bV_1\bu)), \\
\end{split}
\end{equation}
and the resulting EFD is given by 
\[
\mu_n^*(\bbeta|\bY_n,\bX_n)=N(\hat{\bbeta}, \sigma^2 (\bX_n^T\bX_n)^{-1}),
\]
where $\hat{\bbeta} =  (\bX_n^T \bX_n)^{-1}\bX_n^T \bY_n$. 


\section{More Numerical Results} \label{moreresults}

\subsection{Nonlinear Regression}  \label{EFI:nonlinear}

Nonlinear least squares regression problems are intrinsically hard due to their complex energy landscapes, which may contains some saddle points, local minima or pathological curvatures. 
To test the performance of EFI on nonlinear regression, we took a benchmark dataset, Gauss2, at 
NIST Statistical Reference Datasets (\url{https://www.itl.nist.gov/div898/strd/nls/nls_main.shtml}),
which consists of 250 observations. 
The nonlinear regression function of the example is given by 
\begin{equation}
    y=\beta_1 \exp\{-\beta_2 x\}+ \beta_3 \exp\left\{-\frac{(x-\beta_4)^2}{\beta_5^2}\right\}+\beta_6 \exp\left\{-\frac{(x-\beta_7)^2}{\beta_8^2}\right\}+\epsilon:=f(x,\btheta)+\epsilon, \quad \epsilon\sim N(0,6.25),
\end{equation}
where $\btheta=(\beta_1,\beta_2,\ldots,\beta_8)$ denotes the vector of unknown parameters. The nonlinear regression function represents two slightly-blended Gaussian density 
curves on a decaying exponential baseline plus normally distributed zero-mean noise with known variance 6.25.  For this example, the ``best-available'' OLS solution has been given as shown in 
Table \ref{tab:Gauss2}, which was obtained using 128-bit precision and confirmed by at least two different algorithms and software packages using analytic derivatives.

The EFI method was applied to this example with the experimental settings given in Section \ref{setting:nonlinear} of this supplement. 
Table \ref{tab:Gauss2} compares the parameter estimates and confidence intervals by the OLS and EFI methods. For OLS, the confidence intervals are constructed by Wald's method with the estimates' standard deviations given in the website.  
For EFI, the parameter estimates are obtained by averaging the fiducial $\bar{\btheta}$-samples collected in the simulation, and the confidence intervals are constructed with 2.5\% and 97.5\% quantiles of the fiducial samples. Therefore, the EFI confidence intervals are not necessarily symmetric about the parameter estimates. The comparison shows that the EFI confidence intervals 
tend to be shorter than the OLS confidence intervals. 
More importantly, since EFI and OLS employ different objective functions, they actually converge to different solutions. This can be seen from the confidence intervals of $\beta_3$ resulting from the two methods,
 which have no overlaps. 

\begin{table}[htbp]
\caption{Parameter estimates and confidence intervals of the EFI and ``best-available'' OLS solutions for the Gauss2 example. }
\label{tab:Gauss2}
\vspace{-0.2in}
\begin{center}
\begin{adjustbox}{width=1.0\textwidth}
\begin{tabular}{crcccrcc} \toprule
   &\multicolumn{3}{c}{``Best-available'' OLS} & &  \multicolumn{3}{c}{EFI}\\  
  \cline{2-4} \cline{6-8} 
  Parameter  & Estimate & CI-width & 95\% CI & & Estimate & CI-width & 95\% CI \\ 
  \midrule
$\beta_1$ & 99.0183 & 2.1070 & (97.9649, 100.0718) && 98.8713 & 1.0243 &  (98.3466, 99.3710) \\
$\beta_2$ & 0.0110  & 0.0005 & (0.0107,0.0113)  && 0.0109  & 0.0004 &  (0.0108, 0.0111) \\
$\beta_3$ & 101.8802 & 2.3213 & (100.7196,103.0409) &&  99.2748 & 1.2274 & (98.6559, 99.8832) \\
$\beta_4$ & 107.0310 & 0.5883 &  (106.7368,107.3251) && 107.0377 & 0.6427 & (106.6962, 107.3389) \\
$\beta_5$ & 23.5786 & 0.8897 & (23.1338,24.0234)  && 23.5636 & 0.8447 &  (23.1306, 23.9753) \\
$\beta_6$ & 72.0456  & 2.4195 & (70.8358,73.2553)  &&  72.5515  & 0.8255 &  (72.1315, 72.9570) \\
$\beta_7$ & 153.2701 & 0.7631 & (152.8886,153.6516) && 153.2575 & 0.7788 & (152.8596, 153.6383) \\
$\beta_8$ & 19.5260  & 1.0355 &  (19.0082,20.0437)  && 19.6559  & 1.0776 & (19.1351, 20.2127) \\
\bottomrule
\end{tabular}
 \end{adjustbox}
\end{center}
\end{table}

To further explore the difference of the EFI and OLS solutions, we examined their fitting 
and residual plots in Figure \ref{fig:Gauss2}.
The right plot indicates that EFI tends to have larger 
residuals than OLS. A simple calculation shows that the OLS has a mean-squared-residuals of 4.99, while the EFI has a mean-squared-residuals of 5.72, which is closer to the ideal value 6.25. 
This comparison implies that OLS, which simply minimizes the sum of squared fitting errors, can lead to an overfitting issue even for this reasonably large dataset. EFI performs better in this regard by striking a balance between fitting errors and the likelihood of random errors, as discussed in Section 3.4 of the main text. This balance potentially results in a solution of higher fidelity.

For this example, we have also tried the Bayesian method, which leads to almost the same solution as OLS, as they essentially employ the same objective function. 
 
\begin{figure}[!htbp]
    \centering
    \includegraphics[width=0.95\textwidth,height=3.4in]{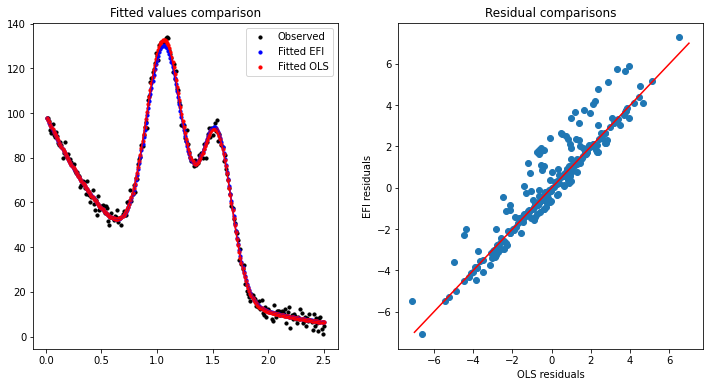}
    \caption{Comparison of the EFI solution with the ``best-available'' OLS solution
    for the Gauss2 example: (left) fitting curves and (right) scatter plot of residuals. }
    \label{fig:Gauss2}
\end{figure}

\subsection{Logistic Regression}  \label{EFI:logistic}

The EFI method can be easily extended to discrete statistical models via approximation or  transformation. For example, for the logistic regression, whose response variable $y_i \in \{0,1\}$ is discrete, making the fitting-error term in (\ref{energyfunction00}) 
 and (\ref{energyfunction11}) less well defined.  To address this issue, we define ReLU function:
$\rho(\delta)=\delta$ if $\delta>0$ and $0$ otherwise, and then replace the fitting-error term 
in (\ref{energyfunction00}) and (\ref{energyfunction11}) by   
   \begin{equation} \label{penalty_logistic1}
 \sum_{i=1}^n \rho\Big((z_i-x_i^T\hat{\btheta}_i)(2y_i-1) \Big),
   \end{equation}
   where $z_1,z_2,\ldots,z_n \stackrel{iid}{\sim} Logistic(0,1)$ with the CDF given by 
   $F(z)=1/(1+e^{-z})$. That is, $z_i$ represents the random error realized in the observation 
   $(x_i,y_i)$. Correspondingly, the fitted value $\tilde{y}_i$ is defined by
   $\tilde{y}_i= 1$ if $F(z_i) \leq F(x_i^T\hat{\btheta}_i)$ and 0 otherwise, where $F(z_i)\sim Uniform(0,1)$ by the probability integral transformation theorem.
   It is easy to see that (\ref{penalty_logistic1}) penalizes the cases $\{i: y_i \ne \tilde{y}_i\}$ and the resulting energy function satisfies 
   Assumption \ref{ass:hwang1a}.  
   In particular, we can have 
   $\Pi_n(\mathcal{Z}_{\check{U}_n})>0$ for this problem, 
   because each $z_i$ can take any value in an interval 
   $(-\infty,a]$ or $[b,\infty)$ (for some $a,b\in \mathbb{R}$) while maintaining
   the zero total-fitting-error given in (\ref{penalty_logistic1}). 
    

\begin{table}[!htbp]
\caption{Comparison of MLE and EFI for inference of logistic regression, 
 where coverage rate (confidence length) is reported for each parameter.}
\label{Table:logistic}
\vspace{-0.2in}
\begin{center}
\begin{tabular}{lccccccc} \toprule
 Method & $\theta_0$ &   $\theta_1$ & $\theta_2$  & $\theta_3$ & $\theta_4$   & Average \\ \midrule
 MLE & 0.94 (0.370)   & 0.96   (0.393) & 0.96 (0.389)  &  0.93 (0.390) & 0.96 (0.394) & 0.95
  \\ 
 EFI &  0.95 (0.378)   & 0.95 (0.390) & 0.95 (0.388) & 0.94 (0.388) & 0.97  (0.393)  & 0.952 \\ \bottomrule 
\end{tabular}
\end{center}
\vspace{-0.2in}
\end{table}


  We simulated 100 datasets from a logistic regression consisting of 4 
 covariates independently drawn from $N(0,1)$. The true regression coefficients were   
 $\btheta=(\theta_0,\theta_1,\ldots,\theta_4)=(1,1,1,-1,-1)$, including the 
 interpret $\theta_0$. The sample size of each dataset was $n=1000$. 
 The numerical results are summarized in Table \ref{Table:logistic}. 
 The comparison with the MLE results indicates the validity of EFI for statistical inference of logistic regression.

 For comparison, we applied GFI to this example by running the R package \textit{gfilogisreg} \citep{gfilogisregR}, but which did not produce results for this example due to a computational instability issue suffered by the package.
 For IM, we refer to \cite{Martin2015GIM}, where the likelihood function is used for inference of $\btheta$ and the confidence intervals are constructed by inverting the Monte Carlo hypothesis tests conducted on a lattice of grid points in $\Theta$. For example, if we take $50$ grid points in each dimension of $\Theta$ and simulate $1000$ samples at each grid point, then we need to simulate a total of $3.125\times 10^{11}$ samples. This is time consuming even for such a 5-dimensional problem.  

For multiclass logistic regression, similar to (\ref{penalty_logistic1}), 
the fitting-error term in (\ref{energyfunction00}) and (\ref{energyfunction11}) can be defined as 
\begin{equation} \label{multilogisticloss}
\sum_{i=1}^n \Big[   \sum_{j \ne m_i} \rho(x_i^T \hat{\btheta}_{i,j} - x_i^T \hat{\btheta}_{i,m_i}) +\rho(z_i-x_i^T \hat{\btheta}_{i,m_i}) \Big],
\end{equation}
where $m_i$ denotes the true class of  the training sample $x_i$, 
and $\hat{\btheta}_{i,j}$ denotes the parameter corresponding to class $j$ for the training sample $x_i$. 

\subsection{Semi-Supervised Learning}

Table \ref{Table:EFI_SSI} presents more examples for the semi-supervised learning. 

\begin{table}[htbp]
\caption{EFI results for different datasets, where the labels of 50\% training samples 
were removed in each run of the 5-fold cross validation. }
\label{Table:EFI_SSI}
\begin{center}
\begin{tabular}{cccccc} \toprule
 Dataset & $n$ & $p$  & Full & Labeled only & Semi \\ \midrule
 Divorce & 170 & 54 &   98.824$\pm$1.052 & 97.647$\pm$1.289  & 98.824$\pm$1.052 \\ 
 Diabetes & 520 & 16 &  89.615$\pm$1.032   & 88.462$\pm$1.088 & 88.846$\pm$1.668 \\
Breast Cancer & 699 & 9 &  96.52$\pm$0.661 & 95.942$\pm$0.485 & 96.232$\pm$0.518 \\
Raisin & 900 & 6 &  85.333$\pm$0.795 & 85.333$\pm$0.659 & 85.556$\pm$0.994 \\ \bottomrule
\end{tabular}
\end{center}
\end{table}

\subsection{EFI for Complex Hypothesis Tests}

\begin{figure}[H]
    \centering
    \includegraphics[width=0.85\textwidth,height=4.4in]{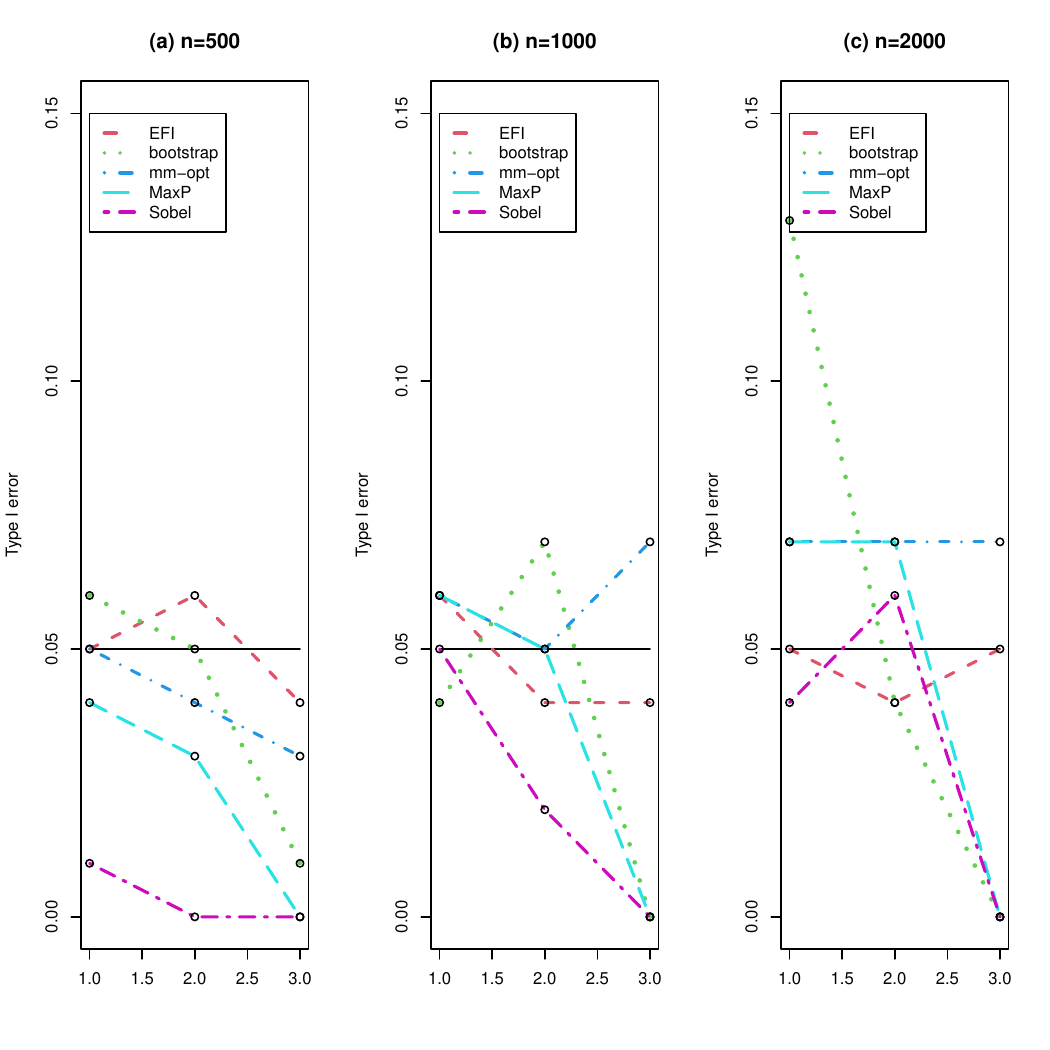}
    \caption{Graphical representation of Table \ref{medtab_typeI}, 
    where `1', `2' and `3' in $x$-axis represent the experimental settings $(\beta,\gamma)=(0.2,0)$,  $(\beta,\gamma)=(0,0.2)$ and 
    $(\beta,\gamma)=(0,0)$, respectively. }  
    \label{fig:Table6}
\end{figure}

\begin{figure}[H]
    \centering
    \includegraphics[width=0.85\textwidth,height=4.4in]{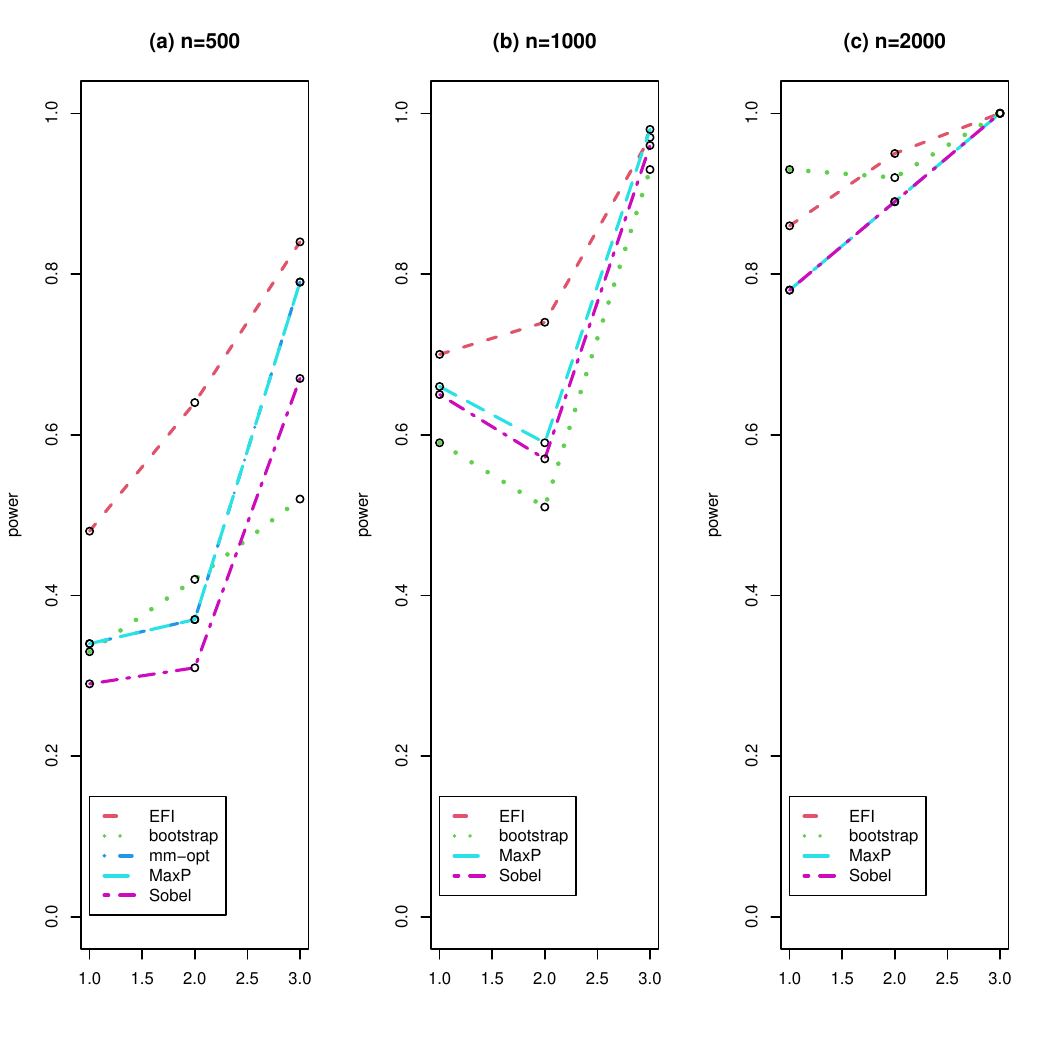}
    \caption{Graphical representation of Table \ref{medtab_power}, 
    where `1', `2' and `3' in $x$-axis represent the experimental settings $(\beta,\gamma)=(0.1,0.4)$,  $(\beta,\gamma)=(-0.1,0.4)$ and 
    $(\beta,\gamma)=(0.2,0.2)$, respectively. }  
    \label{fig:Table7}
\end{figure}

\section{Experimental Setting} \label{moresettings}

To enforce a sparse DNN to be learned for the inverse function $g(\cdot)$, we impose 
the following mixture Gaussian prior on each element of $\bw_n$: 
\begin{equation}\label{sparseprior}
    \pi(w)\sim \rho  N(0,\sigma_{1}^2)+(1-\rho)N(0,\sigma_{0}^2),
\end{equation}
where $w$ denotes a generic element of $\bw_n$ and, unless stated otherwise, 
we set $\rho=1e-2$, $\sigma_0=1e-5$ and $\sigma_1=0.02$.
The elements of $\bw_n$ are {\it a priori} independent. 

In all experiments of this paper, we use {\it ReLU} as the activation function, and 
set the learning rate sequence $\{\epsilon_k\}$ and the step size sequence $\{\gamma_k\}$ in the forms given in Theorem \ref{thm1}. Specifically, we set $\alpha=13/14$ and $\beta=4/7$ unless stated otherwise, and set different values of $C_{\epsilon}$, $c_{\epsilon}$, $C_{\gamma}$ and $c_{\gamma}$ for different experiments as given below. 

\subsection{Linear regression}
For both cases with known and unknown $\sigma^2$, we use a DNN with structure  
$12-300-100-d_{\btheta}$ for inverse function approximation, where $d_{\btheta}$ denotes the dimension of $\btheta$.   

 \paragraph{Known $\sigma^2$}
For EFI-a, we set $\eta=100$ and $\lambda=10$; and for EFI, we set $\eta=10$ and $\lambda=10$. EFI-a and EFI share the same learning rate and step size sequences with $(C_{\epsilon},c_{\epsilon},C_{\gamma},c_{\gamma})=(50000,10000,5000,100000)$. 
We set the burn-in period $\mK=1000$ and the iteration number $M=100,000$.
The Markov chain is thinned by a factor of $B=10$ in sample collection, i.e., $M/B=10,000$ 
$\bar{\btheta}$-samples were collected for calculation of the coverage rates and CI-widths. For EFI, we employ the same parameter settings 
for different activation functions.

 \paragraph{Unknown $\sigma^2$}
 For EFI, we used SGHMC in latent variable sampling, i.e., we simulate $\bZ^{(k+1)}$ in the following formula: 
  \begin{equation} \label{SGHMCeq}
\begin{split}
\bV_n^{(k+1)} &= (1-\zeta) \bV_n^{(k)} +\epsilon_{k+1} \nabla_{\bZ_n} \log \pi(\bZ_n^{(k)}|\bX_n,\bY_n,\bw^{(k)}) +\sqrt{2 \zeta \tau \epsilon_{k+1}} \be^{(k+1)}, \\
\bZ_n^{(k+1)}&=\bZ_n^{(k)}+\bV_n^{(k+1)}, 
\end{split}
\end{equation}
where $\tau=1$, $0<\zeta\leq 1$ is the momentum parameter, $\be^{(k+1)} \sim N(0,I_d)$, 
and $\epsilon_{k+1}$ is the learning rate.
It is worth noting that the algorithm is reduced to SGLD if one sets $\zeta=1$.

In simulations, we set the decaying parameters 
$\alpha=\beta=4/7$, and the Markov chain is thinned by a factor of $B$ as below for $M/B=10,000$ samples were used for calculation of the coverage rates and CI-widths.

 \begin{itemize}
\item $(\eta=2,\lambda=30):$ We set $\zeta=0.025$ and $(C_{\epsilon},c_{\epsilon},C_{\gamma},c_{\gamma})=(6500,100000,1700,100000)$, $(\mK,M)=(10000,50000)$ thinned by $B=5$; 

\item $(\eta=2,\lambda=40):$ We set $\zeta=0.025$ and $(C_{\epsilon},c_{\epsilon},C_{\gamma},c_{\gamma})=(5600,100000,1400,100000)$, $(\mK,M)=(10000,90000)$ thinned by $B=9$; 

\item $(\eta=2,\lambda=50):$   We set $\zeta=0.05$ and $(C_{\epsilon},c_{\epsilon},C_{\gamma},c_{\gamma})=(4000,100000,1000,100000)$,, $(\mK,M)=(10000,200000)$ thinned by $B=20$;

\item $(\eta=4,\lambda=50):$ We set $\zeta=0.005$ and $(C_{\epsilon},c_{\epsilon},C_{\gamma},c_{\gamma})=(1950,80000,490,80000)$,, $(\mK,M)=(10000,120000)$ thinned by $B=12$; 

\end{itemize}

   \subsection{Behrens-Fisher problem}
   We use a DNN with structure 2-20-10-2 and set $\eta=5$ and $\lambda=20$. The burn-in period $\mK=10000$,  the iteration number $M=40000$ and 60000 for $n=50$ and $500$, respectively. The Markov chain is thinned by a factor of $B=4$ and 6 for $n=50$ and 500, 
   respectively, in sample collection. This makes that $M/B=10,000$ samples are used in  calculation of the coverage rates and CI-widths for each case. 
 \begin{itemize}
\item $\sigma_1^2=0.25$, $\sigma_2^2=1:$ (i) for $n=50$, we set $\zeta=0.01$, and $(C_{\epsilon},c_{\epsilon},C_{\gamma},c_{\gamma})=(2500,100000,2500,100000)$; 
(ii) for $n=500$,  we set $\zeta=0.005$, and $(C_{\epsilon},c_{\epsilon},C_{\gamma},c_{\gamma})=(3000,100000,3000,100000)$; 

\item $\sigma_1^2=1$, $\sigma_2^2=1:$  (i) for $n=50$, we set $\zeta=0.05$, and $(C_{\epsilon},c_{\epsilon},C_{\gamma},c_{\gamma})=(2800,100000,2800,100000)$; 
(ii) for $n=500$,  we set $\zeta=0.028$, and $(C_{\epsilon},c_{\epsilon},C_{\gamma},c_{\gamma})=(3100,100000,3100,100000)$.

\end{itemize}

\subsection{Bivariate normal}
 
  We used a DNN with structure 4-80-20-5 for inverse function approximation, and we set $\eta=2$ and $\lambda=50$. 
 We used SGHMC in latent variable sampling as in (\ref{SGHMCeq})
. In simulations, we set the momentum parameter $\zeta=0.1$,  the decaying parameters 
$\alpha=\beta=4/7$, $(C_{\epsilon},c_{\epsilon},C_{\gamma},c_{\gamma})=(4500,100000,1100,100000)$. 
the burn-in period $\mK=10000$,  the iteration number $M=50000$, and the Markov chain is thinned by a factor of $B=5$ in sample collection, i.e., $M/B=10,000$ samples were used for calculation of the coverage rates and CI-widths.

\subsection{Fidelity in Parameter Estimation}
 
We used a DNN with structure 12-300-100-11 for inverse function approximation, and set $(\eta,\lambda)=(2,50)$.
The tempering SGLD algorithm is used in the latent variable sampling step, where we set 
the temperature sequence $\tau_t=max(100*(0.9999)^{t},1)$. 
For the learning rate and step size sequences, we set $(C_{\epsilon},c_{\epsilon},C_{\gamma},c_{\gamma})=(50000000,10000000,50,10000)$.
For sample collections, we set $\mK=50,000$, $M=150,000$, and $B=15$.

 \subsection{Nonlinear Regression in the Supplement} \label{setting:nonlinear}
 
  We used a DNN with structure 3-150-50-8 for inverse function approximation, and we set $(\eta,\lambda)=(500,0.2)$ in order to avoid a local trap of fitting $\bz_n$ to $\by_n$. 
  For the learning rate and step size sequences,  we set $(C_{\epsilon},c_{\epsilon},C_{\gamma},c_{\gamma})=(1,10000000,1,100)$ for iterations $t<50,000$,
  and set $(C_{\epsilon},c_{\epsilon},C_{\gamma},c_{\gamma})=(1000,100000,10,10000)$ 
  for $t\geq 50,000$. For sample collections, we set 
   $\mK=60,000$, $M=150,000$ and $B=15$. 


\subsection{Logistic regression in the Supplement}

 For EFI, we set $\eta=2$ and $\lambda=1000$. 
 We used SGHMC (\ref{SGHMCeq}) in latent variable sampling.
In simulations, we set the momentum parameter $\zeta=0.01$,  the decaying parameters 
$\alpha=\beta=2/7$, 
  $(C_{\epsilon},c_{\epsilon},C_{\gamma},c_{\gamma})=(50000,100000,30000,100000)$. 
the burn-in period $\mK=10000$,  the iteration number $M=50000$, and the Markov chain is thinned by a factor of $B=5$ in sample collection, i.e., $M/B=10,000$ samples were used for calculation of the coverage rates and CI-widths.

   

 

\subsection{EFI for Semi-Supervised Learning}

We used a DNN with structure $(p+2)-90-30-p$ for inverse function approximation, where $p$ corresponds to the dimension of $\bx$ for all cases. 
For EFI on both full label cases, and labeled-data only cases (use $50\%$ of training data), we set $\alpha=\beta=\frac{2}{7},\eta=5,\lambda=200,\mK=10000,M=40000,B=4$ with $\zeta=0.1$, $(C_{\epsilon},c_{\epsilon},C_{\gamma},c_{\gamma})=(100000,100000,2000,100000)$. 
For semi-supervised EFI, the same parameter settings have been used with the exceptions given as follows:


\begin{itemize}
\item {\bf Beast-Cancer:}
 $(\eta,\lambda)=(5,200)$;

\item {\bf Diabetes:}
 $(\eta,\lambda)=(2,500)$ and $(C_{\epsilon},c_{\epsilon},C_{\gamma},c_{\gamma})=(200000,100000,1000,100000)$;
\item {\bf Divorce:}
 $(\eta,\lambda)=(10/3,300)$;
\item {\bf Raisin:}
 $(\eta,\lambda)=(2,500)$.
 \end{itemize}

\subsection{EFI for Complex Hypothesis Tests}

We used a DNN with structure 7-180-30-9 for inverse function approximation, and set $\alpha=\beta=\frac{4}{7},\eta=10,\lambda=10,\mK=10000,M=50000,B=5$. In addition, we varied the values
of other parameters according to the problem and sample size.

\paragraph{Type-I error} For different sample sizes, we set the parameters as follows:

\begin{itemize}
\item $n=500$. For case 1, we set $\zeta=0.1$ and $(C_{\epsilon},c_{\epsilon},C_{\gamma},c_{\gamma})=(290000,100000,4000,100000)$; for case 2, we set $\zeta=0.1$ and  $(C_{\epsilon},c_{\epsilon},C_{\gamma},c_{\gamma})=(100000,100000,2000,100000)$; for
case 3, we set $\zeta=0.1$ and $(C_{\epsilon},c_{\epsilon},C_{\gamma},c_{\gamma})=(100000,100000,2000,100000)$.

\item $n=1000$. For case 1, we set $\zeta=0.1$ and $(C_{\epsilon},c_{\epsilon},C_{\gamma},c_{\gamma})=
(100000,100000,4000,100000)$; for case 2, we set $\zeta=1$ and $(C_{\epsilon},c_{\epsilon},C_{\gamma},c_{\gamma})=(200000,100000,4000,100000)$; 
for case 3, we set $\zeta=0.1$ and  $(C_{\epsilon},c_{\epsilon},C_{\gamma},c_{\gamma}) =(2000,100000,1000,100000)$. 

\item $n=2000$. For case 1 and case 2, we set $\zeta=1$ and  $(C_{\epsilon},c_{\epsilon},C_{\gamma},c_{\gamma})=(200000,100000,4000,100000)$; and for case 3, 
we set $\zeta=0.1$ and $(C_{\epsilon},c_{\epsilon},C_{\gamma},c_{\gamma})=(2000,100000,1000,100000)$.
\end{itemize}


\paragraph{Power} For all cases, we set the parameters 
$\zeta=0.1$ and  $(C_{\epsilon},c_{\epsilon},C_{\gamma},c_{\gamma})=(2000,100000,1000$, $100000)$.

 \newpage 

\bibliographystyle{asa}
\bibliography{reference}

\end{document}